\documentclass{article}

\usepackage{arxiv}
\usepackage[numbers,sort&compress]{natbib}
\usepackage[utf8]{inputenc}
\usepackage[T1]{fontenc}
\usepackage{hyperref}
\usepackage{url}
\usepackage{booktabs}
\usepackage{amsfonts}
\usepackage{nicefrac}
\usepackage{microtype}
\usepackage{graphicx}
\usepackage{subcaption}
\usepackage{xcolor}
\usepackage{amsmath}
\usepackage{amssymb}
\usepackage{mathtools}
\usepackage{amsthm}
\usepackage{makecell}
\usepackage{multirow}
\usepackage{tikz}
\usepackage{bm}
\usepackage{pgfplots}
\usepackage{enumitem}

\definecolor{latentred}{RGB}{210,45,45}
\definecolor{latentorange}{RGB}{230,135,35}
\definecolor{latentblue}{RGB}{55,95,180}
\definecolor{softgray}{RGB}{245,246,248}

\pgfplotsset{compat=1.18}
\usetikzlibrary{arrows.meta,positioning,calc,backgrounds,decorations.pathreplacing,fit}

\newtheorem{theorem}{Theorem}[section]
\newtheorem{proposition}[theorem]{Proposition}
\newtheorem{lemma}[theorem]{Lemma}
\newtheorem{corollary}[theorem]{Corollary}

\newcommand{\ve}[1]{\mathbf{#1}}

\newcommand{\vs}[1]{\boldsymbol{#1}}
\newcommand{\ad}[1]{#1^\dagger}

\newcommand{\rvector}[1]{\boldsymbol{\mathsf{#1}}}

\newcommand{\identity}{\mathbb{I}}

\newcommand{\filter}{\mathbf{w}_0}

\title{Blind Recovery of Latent Domains via Unsupervised Symmetry Discovery}

\author{
  Onur Efe \\
  Physics Department, Bogazici University \\
  \texttt{onur.efe@bogazici.edu.tr}
  \And
  Arkadas Ozakin \\
  Physics Department, Bogazici University \\
  \texttt{arkadas@bogazici.edu.tr}
}

\begin{document}
\maketitle

\begin{abstract}
Primary motivation in blind inverse problems is to recover signals of interest from corrupted observations without knowing the obfuscating mechanism. Blind deconvolution is a prominent approach when the corruption is convolutional, but it is not applicable when general linear transformations obfuscate the domain structure. In this work, we propose an unsupervised framework for recovering latent domains and signals by discovering symmetries of the data distribution. Our framework models observations as linear measurements of signals sampled from a latent random field, and optimizes a shallow group-convolutional network by imposing stationarity and locality regularization at the model output. The model learns a latent symmetry action and an appropriate filter, thereby mapping unstructured observations to a symmetry-based representation that reveals latent signals. Experiments on stochastic processes, Ising models, shuffled and bit-scrambled images, and neural recordings show that the method recovers latent domains and signals from unstructured observations, suggesting symmetry discovery as a new direction for unsupervised structure learning and blind inverse problems.
\end{abstract}

\section{Introduction}\label{section:intro}

Real-world observations often contain obfuscated signals due to imperfect sensor responses or physical corruptions. In many cases, the corruption mechanism is only partially known, making it difficult to recover the underlying signal even when the relevant information is present in its obfuscated form. When the corruption is linear and admits a convolutional form, the resulting inverse problem can be addressed using blind deconvolution algorithms~\cite{corbetta2022blind}. 

However, unknown transformations---such as permutations in ad-hoc sensor networks~\cite{balzano2007blind}, bit-scrambling in image encryption~\cite{ye2010image}, and scatterers in imaging~\cite{bertolotti2012non}---can destroy the apparent geometry of the observations preventing a convolutional formulation, thereby giving rise to challenging inverse problems. In such cases, recovering the latent signal also requires recovering the domain on which the signal is defined.

We propose that symmetry discovery provides a principled mechanism for recovering latent domains by exploiting the isomorphism between regular group actions and the domains on which they act. In particular, translation symmetries over spatiotemporal domains provide a natural instance of this principle. We explore this connection through an unsupervised approach that discovers generalized translation symmetry representations and uses them as a coordinate system for accessing the latent domain structure.

Our formulation relies on modeling the latent domain as a locally correlated stationary random field, and we assume that data samples arise by observing latent signals through an unknown linear measurement operator. Specifically, we optimize a group-convolutional layer with a learnable symmetry action and filter, and impose stationarity and locality regularizations on the model output. This formulation enables recovery of latent signals by discovering both the translation action and an appropriate filter, up to unavoidable ambiguities in the latent coordinate system such as rotations, reflections, and axis ordering.

Across all experiments, the model operates directly on unordered vector observations. First, we demonstrate domain recovery and symmetry discovery under dense linear transformations using stochastic processes and Ising models. We then revisit the permutation recovery problem~\citep{roux2007learning} by training the model on pixel-shuffled MNIST, and further evaluate robustness in a bit-scrambled MNIST setting inspired by image-encryption transformations~\cite{ye2010image}. Although bit-level scrambling is nonlinear at the pixel-intensity level, the task can be viewed as a linear inverse problem after representing each image as a high-dimensional binary vector with shuffled components.

Finally, we apply the model to real-world neural recordings, where it discovers a latent rotation operator on 110-dimensional spike-rate vectors and learns an equivariant representation aligned with visual stimuli. In this setting, the model acts as an unsupervised neural decoder, recovering stimulus orientation up to a constant phase. 

\begin{figure}[!t]
\centering
\includegraphics[trim={0cm 0cm 0cm 0cm}, clip, width=0.5\linewidth]{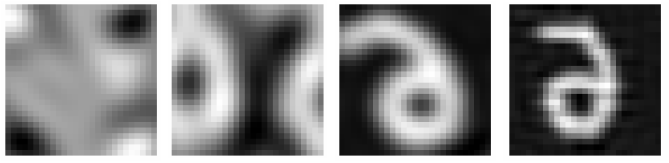}
\caption{\textbf{Bit-scrambled MNIST recovery.} The model is trained on randomly shuffled bits obtained from image pixels. The underlying digit, up to reflection, is recovered by discovering the translation symmetry action and an appropriate filter.}
\label{figure:2d-permutation-group-convolution-tensor-snapshots}
\end{figure}

\noindent\textbf{Our main contributions are:}

\begin{itemize}[leftmargin=*]

\item \textbf{Formulating blind domain recovery:}
We propose a generalized linear measurement model that interprets unstructured vector-valued observations as linear measurements of continuous latent signals. Under this model, we show that unstructured observations are linked to the latent signals through symmetries acting regularly on the latent domain (Section~\ref{sec:problem-setting}).

\item \textbf{Domain recovery through symmetry discovery:}
We propose a framework for recovering latent domains under dense linear transformations by discovering symmetries of the data distribution, and support the methodology through identifiability arguments (Section~\ref{sec:identifiability}).

\item \textbf{Flexible symmetry discovery in augmented spaces:}
We define symmetry actions in an embedding space, enabling recovery without assuming explicit sample coordinates or native symmetry representations in the original observation space~\cite{desai2022symmetry, yang2023latent}. The method applies to high-dimensional unstructured observations, including spike-rate vectors and shuffled image datasets (Section~\ref{sec:experiments}).

\end{itemize}

Our implementation is publicly available at \href{https://github.com/onurefe/blind-domain-recovery.git}{GitHub}.
\section{Related work}\label{section:related_work}

\textbf{Blind inverse problems.}
Blind inverse problems seek to invert unknown forward operators, typically requiring restrictive priors such as sparsity~\citep{ongie2020deeplearningtechniquesinverse}. Classical approaches like blind deconvolution~\citep{corbetta2022blind} successfully address these challenges, but rely on the strict assumption that the corruption admits a convolutional form. Similarly, specialized solutions in imaging~\citep{levin2009understanding, katz2014non, bertolotti2012non}, seismology~\citep{kaaresen1998multichannel}, and neuroscience~\citep{makeig1996independent} achieve impressive robustness and computational efficiency through application-specific physical models. In contrast, our approach targets a more agnostic setting, where explicit physical priors, convolutional forms, and optimized domain-specific pipelines are unavailable. We instead establish symmetry and locality as general principles for recovering latent structure from unstructured observations.

\textbf{Symmetry discovery.}
Existing symmetry discovery methods learn equivariances in supervised or structured prediction settings~\citep{romero2021learning, santosescriche2025, forestano2023deep, moskalev2022liegg, bhat2025atlasd, zhou2020meta}, uncover dynamical mechanisms~\citep{greydanus2019hamiltonian, alet2021noether, Churchill_2023, sohl2010unsupervised, Champion_2019, dehmamy2021automatic, koyama2023neural, miyato2022unsupervised, park2022learning, yang2023latent, mitchel2024neural}, or identify invariant operators~\citep{benton2020learning, desai2022symmetry, tombs2022method, yang2023generative, efe2025, shaw2024symmetrydiscoveryaffinetransformations}. These approaches typically assume that the relevant domain topology or sample coordinates are already specified, or that the symmetry admits a representation in the original observation space. In contrast, we define symmetries in an embedding space and use symmetry discovery as a mechanism for recovering latent domains and signals under unknown measurement operators.

\textbf{Graph and manifold learning.}
Graph and manifold learning methods infer latent structures by assuming fixed nodes~\citep{franceschi2019learning, zhu2021survey} or unstructured point clouds~\citep{tenenbaum2000global, mcinnes2020umap}. These well-established methods correspond to different generative assumptions over the data. Rather than competing with these highly optimized tools in their native settings, we propose a complementary latent-signal perspective based on symmetry and locality priors. This positions our framework at the intersection of blind inverse problems, symmetry learning, and structure recovery.
\section{Problem setting}\label{sec:problem-setting}
\definecolor{latentred}{RGB}{210,45,45}
\definecolor{latentorange}{RGB}{230,135,35}
\definecolor{latentblue}{RGB}{55,95,180}
\definecolor{softgray}{RGB}{245,246,248}

\newcommand{\pointsource}[5]{%
    \begin{scope}[shift={(#1,#2)}]
        \fill[#3, opacity=0.08] (0,0) ellipse (0.52 and 0.24);
        \fill[#3, opacity=0.16] (0,0) ellipse (0.34 and 0.16);
        \fill[#3, opacity=0.24] (0,0) ellipse (0.20 and 0.10);
        \draw[#3!85!black, line width=0.9pt] (0,0) -- (0,#4);
        \fill[#3!85!black] (0,#4) circle (1.2pt);
        \node[font=\scriptsize, text=#3!75!black, anchor=south] at (0,#4+0.06) {#5};
    \end{scope}
}

\begin{figure}[t]
\centering
\resizebox{\linewidth}{!}{%
\begin{tikzpicture}[
    >=Latex,
    font=\small,
    every node/.style={align=center},
    panel/.style={rounded corners=3pt, draw=black!45, fill=softgray, line width=0.55pt},
    maparrow/.style={-{Latex[length=3mm]}, line width=1.0pt, draw=black!70},
    actionarrow/.style={-{Latex[length=2.2mm]}, line width=0.9pt, draw=latentred!80!black},
    vecarrow/.style={-{Latex[length=2.5mm]}, line width=1.0pt}
]

\begin{scope}[shift={(0,0)}]
    \filldraw[panel, fill=white] (0,0) rectangle (6.4,4.0);

    \node[font=\bfseries\small, anchor=south west] at (0.12,4.08)
        {Latent domain $\Omega \simeq G$};
    \node[font=\scriptsize, text=black!55, anchor=north west] at (0.12,3.92)
        {signals are linear superpositions of point sources};

    \foreach \x in {0.5,1.0,...,5.9}{
        \draw[black!20, densely dotted, line width=0.35pt] (\x,0.35) -- (\x,3.55);
    }
    \foreach \y in {0.5,1.0,...,3.2}{
        \draw[black!20, densely dotted, line width=0.35pt] (0.35,\y) -- (6.05,\y);
    }

    \draw[latentblue!70!black, line width=1.0pt, opacity=0.8]
        plot[smooth] coordinates {
            (0.70,1.05) (1.25,1.30) (1.80,2.58)
            (2.35,1.70) (2.95,1.90) (3.55,2.18)
            (4.15,1.65) (4.75,1.92) (5.55,1.18)
        };
    \node[font=\scriptsize, text=latentblue!65!black, anchor=south east]
        at (5.95,2.95) {signal $f$};

    \pointsource{1.8}{1.00}{latentred}{1.45}{$\delta_B$}
    \pointsource{3.2}{1.02}{latentorange}{0.95}{$\tau_{s_1}\delta_B$}
    \pointsource{4.8}{1.00}{latentorange}{0.78}{$\tau_{s_2}\delta_B$}

    \node[font=\scriptsize, text=black!70, anchor=south west] at (0.25,0.08)
        {$f(t)=\displaystyle\int_G f(s)\,(\tau_s\delta_B)(t)\,d\mu(s),\qquad t\in G$};
\end{scope}

\draw[maparrow] (6.65,2.0) -- node[above, font=\scriptsize] {$\mathcal M_{\mathrm{aug}}$} (7.80,2.0);

\begin{scope}[shift={(8.05,0)}]
    \filldraw[panel] (0,0) rectangle (7.4,4.0);

    \node[font=\bfseries\small, anchor=south west] at (0.12,4.08)
        {Augmented sample space $\mathcal X_{\mathrm{aug}}$};
    \node[font=\scriptsize, text=black!55, anchor=north west] at (0.12,3.92)
        {samples are linear superpositions of point-source responses};

    \node[font=\scriptsize, text=black!62, anchor=west] at (0.28,3.35)
        {$\ve h_0=\mathcal M_{\mathrm{aug}}[\delta_B]$ \qquad and \qquad
         $\mathcal M_{\mathrm{aug}}[\tau_s\delta_B]=\rho(s)\ve h_0$};

    \coordinate (O)  at (0.95,1.10);
    \coordinate (A)  at ($(O)+(1.40,0.52)$);
    \coordinate (B)  at ($(A)+(1.80,0.82)$);
    \coordinate (C)  at ($(B)+(1.85,-0.28)$);

    \draw[vecarrow, draw=latentred!85!black] (O) -- (A);
    \node[font=\scriptsize, text=latentred!75!black, anchor=south]
        at ($(O)!0.52!(A)+(0.00,0.12)$)
        {$f(0)\ve h_0$};

    \draw[vecarrow, draw=latentorange!88!black] (A) -- (B);
        \node[font=\scriptsize, text=latentorange!78!black, anchor=south]
        at ($(A)!0.52!(B)+(0.06,0.23)$)
        {$f(s_1)\rho(s_1)\ve h_0$};

    \draw[vecarrow, draw=latentorange!68!black] (B) -- (C);
    \node[font=\scriptsize, text=latentorange!68!black, anchor=north]
        at ($(B)!0.48!(C)+(0.30,-0.4)$)
        {$f(s_2)\rho(s_2)\ve h_0$};

    \draw[-{Latex[length=3mm]}, draw=latentblue!85!black, line width=1.35pt] (O) -- (C);
    \fill[latentblue!85!black] (C) circle (2.0pt);

    \node[font=\scriptsize, text=latentblue!75!black, anchor=south east]
        at (6.85,2.75) {sample $\ve z$};

    \node[font=\scriptsize, text=black!70, anchor=south west] at (0.25,0.10)
        {$\ve z=\displaystyle\int_G f(s)\,\rho(s)\ve h_0\,d\mu(s)$};
\end{scope}

\end{tikzpicture}%
}
\caption{\textbf{Symmetry-based domain recovery.}
Latent signals are superpositions of translated band-limited point sources. Under the augmented measurement, these become orbit elements \(\rho(s)\ve h_0\), whose superpositions form samples \(\ve z\). Learning this symmetry orbit reveals the latent domain structure.}
\label{fig:two-box-domain-recovery}
\end{figure}

\paragraph{Using a translation symmetry group as a domain surrogate.}
Our formulation accesses the latent domain through translation symmetry. We assume that the domain \(\Omega\) is a homogeneous space equipped with the \emph{regular action} of a locally compact Abelian Lie group \(G\). Here, \(G\) generalizes translations, while regularity gives a one-to-one correspondence between domain points and group elements. Therefore, after fixing a point \(p_0\in\Omega\), every point \(p\in\Omega\) can be written uniquely as \(p=t\cdot p_0\). Thus, instead of defining latent signals over domain coordinates, we define them over the group as square-integrable functions \(f\in L^2(G,\mathbb R)\).

\paragraph{Measurement operator.}\label{subsec:measurement-operator}
We assume that the observed dataset \(\mathcal D=\{\ve{x}_i\}_{i=1}^N\), with \(\ve{x}_i\in\mathcal X=\mathbb R^d\), is generated by applying a linear measurement operator \(\mathcal M\) to latent signals \(f_i\in\mathcal F_B\). Here, \(\mathcal F_B\subset L^2(G,\mathbb R)\) denotes a band-limited subspace, allowing continuous signals to be represented as discrete vectors through a bijection.  

For symmetries that are not well represented by an orthogonal action on \(\mathcal X\), we introduce a linear embedding \(\mathbf E:\mathcal X\to\mathcal X_{\mathrm{aug}}\), where \(\mathcal X_{\mathrm{aug}}=\mathbb R^{d_{\mathrm{aug}}}\). We define the \textbf{augmented measurement operator} as \(\mathcal M_{\mathrm{aug}}:=\mathbf E\mathcal M\), and the \textbf{augmented sample} as \(\ve z=\mathbf E\ve x\).

\paragraph{Induced representation.}\label{subsec:induced-representation}
Our framework is built upon a function \(\delta_B \in \mathcal F_B\), which we call a point source. The point source \(\delta_B\) is the band-limited analogue of a Dirac delta, namely the reproducing kernel of \(\mathcal F_B\). We assume that translating the point source under the group action \(\tau\) induces a special orthogonal representation \(\rho:G\to\mathrm{SO}(d_{\mathrm{aug}})\) on \(\mathcal X_{\mathrm{aug}}\), satisfying
\begin{equation}\label{eq:intertwine}
\mathcal{M}_{\mathrm{aug}}[\tau_t \delta_B] = \rho(t) \ve{h}_0\,,
\end{equation}
where we refer \(\ve h_0:=\mathcal M_{\mathrm{aug}}[\delta_B]\) as \textbf{point-source response}. To make \(\rho\) well defined on the orbit, we require \(\mathcal M_{\mathrm{aug}}\) to be injective on \(\mathcal O_{\delta_B}=\{\tau_t\delta_B\mid t\in G\}\).

\paragraph{Data-generating process.}
Any signal in the latent domain can be written as a superposition of translated point sources. Thus, writing \((\tau_s\delta_B)(t)\) for the band-limited point source centered at \(s\), for any \(f\in\mathcal F_B\),
\begin{equation}\label{eq:kernel-identity}
    f(t)=\int_G f(s)(\tau_s \delta_B)(t)\,d\mu(s),
\end{equation}
where \(\mu\) is the Haar measure on \(G\). Applying \(\mathcal M_{\mathrm{aug}}\) and using linearity gives
\begin{equation}\label{eq:forward-model}
    \ve{z}
    =
    \int_G f(s)\rho(s)\ve{h}_0\,d\mu(s).
\end{equation}
Thus, each augmented sample is a weighted superposition of orbit elements \(\rho(s)\ve h_0\), which span the \textbf{observable sample space}
\[
\mathcal X_{\mathrm{obs}}
:=
\operatorname{span}\{\rho(s)\ve h_0:s\in G\}
\subseteq \mathcal X_{\mathrm{aug}}.
\]

\paragraph{Homogeneous case.}\label{problem-setting:homogeneous-case}
In some experimental settings, we encounter a case where \(f(t)\) is not a general superposition of orbit elements in \(\mathcal O_{\delta_B}\), but a single orbit element. We refer to this as the homogeneous case, which corresponds to \(f(t)=(\tau_{t_0}\delta_B)(t)\), where \(t_0 \in G\) identifies the specific orbit element. Then \(\ve z=\mathcal M_{\mathrm{aug}}[\tau_{t_0}\delta_B]=\rho(t_0)\ve h_0\), so homogeneous datasets lie on the observed orbit \(\mathcal O_{\ve h_0}=\{\rho(t_0)\ve h_0:t_0\in G\}\), whereas the general model also permits superpositions of orbit elements. 

\paragraph{Labeling latent and learned symmetry actions.}
We parameterize both latent and learned symmetry representations by \(P\) commuting skew-symmetric generators, \(\mathcal L=(\mathbf L_1,\dots,\mathbf L_P)\) and \(\hat{\mathcal L}=(\hat{\mathbf L}_1,\dots,\hat{\mathbf L}_P)\). For coordinates \(\ve t=(t_1,\dots,t_P)\) on the Abelian Lie group, define
\(
\mathbf R(\ve t;\mathcal L)
:=
\exp\!\left(\sum_{j=1}^P t_j\mathbf L_j\right).
\)
Then \(\rho(\ve t)=\mathbf R(\ve t;\mathcal L)\) and \(\hat\rho(\ve t)=\mathbf R(\ve t;\hat{\mathcal L})\).
\section{Methodology}\label{sec:methodology}

\begin{figure*}[t]
\centering


\tikzset{
  flow/.style={
    ->,
    thick,
    draw=black!75
  },
  updatearrow/.style={
    ->,
    thick,
    dashed,
    draw=black!60
  },
  data/.style={
    rectangle,
    rounded corners=2pt,
    draw=black!60,
    thick,
    fill=gray!10,
    align=center,
    text width=28mm,
    minimum height=6.5mm,
    inner sep=2.2pt,
    font=\small
  },
  model/.style={
    rectangle,
    rounded corners=2pt,
    draw=blue!60!black,
    thick,
    fill=blue!8,
    align=center,
    text width=28mm,
    minimum height=6.5mm,
    inner sep=2.2pt,
    font=\small
  },
  modelwide/.style={
    rectangle,
    rounded corners=2pt,
    draw=blue!60!black,
    thick,
    fill=blue!8,
    align=center,
    text width=37mm,
    minimum height=6.5mm,
    inner sep=2.2pt,
    font=\small
  },
  side/.style={
    rectangle,
    rounded corners=2pt,
    draw=blue!60!black,
    thick,
    fill=blue!8,
    align=center,
    text width=24mm,
    minimum height=6.5mm,
    inner sep=2.2pt,
    font=\small
  },
  sidewide/.style={
    rectangle,
    rounded corners=2pt,
    draw=blue!60!black,
    thick,
    fill=blue!8,
    align=center,
    text width=27mm,
    minimum height=6.5mm,
    inner sep=2.2pt,
    font=\small
  },
  loss/.style={
    rectangle,
    rounded corners=2pt,
    draw=orange!75!black,
    thick,
    fill=orange!10,
    align=center,
    text width=16mm,
    minimum height=6.5mm,
    inner sep=2.2pt,
    font=\small
  },
  losswide/.style={
    rectangle,
    rounded corners=2pt,
    draw=orange!75!black,
    thick,
    fill=orange!10,
    align=center,
    text width=26mm,
    minimum height=6.5mm,
    inner sep=2.2pt,
    font=\small
  },
  objective/.style={
    rectangle,
    rounded corners=2pt,
    draw=orange!75!black,
    thick,
    fill=orange!10,
    align=center,
    text width=37mm,
    minimum height=6.5mm,
    inner sep=2.2pt,
    font=\small
  },
  updatebox/.style={
    rectangle,
    rounded corners=2pt,
    draw=green!45!black,
    thick,
    fill=green!10,
    align=center,
    text width=28mm,
    minimum height=6.5mm,
    inner sep=2.2pt,
    font=\small
  }
}

\begin{minipage}[t]{0.475\textwidth}
\centering
\begin{tikzpicture}[
    node distance=3mm and 6mm,
    >=stealth,
    scale=0.7,
    every node/.style={transform shape}
]

\node[data] (x) {$\ve{x}_i\in\mathcal{X}$};

\node[model, below=of x] (embed)
{Embedding\\
$\ve{z}_i = \mathbf E\ve{x}_i$};

\node[modelwide, below=of embed] (lift)
{Lifting convolution\\
$y_{i,\ve n}
=
\ve w_0^\top
\hat{\rho}(-\ve t_{\ve n})
\ve z_i$};

\node[data, below=of lift] (y)
{$\mathbf y_i=\{y_{i,\ve n}\}_{\ve n\in\Lambda}$};

\node[sidewide, left=6mm of lift] (gens)
{Generators\\
$\hat{\mathbf L}_1,\dots,\hat{\mathbf L}_P$};

\node[side, right=6mm of lift] (phi)
{Resolving filter\\
$\ve w_0$};

\node[loss, below left=4mm and 1mm of y] (stat)
{Stationarity};

\node[loss, below=4mm of y] (res)
{Resolution};

\node[loss, below right=4mm and 1mm of y] (info)
{Infomax};

\node[updatebox, below=3.5mm of res] (upd)
{Update $\mathbf E,\;\hat{\mathcal L},\;\ve w_0$};

\draw[flow] (x) -- (embed);
\draw[flow] (embed) -- (lift);
\draw[flow] (lift) -- (y);

\draw[flow] (gens) -- (lift);
\draw[flow] (phi) -- (lift);

\draw[flow] ([xshift=-3mm]y.south) .. controls +(-3mm,-3mm) and +(0,3mm) .. (stat.north);
\draw[flow] (y.south) -- (res.north);
\draw[flow] ([xshift=3mm]y.south) .. controls +(3mm,-3mm) and +(0,3mm) .. (info.north);

\draw[flow] (stat.south) .. controls +(0,-3mm) and +(-6mm,3mm) .. ([xshift=-8mm]upd.north);
\draw[flow] (res.south)  .. controls +(0,-3mm) and +(0,3mm)    .. (upd.north);
\draw[flow] (info.south) .. controls +(0,-3mm) and +(6mm,3mm)  .. ([xshift=8mm]upd.north);

\draw[updatearrow] (upd.west) to[out=180,in=-90] (embed.south west);
\draw[updatearrow] (upd.north west) to[out=145,in=-90] (gens.south);
\draw[updatearrow] (upd.north east) to[out=35,in=-90] (phi.south);

\node[
  draw=black!35,
  dashed,
  rounded corners=4pt,
  fit=(x)(embed)(lift)(y)(gens)(phi)(stat)(res)(info)(upd),
  inner sep=3pt
] {};

\end{tikzpicture}

\vspace{1mm}
{\small\textbf{(a)} Lifting-network update}
\end{minipage}
\hfill
\begin{minipage}[t]{0.475\textwidth}
\centering
\begin{tikzpicture}[
    node distance=3mm and 6mm,
    >=stealth,
    scale=0.7,
    every node/.style={transform shape}
]

\tikzset{
  auxmodel/.style={
    rectangle,
    rounded corners=2pt,
    draw=blue!60!black,
    thick,
    fill=blue!8,
    align=center,
    text width=31mm,
    minimum height=6.5mm,
    inner sep=2.2pt,
    font=\small
  },
  auxloss/.style={
    rectangle,
    rounded corners=2pt,
    draw=orange!75!black,
    thick,
    fill=orange!10,
    align=center,
    text width=23mm,
    minimum height=6.5mm,
    inner sep=2.2pt,
    font=\small
  }
}

\node[data] (yin)
{Frozen lifted batch\\
$\mathcal Y_B=\{\mathbf y_i\}_{i=1}^B$};

\node[auxmodel, below left=5mm and -5mm of yin] (divest)
{Divergence estimator\\
$(\mathbf Y,\mathbf Y^{(l)}) \mapsto \widehat D_{\mathrm{JS}}^{(l)}$};

\node[auxmodel, below right=5mm and -5mm of yin] (entest)
{Entropy estimator\\
$Y_{\ve n} \mapsto \widehat h(Y_{\ve n})$};

\node[auxloss, below=4mm of divest] (stat2)
{Stationarity\\
$\frac{1}{P}\sum_l \widehat D_{\mathrm{JS}}^{(l)}$};

\node[auxloss, below=4mm of entest] (res2)
{Resolution\\
$\sum_{\ve n}\widehat h(Y_{\ve n})$};

\path (stat2.south) -- (res2.south) coordinate[midway] (lossmid);

\node[objective, below=4mm of lossmid] (obj2)
{Auxiliary objective\\
fit divergence and entropy estimators};

\node[updatebox, below=3.5mm of obj2] (upd2)
{Update $\theta_{\mathrm{div}},\;\theta_{\mathrm{ent}}$};

\draw[flow] (yin) -- (divest);
\draw[flow] (yin) -- (entest);

\draw[flow] (divest) -- (stat2);
\draw[flow] (entest) -- (res2);

\draw[flow] (stat2.south) .. controls +(0,-3mm) and +(-6mm,3mm) .. ([xshift=-8mm]obj2.north);
\draw[flow] (res2.south)  .. controls +(0,-3mm) and +(6mm,3mm)  .. ([xshift=8mm]obj2.north);

\draw[flow] (obj2) -- (upd2);

\draw[updatearrow] (upd2.north west) to[out=145,in=-90] (divest.south);
\draw[updatearrow] (upd2.north east) to[out=35,in=-90] (entest.south);

\node[
  draw=black!35,
  dashed,
  rounded corners=4pt,
  fit=(yin)(divest)(entest)(stat2)(res2)(obj2)(upd2),
  inner sep=3pt
] {};

\end{tikzpicture}

\vspace{1mm}
{\small\textbf{(b)} Auxiliary-network update}
\end{minipage}

\caption{
\textbf{Training scheme.}
\textbf{(a)} The lifting network update optimizes \(\mathbf E\), \(\hat{\mathcal L}\), and \(\ve w_0\) using stationarity, resolution, and InfoMax losses.
\textbf{(b)} The auxiliary network update fits statistical divergence and entropy estimators.
}
\label{fig:alternating-training-loops}
\end{figure*}

\subsection{Architecture}
We formulate latent domain recovery as learning a symmetry-based representation that satisfies stationarity, resolution, and information-maximization priors. The framework consists of a \emph{lifting network}, which maps each unstructured sample \(\ve{x}_i\in\mathcal X\) to a group-indexed representation, and an \emph{auxiliary network} for entropy and statistical divergence terms. Figure~\ref{fig:alternating-training-loops} visualizes the training loops for the lifting and auxiliary networks.

\paragraph{Lifting network.}
Inspired by lifting layers in group convolutional networks~\cite{bekkers2024fastexpressivesenequivariant}, we define a learnable lifting operator \(\mathcal U:\mathcal X_{\mathrm{aug}} \to L^2(G)\) by
\begin{equation}\label{eq:lifting-continuous}
    y_i(\ve t) = (\mathcal U \ve{z}_i)(\ve t) := \ve{w}_0^\top\hat\rho(-\ve t)\ve{z}_i ,
\end{equation}
where \(\ve{z}_i =\mathbf E \ve{x}_i\), \(\ve{w}_0 \in\mathcal X_{\mathrm{aug}}\) is the learned resolving filter, and \(\hat\rho(\ve t)=\mathbf{R}(\ve t;\hat{\mathcal L})\) is parameterized by commuting skew-symmetric generators 
\(\hat{\mathcal L} = (\hat{\mathbf{L}}_1, \dots, \hat{\mathbf{L}}_P)\).

Sampling on a regular grid \(\{\ve t_{\ve n}\}_{\ve n\in\Lambda}\), with \(\Lambda\subset\mathbb Z^P\), yields
\[
\mathbf y_i=\{y_{i,\ve n}\}_{\ve n\in\Lambda},
    \qquad
    y_{i,\ve n}
    =
    \ve{w}_0^\top
    \hat\rho(-\ve t_{\ve n})
    \mathbf E \ve{x}_i .
\]
For a mini-batch, we write \(\mathcal Y_B=\{\mathbf y_i\}_{i=1}^B\). The trainable parameters are \(\mathbf E\), \(\hat{\mathcal L}\), and \(\ve w_0\). Since the generators commute, lifting can be implemented efficiently by simultaneous diagonalization, as detailed in Appendix~\ref{app:efficient-lifting-implementation}.

In Appendix~\ref{prop:oracle-deconvolution}, we show that this form admits exact recovery in the torus setting, provided that the point-source response is nonzero on all bandlimited modes. Equivalently, \(f(\ve t)=\ve w_\star^\dagger \rho(-\ve t)\ve z\) for some resolving filter \(\ve w_\star\).

\paragraph{Auxiliary estimators and alternating optimization.}
The loss terms require entropy and divergence estimates from finite batches. We use trainable auxiliary modules on \(\mathcal Y_B\), and estimate \(h(\mathbf Y)\) from the batch covariance using an adaptive-rank Gaussian approximation. Training alternates between updating the auxiliary modules with fixed lifting parameters and updating the lifting network with fixed estimators.

\subsection{Learning objective}
We train the lifting network with three objectives: stationarity, resolution, and InfoMax:
\begin{equation}
\label{eq:loss-lifting}
\mathcal{J} = \mathcal{J}_{\mathrm{stationarity}} + \alpha \mathcal{J}_{\mathrm{resolution}} + \beta \mathcal{J}_{\mathrm{infomax}} .
\end{equation}
In Section~\ref{sec:identifiability}, we show how these loss terms work cooperatively to discover the latent symmetry action and reveal latent signals.

\textbf{Stationarity.} Let \(\mathbf Y^{(l)}\) be \(\mathbf Y\) shifted by one grid step along axis \(l\), so \(Y^{(l)}_{\ve n}=Y_{\ve n+\ve e_l}\). We enforce stationarity by penalizing
\(
    \mathcal{J}_{\mathrm{stationarity}}
    :=
    \frac{1}{P}
    \sum_{l=1}^P
    D_{\mathrm{JS}}(P_{\mathbf Y} \| P_{\mathbf Y^{(l)}}).
\)

\textbf{Resolution.} Recovering the symmetry action alone does not determine the resolving filter \(\ve{w}_0\). We minimize total correlation,
\(
    \mathcal{J}_{\mathrm{resolution}}
    :=
    \sum_{\ve n\in\Lambda} h(Y_{\ve n}) - h(\mathbf Y),
\)
to align the model output with the latent locally correlated stochastic field.

\textbf{Information maximization.} To prevent collapse, we maximize joint Gaussian entropy by minimizing
\(
    \mathcal{J}_{\mathrm{infomax}} := -h(\mathbf Y),
\)
estimated using an adaptive-rank approximation, as detailed in Appendix~\ref{subsection:auxiliary-network}.

\subsection{Implementation details}
The main computational costs are eigendecompositions for adaptive-rank entropy estimation and efficient exponentiation of commuting generators. Full complexity analysis, hyperparameters, and training schedules are given in Appendices~\ref{app:training-details} and~\ref{sec:morris_analysis}.
\section{Identifiability and theoretical interpretation}\label{sec:identifiability}

In this section, we argue that the proposed loss terms encourage the discovery of the latent symmetry action and the recovery of latent signals, assuming that the latent signal can be modeled as a locally correlated stationary stochastic field. We start by investigating the forward model proposed in Section~\ref{sec:problem-setting}:
\begin{equation}
    y(\ve t)
    =
    \ve{w}_0^\top \hat\rho(-\ve t)\ve z
    =
    \int_G f(\ve s)\, k(\ve s,\ve t)\,d\mu(\ve s),
    \qquad
    k(\ve s,\ve t)
    :=
    \ve{w}_0^\top \hat\rho(-\ve t)\rho(\ve s)\ve{h}_0 .
\end{equation}
Thus, lifting combined with the forward model leads to an operator that we refer to as the \textbf{recovery operator} \(\mathcal K:\mathcal F_B\to L^2(G,\mathbb R)\), with \(y=\mathcal K f\). We show that stationarity encourages the recovery operator to have a convolutional structure, which also leads to symmetry discovery, while the InfoMax and resolution objectives encourage the convolutional kernel toward a Dirac delta, enabling signal recovery.

Throughout the derivations, we use uppercase letters for random fields and lowercase letters for signals, which are their realizations. Translations act on realizations by \((T_{\ve{\alpha}}h)(\ve t)=h(\ve t-\ve{\alpha})\).

\subsection{Symmetry discovery}

\paragraph{From stationarity to equivariance.}
The latent field is stationary by assumption, and the stationarity loss encourages the output \(Y\) to be stationary, implying translation equivariance for 
\(\mathcal K\) under a symmetry-uniqueness assumption on the lifted space.

\begin{proposition}[Translation symmetries imply equivariance]
Let \(F\) be stationary and let \(Y=\mathcal K F\). Assume that the recovery operator \(\mathcal K\) is injective on the relevant signal subspace, and that the only distributional symmetries of \(Y\) are translations in the lifted coordinates. Then \(\mathcal K\) is translation-equivariant up to a group reparameterization,
\(
    \mathcal K T_{\ve{\alpha}} = T^{(Y)}_{\ve{\beta}(\ve{\alpha})}\mathcal K .
\)
\end{proposition}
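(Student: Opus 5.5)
The idea is to transport the latent translation symmetry of \(F\) through \(\mathcal K\) to a distributional symmetry of \(Y\). The uniqueness assumption then forces that symmetry to be a lifted translation, and injectivity upgrades the resulting equality of laws to an operator identity.

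Fix \(\ve\alpha\in G\). Since \(F\) is stationary, \(T_{\ve\alpha}F \overset{d}{=} F\). Define the conjugated operator \(S_{\ve\alpha}:=\mathcal K T_{\ve\alpha}\mathcal K^{-1}\) on the range \(\mathcal K(\mathcal F_B)\). It is well defined by the injectivity hypothesis, and \(T_{\ve\alpha}\) preserves \(\mathcal F_B\) because band-limitation is translation invariant. Then
\[
S_{\ve\alpha}Y=\mathcal K T_{\ve\alpha}F \overset{d}{=} \mathcal K F = Y,
\]
so \(S_{\ve\alpha}\) is a distributional symmetry of \(Y\). The map \(\ve\alpha\mapsto S_{\ve\alpha}\) is a group homomorphism, since \(S_{\ve\alpha}S_{\ve\alpha'}=\mathcal K T_{\ve\alpha}T_{\ve\alpha'}\mathcal K^{-1}=S_{\ve\alpha+\ve\alpha'}\). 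It is also injective, because \(T_{\ve\alpha}\) acts faithfully on \(\mathcal F_B\) under the regular action.

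Next I invoke the symmetry-uniqueness assumption. Every distributional symmetry of \(Y\) acting on \(\mathcal K(\mathcal F_B)\) is a lifted translation \(T^{(Y)}_{\ve\beta}\). Hence for each \(\ve\alpha\) there is some \(\ve\beta(\ve\alpha)\) with \(S_{\ve\alpha}=T^{(Y)}_{\ve\beta(\ve\alpha)}\) on the range. Composing with \(\mathcal K\) gives \(\mathcal K T_{\ve\alpha}=T^{(Y)}_{\ve\beta(\ve\alpha)}\mathcal K\), as claimed.

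It remains to show that \(\ve\beta\) is a group reparameterization. The identity \(T^{(Y)}_{\ve\beta(\ve\alpha+\ve\alpha')}=T^{(Y)}_{\ve\beta(\ve\alpha)}T^{(Y)}_{\ve\beta(\ve\alpha')}\) holds on the range, and lifted translations act faithfully there. Together these give \(\ve\beta(\ve\alpha+\ve\alpha')=\ve\beta(\ve\alpha)+\ve\beta(\ve\alpha')\), so \(\ve\beta\) is a homomorphism. It is injective because \(S_{\ve\alpha}\) is. If continuity is desired, I would use that \(\ve\alpha\mapsto S_{\ve\alpha}\) is strongly continuous, so a measurable homomorphism between Lie groups is continuous.

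The main obstacle is making precise what "the only distributional symmetries are translations" means. It has to range over linear invertible maps on the support \(\mathcal K(\mathcal F_B)\), and lifted translations must act faithfully there; otherwise \(\ve\beta\) is only defined modulo a stabilizer. My first move would be to state the hypothesis exactly in that form: the group of linear maps preserving the law of \(Y\) on its support equals \(\{T^{(Y)}_{\ve\beta}\}\), acting faithfully. With that formulation, each step above reduces to bookkeeping.
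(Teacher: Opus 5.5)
Your proof is correct and follows the same route as the paper's: use injectivity to form \(\mathcal K T_{\ve\alpha}\mathcal K^{-1}\), use stationarity to show it preserves the law of \(Y\), invoke the symmetry-uniqueness hypothesis to identify it with some \(T^{(Y)}_{\ve\beta(\ve\alpha)}\), and compose with \(\mathcal K\). Your extra bookkeeping goes beyond what the paper checks but is consistent with it: that \(T_{\ve\alpha}\) preserves \(\mathcal F_B\), that \(\ve\beta\) is an injective homomorphism given faithfulness, and the precise linear, on-the-support reading of the uniqueness hypothesis.
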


\noindent
The proof, given in Appendix~\ref{proof:translation-only-symmetries-imply-equivariance}, follows by matching latent and lifted translation actions.

\paragraph{From equivariance to convolution.}
Translation equivariance restricts the recovery operator \(\mathcal K\) to a convolutional form.

\begin{proposition}[Equivariant recovery yields a shift-invariant kernel]
\label{prop:recovery-kernel-shift-invariance}
Assume the recovery operator \(\mathcal K\) satisfies
\(
    \mathcal K T_{\ve{\alpha}} = T^{(Y)}_{\ve{\beta}(\ve{\alpha})} \mathcal K
\)
for an invertible group reparameterization \(\ve{\beta}: \mathbb{R}^P \to \mathbb{R}^P\). Then
\(
    k(\ve{s},\ve{t}) = \kappa(\ve{s}-\ve{\beta}^{-1}(\ve{t}))
\)
for some function \(\kappa: \mathbb{R}^P \to \mathbb{R}\).
\end{proposition}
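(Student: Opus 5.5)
We write the equivariance identity in terms of the kernel \(k\) and compare the two sides. The plan is to show that the identity forces \(k(\ve s+\ve\alpha,\ve t+\ve\beta(\ve\alpha))=k(\ve s,\ve t)\) for all \(\ve\alpha\), and then to choose \(\ve\alpha\) so as to eliminate one argument. We use \(y(\ve t)=(\mathcal K f)(\ve t)=\int_G f(\ve s)k(\ve s,\ve t)\,d\mu(\ve s)\).

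\textbf{Step 1: apply both sides to a signal \(f\in\mathcal F_B\).}
For the left side, substitute \(\ve s\mapsto\ve s+\ve\alpha\) and use invariance of the Haar measure:
\[
(\mathcal K T_{\ve\alpha}f)(\ve t)=\int_G f(\ve s-\ve\alpha)k(\ve s,\ve t)\,d\mu(\ve s)=\int_G f(\ve s)\,k(\ve s+\ve\alpha,\ve t)\,d\mu(\ve s).
\]
For the right side,
\[
(T^{(Y)}_{\ve\beta(\ve\alpha)}\mathcal K f)(\ve t)=\int_G f(\ve s)\,k(\ve s,\ve t-\ve\beta(\ve\alpha))\,d\mu(\ve s).
\]
Since these agree for every \(f\in\mathcal F_B\), the function \(\ve s\mapsto k(\ve s+\ve\alpha,\ve t)-k(\ve s,\ve t-\ve\beta(\ve\alpha))\) is orthogonal to \(\mathcal F_B\).

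\textbf{Step 2: upgrade orthogonality to pointwise equality.}
This is the main obstacle, because \(\mathcal F_B\) is only a band-limited subspace. The remedy is to note that \(k(\cdot,\ve t)=\ve w_0^\top\hat\rho(-\ve t)\rho(\cdot)\ve h_0\) is a finite combination of matrix coefficients of \(\rho\). For the model to be well posed, these coefficients lie in \(\mathcal F_B\): their frequencies are the characters of \(\rho\), which are the frequencies carried by \(\delta_B\) and hence by \(\mathcal F_B\). The translate in \(\ve s\) is again in \(\mathcal F_B\), since \(\mathcal F_B\) is translation invariant. So the difference lies in \(\mathcal F_B\) and is orthogonal to \(\mathcal F_B\), hence it is zero. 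The resulting pointwise identity holds for all \(\ve s,\ve t,\ve\alpha\), using continuity of \(k\), which follows from the smoothness of \(\hat\rho\) and \(\rho\).

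If one prefers not to make this assumption, one can instead state the conclusion for the projection of \(k(\cdot,\ve t)\) onto \(\mathcal F_B\). This is the only part of \(k\) that affects \(\mathcal K\), so nothing is lost.

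\textbf{Step 3: solve the functional equation.}
From Steps 1 and 2 we have \(k(\ve s+\ve\alpha,\ve t)=k(\ve s,\ve t-\ve\beta(\ve\alpha))\) for all \(\ve s,\ve t,\ve\alpha\). Use that \(\ve\beta\) is invertible. Given \(\ve t\), choose \(\ve\alpha=\ve\beta^{-1}(\ve t)\). Applying the identity at the point \(\ve s-\ve\alpha\) gives
\[
k(\ve s,\ve t)=k\bigl(\ve s-\ve\beta^{-1}(\ve t),\,\ve t-\ve\beta(\ve\beta^{-1}(\ve t))\bigr)=k\bigl(\ve s-\ve\beta^{-1}(\ve t),\,\ve 0\bigr).
\]
Setting \(\kappa(\ve u):=k(\ve u,\ve 0)\) gives the claimed form \(k(\ve s,\ve t)=\kappa(\ve s-\ve\beta^{-1}(\ve t))\).

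Step 3 uses only the surjectivity of \(\ve\beta\). The homomorphism property of \(\ve\beta\), which follows from \(T_{\ve\alpha}T_{\ve\alpha'}=T_{\ve\alpha+\ve\alpha'}\) and injectivity of \(\mathcal K\), is not needed here. It would show that \(\ve\beta\) is linear, so the kernel is genuinely convolutional after a linear change of coordinates.
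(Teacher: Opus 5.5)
Your proof is correct and is the paper's argument: you apply both sides to \(f\), substitute \(\ve s\mapsto\ve s+\ve\alpha\) to obtain \(k(\ve s+\ve\alpha,\ve t)=k(\ve s,\ve t-\ve\beta(\ve\alpha))\), then take \(\ve\alpha=\ve\beta^{-1}(\ve t)\) and set \(\kappa(\ve u):=k(\ve u,\ve 0)\). Your Step 2 spells out what the paper compresses into ``equality for all \(f\) gives''. To make it airtight, justify the band-limitation differently: \(\ve s\mapsto\ve w_0^\top\hat\rho(-\ve t)\rho(\ve s)\ve h_0\) involves only the characters of \(\rho\) on which \(\ve h_0\) has a nonzero component, and these are in-band because \(\rho(\ve s)\ve h_0=\mathcal M_{\mathrm{aug}}[\tau_{\ve s}\delta_B]\), not merely because they are characters of \(\rho\).
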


\noindent

The proof is given in Appendix~\ref{proof:recovery-kernel-shift-invariance}. Although \(\ve{\beta}(\ve{\alpha})=\mathbf A\ve{\alpha}\) for some \(\mathbf A\in \mathrm{GL}(P)\) in general, the InfoMax loss restricts \(\mathbf A\) to coordinate permutations and direction flips on compact rectangular domains. Thus, up to these ambiguities, we write \(k(\ve{s},\ve{t}) = \kappa(\ve{s}-\ve{t})\). 

\paragraph{From convolution to symmetry identification.}
The InfoMax objective encourages the learned resolving filter to span the observable sample space \(\mathcal X_{\mathrm{obs}}\). Combined with the shift-invariance of the kernel \(k\), this aligns the latent and learned symmetry representations. 

\begin{proposition}[Symmetry identification]
\label{prop:symmetry_discovery}
Assume that
\(
    k(\ve{s},\ve{t})
    =
    \ve{w}_0^\top\hat\rho(-\ve{t})\rho(\ve{s})\ve{h}_0
\)
is shift-invariant, and that the filter orbit spans the observable sample space,
\(
    \operatorname{span}\{\hat\rho(\ve{t})\ve{w}_0:\ve{t}\in \mathbb{R}^P\}
    =
    \mathcal X_{\mathrm{obs}} .
\)
Then, for every \(\ve{r} \in \mathbb{R}^P\),
\[
    \hat\rho(\ve{r})=\rho(\ve{r}) \qquad \text{on } \mathcal X_{\mathrm{obs}} .
\]
\end{proposition}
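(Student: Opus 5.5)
Shift-invariance of \(k\) means \(k(\ve s+\ve r,\ve t+\ve r)=k(\ve s,\ve t)\) for all \(\ve s,\ve t,\ve r\). I will turn this into an identity between vectors in \(\mathcal X_{\mathrm{obs}}\) and then use the spanning hypothesis to cancel the filter. The representations are orthogonal and generated by commuting generators, so \(\hat\rho(-\ve t)^\top=\hat\rho(\ve t)\), and both \(\rho\) and \(\hat\rho\) are homomorphisms: \(\rho(\ve s+\ve r)=\rho(\ve r)\rho(\ve s)\), and likewise for \(\hat\rho\).

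First, write the kernel as an inner product:
\[
k(\ve s,\ve t)=\langle \hat\rho(\ve t)\ve w_0,\ \rho(\ve s)\ve h_0\rangle .
\]
Substituting into the shift-invariance identity gives
\[
\langle \hat\rho(\ve t)\hat\rho(\ve r)\ve w_0,\ \rho(\ve r)\rho(\ve s)\ve h_0\rangle=\langle \hat\rho(\ve t)\ve w_0,\ \rho(\ve s)\ve h_0\rangle .
\]
Orthogonality of \(\rho(\ve r)\) lets me move it to the left slot, which yields
\[
\langle \rho(\ve r)^\top\hat\rho(\ve r)\,\hat\rho(\ve t)\ve w_0,\ \rho(\ve s)\ve h_0\rangle=\langle \hat\rho(\ve t)\ve w_0,\ \rho(\ve s)\ve h_0\rangle ,
\]
where I used that \(\hat\rho(\ve r)\) and \(\hat\rho(\ve t)\) commute. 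Set \(\mathbf Q(\ve r):=\rho(\ve r)^\top\hat\rho(\ve r)\). The identity says that \((\mathbf Q(\ve r)-\identity)\hat\rho(\ve t)\ve w_0\) is orthogonal to \(\rho(\ve s)\ve h_0\) for every \(\ve s\), hence orthogonal to all of \(\mathcal X_{\mathrm{obs}}\).

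Second, I use the spanning hypothesis. The vectors \(\hat\rho(\ve t)\ve w_0\) span \(\mathcal X_{\mathrm{obs}}\), so by linearity \(P_{\mathrm{obs}}(\mathbf Q(\ve r)-\identity)\mathbf v=0\) for all \(\mathbf v\in\mathcal X_{\mathrm{obs}}\), where \(P_{\mathrm{obs}}\) is the orthogonal projector onto \(\mathcal X_{\mathrm{obs}}\). This only shows that the compression of \(\mathbf Q(\ve r)\) to \(\mathcal X_{\mathrm{obs}}\) is the identity. To promote this to \(\mathbf Q(\ve r)\mathbf v=\mathbf v\), I will use that \(\mathbf Q(\ve r)\) is orthogonal. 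For \(\mathbf v\in\mathcal X_{\mathrm{obs}}\), the vector \(\mathbf Q(\ve r)\mathbf v\) has norm \(\|\mathbf v\|\), and its projection onto \(\mathcal X_{\mathrm{obs}}\) is already \(\mathbf v\). Pythagoras then forces the orthogonal component to vanish, so \(\mathbf Q(\ve r)\mathbf v=\mathbf v\).

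Finally, multiply by \(\rho(\ve r)\) on the left. This gives \(\hat\rho(\ve r)\mathbf v=\rho(\ve r)\mathbf v\) for every \(\mathbf v\in\mathcal X_{\mathrm{obs}}\), which is the claim. As a consistency check, \(\mathcal X_{\mathrm{obs}}\) is \(\rho\)-invariant by construction, and the spanning hypothesis makes it \(\hat\rho\)-invariant too, so the restriction is meaningful on both sides.

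The main obstacle is the upgrade in the second step, from a statement about the compression of \(\mathbf Q(\ve r)\) to a statement about \(\mathbf Q(\ve r)\) itself, and the norm argument handles it. A smaller point is that "shift-invariant" must be read with \(\ve\beta\) already reduced to the identity, as the paper arranges after Proposition~\ref{prop:recovery-kernel-shift-invariance}. Only real inner products appear, so no conjugation issues arise.
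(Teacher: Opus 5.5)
Your proof is correct and follows the paper's route: shift-invariance yields \(\langle \hat\rho(\ve t)\ve w_0,\,(\rho(\ve r)-\hat\rho(\ve r))\rho(\ve s)\ve h_0\rangle=0\), which your \(\mathbf Q(\ve r)\) identity rewrites equivalently after reparametrizing \(\ve t\), and the two spanning sets then make the bilinear form vanish on \(\mathcal X_{\mathrm{obs}}\times\mathcal X_{\mathrm{obs}}\). The one difference is the final step: the paper leaves it implicit, where it follows from the \(\rho\)- and \(\hat\rho\)-invariance of \(\mathcal X_{\mathrm{obs}}\) that you note as a consistency check, while your Pythagoras argument for the isometry \(\mathbf Q(\ve r)\) closes it explicitly without needing that invariance; either works.
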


\noindent
The proof, given in Appendix~\ref{proof:symmetry_discovery}, analyzes the form $\ve{w}_0^\top\hat\rho(-\ve{t}) \bigl(\rho(\ve{r})-\hat\rho(\ve{r})\bigr)\rho(\ve{s})\ve{h}_0 = 0$ for any $\ve{s}, \ve{t} \in \mathbb{R}^P$.

\subsection{Signal recovery}
Once the symmetry is identified, signal recovery reduces to learning a delta-like convolutional kernel. We analyze this on a discretized 1D torus with \(P=1\) and \(N\) latent points.

\paragraph{InfoMax flattens the kernel spectrum.}
The convolutional kernel has a fixed \(L^2\) norm, as proved in Appendix~\ref{lem:recovery-kernel-has-fixed-energy}. Under this constraint, InfoMax spreads kernel energy uniformly across Fourier modes.

\begin{lemma}[Maximizing Gaussian entropy yields a flat spectrum]\label{lem:infomax-implies-flat-kernel-spectrum}
Let \(\tilde{\kappa}_m\) for \(m \in \{0, \dots, N-1\}\) be the Fourier coefficients of the kernel. Assume the latent field is stationary with spectral power \(s_m > 0\), and the kernel \(\kappa\) has fixed \(L^2\) norm \(C > 0\). Then maximizing the Gaussian entropy yields a flat spectrum,
\(
    |\tilde{\kappa}_m|^2 = \frac{C}{N}.
\)
\end{lemma}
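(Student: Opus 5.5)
We work on the discretized torus $\mathbb Z_N$. The key step is to diagonalize the recovery operator with the discrete Fourier transform, which turns the entropy maximization into a concave optimization over the kernel power spectrum.

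\textbf{Step 1 (output spectrum).} By Proposition~\ref{prop:recovery-kernel-shift-invariance} and the discussion after it, the recovery operator is a circular convolution $Y=\kappa * F$. Since $F$ is stationary on $\mathbb Z_N$, its covariance is circulant and is diagonalized by the DFT with eigenvalues $s_m>0$. Convolution acts diagonally in the same basis, so the covariance of $Y$ is circulant with eigenvalues $|\tilde\kappa_m|^2 s_m$. (A factor of $N$ may appear here depending on the DFT normalization; we absorb it into the convention that fixes Parseval as $\sum_m|\tilde\kappa_m|^2=C$, matching the target value $C/N$.)

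\textbf{Step 2 (entropy as a function of the spectrum).} The Gaussian entropy is
\[
h(\mathbf Y)=\tfrac N2\log(2\pi e)+\tfrac12\sum_{m=0}^{N-1}\log\bigl(|\tilde\kappa_m|^2 s_m\bigr).
\]
This holds because the DFT is unitary, so the log-determinant equals the sum of the log-eigenvalues. Writing $p_m:=|\tilde\kappa_m|^2\ge 0$, the entropy separates into
\[
h(\mathbf Y)=\text{const}+\tfrac12\sum_m\log s_m+\tfrac12\sum_m\log p_m.
\]
The $s_m$ term does not depend on $\kappa$, so the latent spectrum drops out of the optimization entirely. This decoupling is what makes the result independent of $s_m$.

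\textbf{Step 3 (constrained maximization).} Parseval converts the fixed $L^2$ norm (the fixed-energy lemma in the appendix) into the constraint $\sum_m p_m=C$. We therefore maximize the strictly concave function $\sum_m\log p_m$ on this simplex. There are two equivalent ways to conclude:
\begin{itemize}
\item By AM--GM, $\prod_m p_m\le (C/N)^N$, with equality iff all $p_m$ are equal.
\item By Lagrange multipliers, the stationarity condition $1/p_m=\lambda$ forces all $p_m$ to be equal.
\end{itemize}
Either way the unique maximizer is $p_m=C/N$. We should also note that any $p_m=0$ makes the entropy $-\infty$, so the maximizer lies in the interior of the simplex.

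\textbf{Main obstacle: realness of $\kappa$.} Since $\kappa$ is real, its Fourier coefficients satisfy $\tilde\kappa_{N-m}=\overline{\tilde\kappa_m}$. This makes $p_m=p_{N-m}$, so the $p_m$ are not independent variables. The flat solution $p_m=C/N$ already satisfies this symmetry. Because it maximizes over the larger unconstrained set, it also maximizes over the symmetric subset, and the argument goes through without modification.
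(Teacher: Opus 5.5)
Your proposal is correct and follows essentially the same route as the paper's proof. Like the paper, you diagonalize the output covariance in the Fourier basis with eigenvalues $|\tilde\kappa_m|^2 s_m$, note that the $\log s_m$ terms decouple, and maximize $\sum_m \log p_m$ subject to $\sum_m p_m = C$. The paper leaves implicit the details you add: AM--GM or Lagrange multipliers for the maximizer, the argument that the maximizer is interior, and the check that the flat solution satisfies $p_m = p_{N-m}$ for a real kernel. The DFT normalization factor you absorb is harmless, since a uniform factor in all eigenvalues only shifts the entropy by a constant.
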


\noindent
The proof, given in Appendix~\ref{proof:infomax-implies-flat-kernel-spectrum}, leverages Fourier diagonalization and entropy maximization.

\paragraph{Flatness and compactness force a shifted delta.}
Appendix~\ref{analysis-of-resolution} shows that total-correlation minimization favors compact kernels: broad kernels introduce dependencies. Under this compactness bias, flat spectrum forces a shifted delta.

\begin{proposition}\label{prop:compact-support-flat-filter-implies-delta}
Assume that the nonzero kernel \(\kappa_n\) has a flat spectrum and minimal contiguous support of length at most \(N/2\). Then \(\kappa\) is a shifted and scaled Kronecker delta.
\end{proposition}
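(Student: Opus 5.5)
The plan is to turn the flat-spectrum condition into a statement about the circular autocorrelation of \(\kappa\). Then use the support assumption to show that the extreme nonzero lag of that autocorrelation cannot vanish unless the support is a single point.

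\textbf{Step 1 (flat spectrum \(\Rightarrow\) delta autocorrelation).} Define the circular autocorrelation
\[
r_k:=\sum_{n=0}^{N-1}\kappa_n\kappa_{(n+k)\bmod N}, \qquad k\in\{0,\dots,N-1\}.
\]
By the discrete Wiener--Khinchin identity its DFT is \(|\tilde\kappa_m|^2\), up to the normalization convention. By Lemma~\ref{lem:infomax-implies-flat-kernel-spectrum}, or directly by the hypothesis, \(|\tilde\kappa_m|^2=C/N\) is constant in \(m\). Inverting the DFT then gives \(r_k=c\,\delta_{k,0}\) for a constant \(c>0\) proportional to \(C\). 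In particular, \(r_k=0\) for every \(k\not\equiv 0 \pmod N\). I take \(\kappa\) to be real-valued, as in the paper's setting; for a complex kernel one would use \(\kappa_n\overline{\kappa_{n+k}}\) and the argument is unchanged.

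\textbf{Step 2 (extreme lag).} By minimal contiguous support, there is an \(a\) such that \(\kappa_n\neq 0\) only for \(n\in S=\{a,a+1,\dots,a+L-1\}\) (indices mod \(N\)), with \(\kappa_a\neq0\), \(\kappa_{a+L-1}\neq0\), and \(1\le L\le N/2\). Suppose, for contradiction, that \(L\ge 2\), and examine the lag \(k=L-1\), which satisfies \(1\le k\le N/2-1\) and hence \(k\not\equiv0\).

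A term \(\kappa_n\kappa_{n+k}\) in \(r_k\) is nonzero only if both \(n\) and \(n+k\) lie in \(S\). Writing \(n=a+i\) and \(n+k\equiv a+j\) with \(i,j\in\{0,\dots,L-1\}\), we need \(j-i\equiv L-1 \pmod N\), where \(j-i\in[-(L-1),L-1]\). So either \(j-i=L-1\), or \(j-i=L-1-N\). The second option would require \(N\le 2L-2\), which contradicts \(L\le N/2\).

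Hence the only contributing pair is \(i=0\), \(j=L-1\), and
\[
r_{L-1}=\kappa_a\kappa_{a+L-1}\neq 0.
\]
This contradicts Step 1. Therefore \(L=1\), so \(\kappa_n=\kappa_a\,\delta_{n,a}\), a shifted Kronecker delta with \(|\kappa_a|^2=C\) fixed by the norm constraint.

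\textbf{Main obstacle.} The only delicate point is the wrap-around on the torus in Step 2. One must rule out a second pair of support points that are congruent at lag \(L-1\) modulo \(N\), and this is exactly where the hypothesis \(L\le N/2\) is used. Without it, the "extreme lag" product could be cancelled by a wrapped term, and indeed non-delta flat-spectrum sequences with long support exist, such as perfect sequences.

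A secondary check is the bookkeeping of DFT normalization constants, so that the flat value \(C/N\) translates into \(r_k\propto\delta_{k,0}\). This only affects the value of \(c\), not its vanishing pattern.
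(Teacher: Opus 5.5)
Your proposal is correct and follows the paper's proof essentially verbatim. The flat spectrum forces the circular autocorrelation to vanish at all nonzero lags, while at the extreme lag $L-1$ only the endpoint product $\kappa_a\kappa_{a+L-1}\neq 0$ survives, which is a contradiction unless $L=1$. Your explicit check that no wrapped pair contributes, which is exactly where $L\le N/2$ is used, is a detail the paper leaves implicit in its one-line identity $r(q-1)=\kappa_1\kappa_q$.
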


\noindent
The proof, given in Appendix~\ref{proof:compact-support-flat-filter-implies-delta}, uses the autocorrelation of flat-spectrum filters.
\section{Experiments}\label{sec:experiments}
We evaluate latent domain recovery and symmetry discovery on synthetic waveforms, images, and black-box neural spike-rate vectors. Establishing benchmarks requires additional layers to adapt existing methods in most cases; we detail these in the relevant experiments. 

We compare our method directly with symmetry discovery algorithms such as LieGAN~\cite{yang2023latent}. For domain recovery, we adapt graph structure learning algorithms (e.g., GLASSO and Covariance Matching) and manifold discovery algorithms (e.g., Isomap and UMAP). We also tested Topographic ICA (TICA) but consistently observed convergence failures. 

We use fixed hyperparameters for our method, whereas baseline methods are tuned via grid search to select the best configuration. Baseline details, negative results, and hyperparameter grids are reported in Appendix~\ref{baselines}.

\textbf{Evaluation protocol.}
For evaluating symmetry discovery capability, we compare learned effective generators $L_{i,\mathrm{eff}}=E^\dagger L_iE$ with ground-truth finite-difference generators $\Delta_i$. This comparison is performed after transporting learned actions to native coordinates via $A^{-1}L_{i,\mathrm{eff}}A$, where $A$ is the obfuscating transform. Since finite differences are dispersive at high frequencies, we report the Frobenius cosine similarity after projection to a low-frequency Fourier subspace with a normalized bandwidth of $\beta=0.75$, denoted as $S_{0.75}$. For domain recovery, we report the maximum absolute Pearson correlation between the latent signal and the model output over all admissible axis permutations and reflections (i.e., $2^P P!$ alignments for a $P$-dimensional translation group).

\subsection{Waveform benchmarks.}
\textbf{Comparison with symmetry-discovery baseline.}
We compare translation-symmetry discovery against LieGAN~\cite{yang2023latent} on low-dimensional Generalized Shot Noise (GSN) signals~\cite{rice1944mathematical} with Gaussian bases ($d=15$). We use no embedding layer ($\mathcal X=\mathcal X_A$) to match LieGAN's coordinate-action assumptions. Both models use identical datasets and training budgets.

We evaluate FFT-based fractional cyclic shifts, continuous latent shifts before discretization, and integer-only discrete shifts. LieGAN performs strongly for FFT-based shifts, which define exact cyclic actions on the discretized signal. However, it degrades under partial observation or discretization noise. Our method remains robust, suggesting that its information-theoretic objectives provide more stable recovery signals (Table~\ref{tab:waveform_mnist_benchmarks}).

\textbf{Comparison with domain-recovery baselines.}
We test latent domain recovery on GSN processes~\cite{rice1944mathematical} with Gaussian and Legendre bases, and on Ising spin chains~\cite{kulske2025ising} under different linear transformations (Appendix~\ref{appsubsec:gsn-process}, \ref{appsubsec:ising}). Our method recovers both translation generators and point-response filters. In GSN experiments, these transform consistently under DST-I (Figure~\ref{fig:waveform_experiments}a,b), allowing the learned lifting to approximately invert the transform (Figure~\ref{fig:waveform_experiments}c). In the blind Ising setting, the method similarly recovers the latent signal structure and transformed generator after a random invertible transform.

For benchmarking, we apply two graph structure learning algorithms: GLASSO and Covariance Matching (CM). The GLASSO benchmark involves estimating the graph precision matrix and subsequently aligning it with a reference 1D grid adjacency matrix. For CM, we align the empirical covariance matrix eigenvectors with those of the 1D grid adjacency matrix. Additionally, we use a covariance eigenvalue decay prior to impose compactness on the revealed signals.

GLASSO is competitive for native Gaussian GSN signals, but drops under DST-I from $r=0.812$ to $r=0.352$. This is likely because the transformation violates its sparse precision-matrix assumption. CM is less basis-sensitive but remains below $r=0.6$, despite its smoothness prior. Across settings, our method achieves stronger signal recovery with $r > 0.85$ while simultaneously identifying the associated translation generator (Table~\ref{tab:waveform_mnist_benchmarks}).

\begin{figure}[t]
  \centering
  \includegraphics[
    trim={0cm 0cm 0cm 0cm},
    clip,
    width=0.9\linewidth
  ]{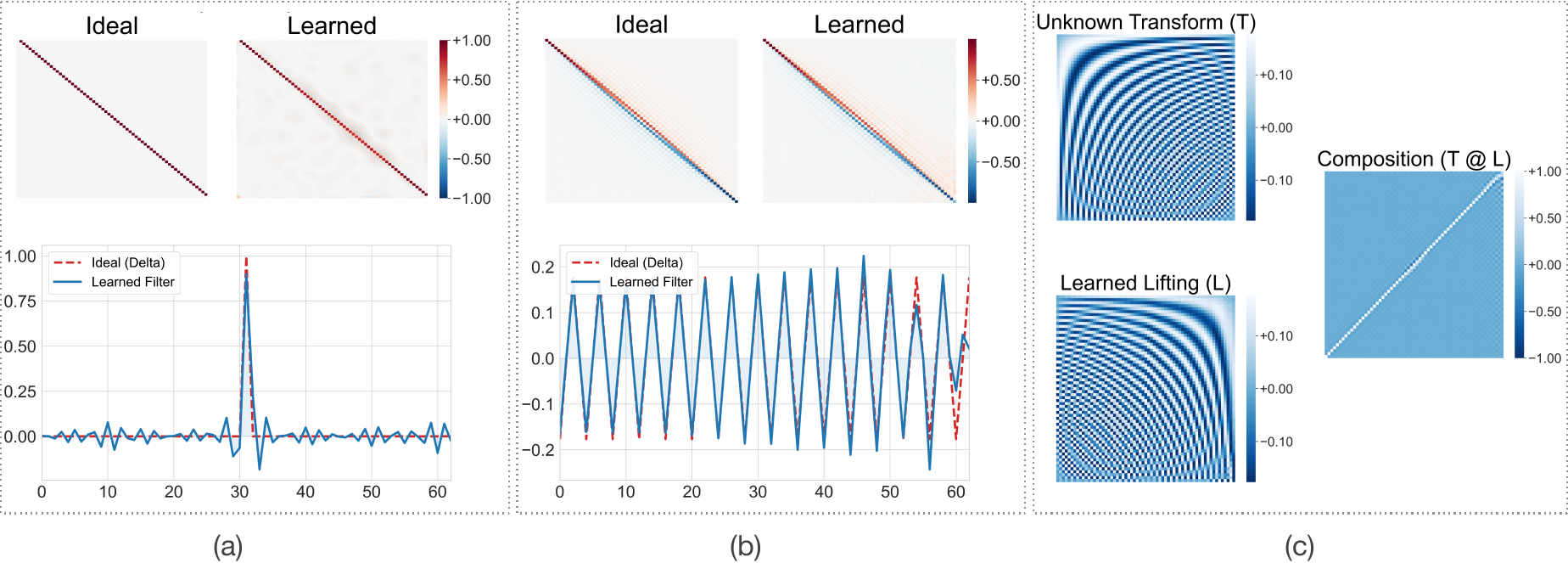}  
  \caption{\textbf{Waveform recovery experiments.}
\textbf{(a)} Native domain: learned actions and resolving filters match one-dimensional translations and point responses.
\textbf{(b)} DST-I domain: learned actions and filters transform accordingly, with point responses represented by sinusoidal filters.
\textbf{(c)} The learned lifting approximately inverts the DST-I transform, yielding an identity-like composition.}
  \label{fig:waveform_experiments}
\end{figure}

\begin{table}[t]
\centering
\caption{\textbf{Waveform and MNIST benchmarks.}
We report generator similarity $S_{0.75}$ and signal recovery $r$.}
\label{tab:waveform_mnist_benchmarks}

\begingroup
\setlength{\tabcolsep}{4pt}
\renewcommand{\arraystretch}{0.88}
\setlength{\aboverulesep}{0.25ex}
\setlength{\belowrulesep}{0.25ex}

\begin{tabular}{lccccccc}
\toprule
\multicolumn{8}{c}{\textbf{Waveform symmetry-generator discovery}} \\
\midrule
Dataset & $d$ & \multicolumn{2}{c}{Translation regime} &
\multicolumn{2}{c}{LieGAN $S_{0.75}$} &
\multicolumn{2}{c}{Ours $S_{0.75}$} \\
\midrule
GSN-G & $15$ & \multicolumn{2}{c}{FFT-based fractional} & \multicolumn{2}{c}{\textbf{0.9999}} & \multicolumn{2}{c}{0.9953} \\
GSN-G & $15$ & \multicolumn{2}{c}{Discrete latent}       & \multicolumn{2}{c}{0.4277}          & \multicolumn{2}{c}{\textbf{0.9933}} \\
GSN-G & $15$ & \multicolumn{2}{c}{Continuous latent}     & \multicolumn{2}{c}{0.4533}          & \multicolumn{2}{c}{\textbf{0.9862}} \\

\midrule
\multicolumn{8}{c}{\textbf{Waveform latent domain recovery}} \\
\midrule
Dataset & $d$ & Transform. & Gen. & \multicolumn{4}{c}{Signal recovery $r$} \\
\cmidrule(lr){5-8}
 & & & $S_{0.75}$ & Ours & GLASSO & CM & Random \\
\midrule
GSN-G & $63$ & Identity       & 0.961 & \textbf{0.850} & 0.812 & 0.495 & 0.284 \\
GSN-L & $63$ & Identity       & 0.964 & \textbf{0.981} & 0.487 & 0.272 & 0.176 \\
GSN-G & $63$ & DST-I          & 0.970 & \textbf{0.982} & 0.352 & 0.426 & 0.277 \\
GSN-L & $63$ & DST-I          & 0.959 & \textbf{0.997} & 0.184 & 0.263 & 0.172 \\
Ising & $33$ & Non-orthogonal & 0.956 & \textbf{0.960} & 0.352 & 0.595 & 0.332 \\

\midrule
\multicolumn{8}{c}{\textbf{Scrambled MNIST domain recovery}} \\
\midrule
Crop $(d)$ & Gen. & \multicolumn{6}{c}{Signal recovery $r$} \\
\cmidrule(lr){3-8}
 & $S_{0.75}$ & Ours & GLASSO & CM & UMAP & ISOMAP & Random \\
\midrule
$15{\times}15$ $(225)$ & 0.921/0.949 & \textbf{0.995} & 0.371 & 0.316 & 0.790 & 0.763 & 0.126 \\
$27{\times}27$ $(729)$ & 0.718/0.716 & \textbf{0.947} & 0.233 & 0.197 & 0.702 & 0.611 & 0.070 \\
\bottomrule
\end{tabular}
\endgroup
\end{table}

\subsection{Experiments over images}
\label{subsubsection:blind-permutation-recovery}

\textbf{Pixel shuffling.}
We evaluate on the MNIST ``bag-of-pixels'' problem~\cite{roux2007learning}. After zero-padding, we randomly crop $15\times15$ and $27\times27$ patches to ensure translation invariance. We then flatten and randomly permute the pixels. Our model achieves strong recovery ($r=0.995$, $0.947$; Table~\ref{tab:waveform_mnist_benchmarks}) while learning sparse orthogonal symmetry representations of size $225\times225$ and $729\times729$. Although $S_{0.75}$ decreases for $27\times27$, robust low-frequency scores ($S_{0.5}>0.96$) suggest that errors concentrate in high-frequency modes, which carry limited energy.

For benchmarking, we apply the graph structure learning methods and techniques proposed in \cite{roux2007learning}. The approach for graph structure learning algorithms is similar to the 1D domain discovery experiments; however, we align the precision or covariance matrix to the adjacency matrix of a 2D grid instead of a 1D grid. 

We use the methods proposed in \cite{roux2007learning} without modification. These methods treat the dataset as a structureless bag of pixels. They rely on manifold learning algorithms, such as ISOMAP and UMAP, to find embedding vectors for each pixel index before applying linear interpolation to reveal the image.

Our method significantly improves upon the strongest baseline, increasing accuracy from $0.790$ to $0.995$ for $15\times15$ crops, and from $0.702$ to $0.947$ for $27\times27$ crops. Although manifold-learning baselines achieve a better score compared to graph structure learning-based methods, it should also be considered that manifold learning methods only search for $O(d)$ pixel coordinate embeddings rather than general $O(d^2)$ linear transformations, giving them a significant advantage.

\textbf{Bit scrambling.}
To probe recovery limits, we simulate bit-scrambling encryption~\cite{ye2010image} by decomposing pixels into randomly permuted bits. This yields dimensions of $1800$ and $5832$ for the $15{\times}15$ and $27{\times}27$ crops, respectively. The bit-tensor representation casts nonlinear recovery as a high-dimensional linear inverse problem.

A learnable linear embedding reduces the bitstream to the original image dimension. During training, digit structure emerges from low-frequency fluctuations (Figure~\ref{figure:2d-permutation-group-convolution-tensor-snapshots}). The induced operator aggregates bits into pixels, restores spatial topology, and recovers bit-significance structure (Appendix, Figure~\ref{fig:from_bits_to_images_lifting_matrix_structure}). Recovery remains stable across scales ($r=0.944$, $0.941$), suggesting the first passive, unsupervised recovery of images from scrambled bitstreams.

\subsection{Learning rotational symmetry from mouse visual cortex signals}

\textbf{Visual coding dataset.}
To explore black-box biological signals, we use the Allen Brain Observatory Visual Coding dataset~\cite{siegle2021survey}. We focus on sinusoidal grating stimuli with six orientations: $\theta \in \{0^\circ, 30^\circ, \dots, 150^\circ\}$. Each sample records visual-cortex activity from $110$ neurons in response to one grating displayed on a screen, yielding approximately $1.1$k spike-rate vectors (detailed in Appendix~\ref{appsubsec:neural-experiment-details}).

\textbf{Homogeneous phase setting.}
This experiment matches the homogeneous setting of Section~\ref{problem-setting:homogeneous-case}. In this setting, the latent signal can be modeled as a delta spike on a circular domain. Although a delta spike is not well-defined on $L^2(G)$—rendering direct signal recovery intractable—its projection onto a low-frequency subspace constitutes a valid signal. Specifically, the lowest frequency subspace containing the fundamental harmonics is sufficient to reveal the latent signal. This is because a signal as simple as a delta spike can be characterized solely by its position (phase) on the circular domain.

Therefore, to extract the latent signal, we only need to recover its projection onto this fundamental harmonic subspace. Revealing the signal in this subspace corresponds to limiting the entropy-estimator rank to $3$. 

\begin{figure}[t!]
\centering

\begin{subfigure}[t]{0.275\textwidth}
    \centering
    \includegraphics[width=\linewidth]{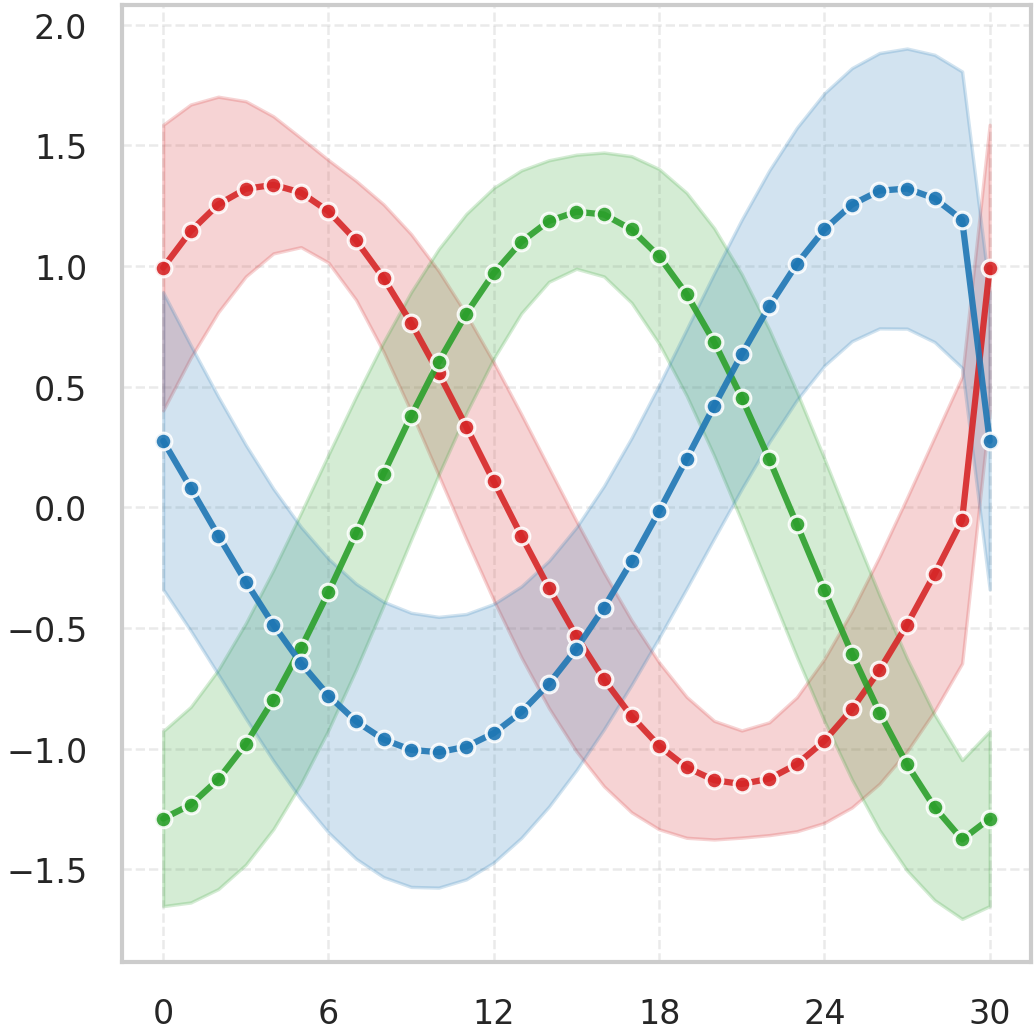}
    \caption{Model output.}
    \label{fig:response}
\end{subfigure}
\hspace{0.01\textwidth}
\begin{subfigure}[t]{0.275\textwidth}
    \centering
    \includegraphics[trim={0cm 0cm 0cm 0cm}, clip, width=\linewidth]{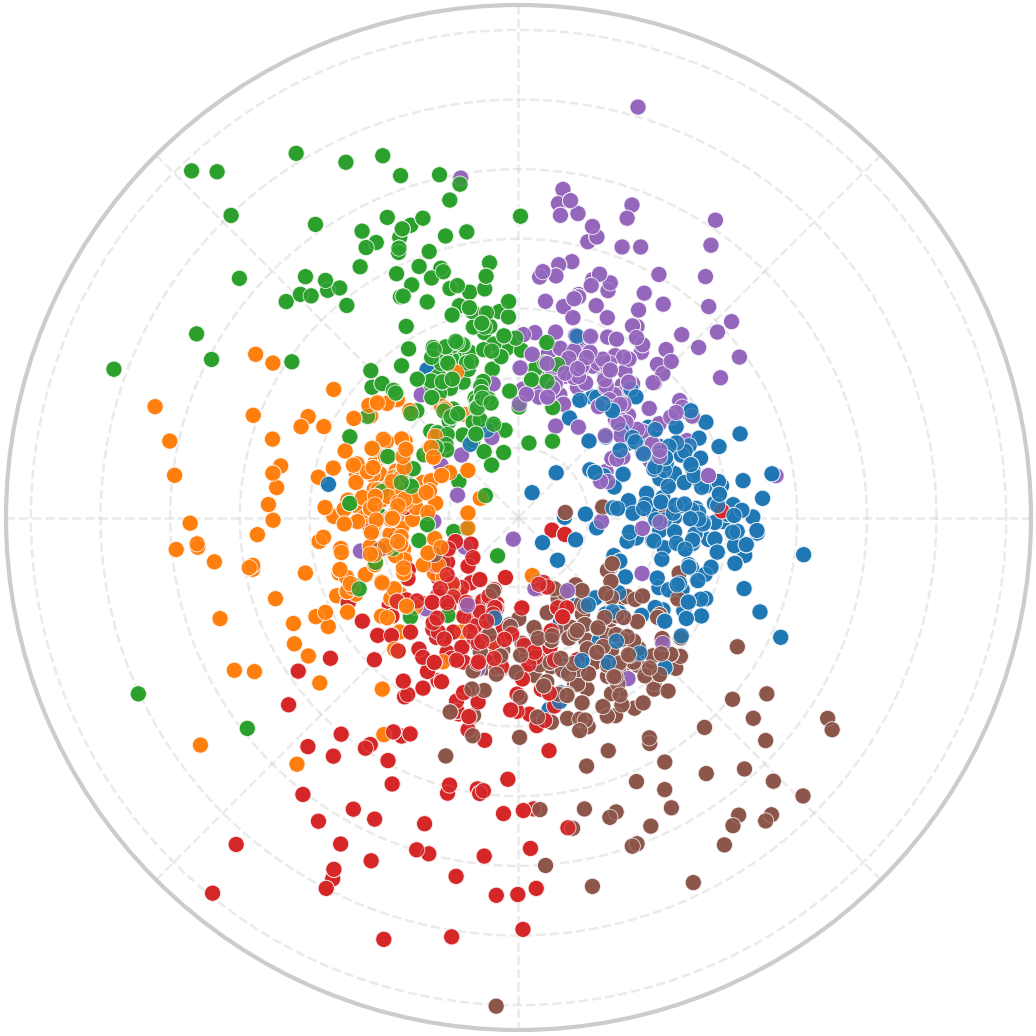}
    \caption{Polar code.}
    \label{fig:polar}
\end{subfigure}
\hspace{0.01\textwidth}
\begin{subfigure}[t]{0.275\textwidth}
    \centering
    \includegraphics[trim={1cm 1cm 0cm 0cm}, clip, width=\linewidth]{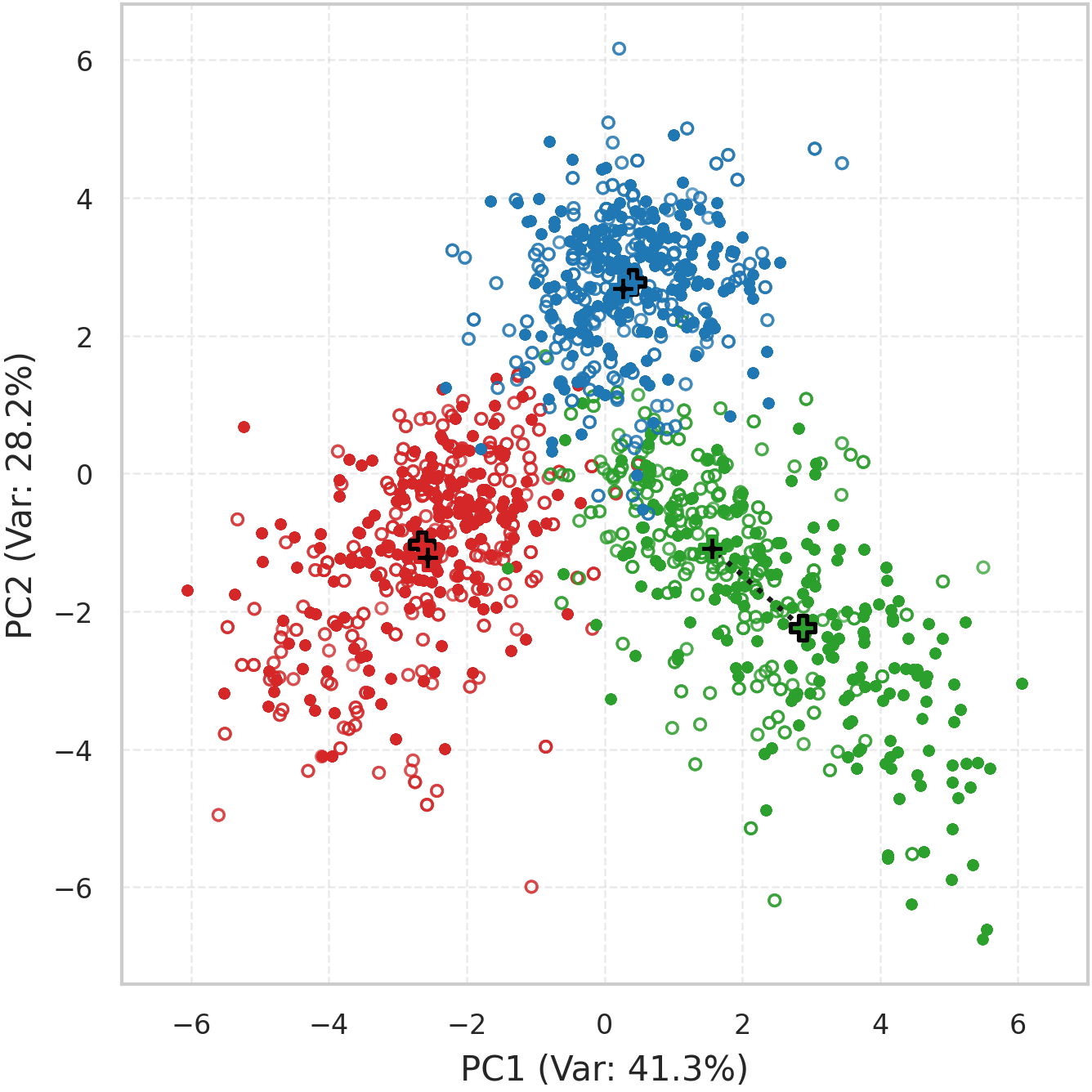}
    \caption{Real vs synthetic samples.}
    \label{fig:transport}
\end{subfigure}

\caption{\textbf{Learning rotation equivariance from neural signals.}
Samples are color-coded by stimulus orientation.
\textbf{(a--b)} The learned output forms a sinusoidal code whose phase recovers the circular orientation order, while amplitude captures response strength.
\textbf{(c)} In the first two principal components, real and synthetically shifted samples show substantial overlap.}
\label{fig:combined_analysis}
\end{figure}

\textbf{Experiment details.}
We convert neural action potentials into spike-rate vectors and set $\mathcal X_{\mathrm{aug}}=\mathcal X$, ensuring the learned generators act directly on the spike-rate vector space. After training, the model output resembles a sinusoid, whose phase and amplitude depend on the model input. The phase of this sinusoid corresponds to positions on the latent orientation circle, modulo 180° due to the mirror symmetry of the stimuli (Figure~\ref{fig:response}). In Figure~\ref{fig:polar}, we present a polar plot where the angular position of each sample is determined by the phase of the model output, and the distance from the origin corresponds to the amplitude. Samples belonging to the same stimulus consistently cluster within specific angular sectors, directly revealing the orientation observed by the mouse. 

For benchmarking, we map spike-rate vectors into 2D embeddings using either UMAP or ISOMAP to extract orientation information. First, we compute the average embedding position, $x_{\mathrm{avg}} = \mathbb{E}[x_i]$ and $y_{\mathrm{avg}} = \mathbb{E}[y_i]$, where $x_i$ and $y_i$ correspond to the embedding coordinates for the $i$-th sample. The orientation for each sample is then obtained via $\phi_i = \arctan2(y_i - y_{\mathrm{avg}}, x_i - x_{\mathrm{avg}})$.

We quantify the alignment between the true and blindly-estimated stimulus orientations using the circular correlation $r_{\mathrm{cc}}$~\cite{fisher1983correlation, jammalamadaka2001topics}. A fundamental challenge in neurobiology is that multiple factors of variation exist (e.g., the subject's excitement level or physical movement). Consequently, spike-rate vectors exhibit a high degree of sample-to-sample variability even for identical stimuli. Despite these challenges, our model achieves $r_{\mathrm{cc}}=0.851$, compared with $0.551$ for Isomap and $0.347$ for UMAP. This indicates that the symmetry prior successfully emphasizes orientation-dependent neural signals.

We also visually investigate the model's capacity for symmetry learning. Notably, symmetry discovery in this setting should be considered approximate, since it is not clear a priori whether physical stimulus rotations can be represented as linear operators acting on the spike-rate vector space. Under the linear action assumption, we evaluate the learned generator by exponentiating it to obtain a 30° rotation operator, which we then apply to spike-rate vectors from different orientations. In Figure~\ref{fig:transport}, solid dots correspond to real responses, while hollow dots are generated by applying the learned operator to samples with a relative -30° offset. Their overlap with target orientations indicates that the learned generator successfully approximates the expected symmetry action in the spike-rate vector space.

\subsection{Ablation studies}
We ablate stationarity, resolution, and InfoMax on GSN while keeping all other hyperparameters fixed (Table~\ref{tab:ablation}).

\textbf{Resolution is essential.}
Removing the resolution term (which minimizes total correlation) causes collapse on both Legendre and Gaussian signals ($S_{0.75}\approx 0$). This confirms that independence-promoting regularization is necessary to avoid trivial stationary solutions.

\textbf{Stationarity enforces global structure.}
Without stationarity, the model still recovers the Legendre generator ($S_{0.75}=0.997$). This is likely because localized Legendre bases provide strong locality cues. However, Gaussian recovery fails and signal correlation collapses in both cases, showing that stationarity is needed for globally consistent recovery.

\textbf{InfoMax improves stability.}
Removing InfoMax collapses the Legendre setting but improves Gaussian recovery. This indicates that the term mitigates degeneracy but can hurt when overweighted; thus, InfoMax is primarily useful as a moderate anti-collapse regularizer.

\begin{table}[htbp]
    \centering
    \caption{\textbf{Ablation on GSN.}
    Results on GSN variants, reporting band-limited similarity $S_{0.75}$ and recovery correlation $r$.}
    \label{tab:ablation}
    \setlength{\tabcolsep}{4pt}
    \renewcommand{\arraystretch}{0.92}
    \begin{tabular}{lcc}
        \toprule
        Removed term & Legendre $S_{0.75}/r$ & Gaussian $S_{0.75}/r$ \\
        \midrule
        None          & 0.964 / 0.981 & 0.961 / 0.850 \\
        Stationarity  & 0.997 / 0.012 & 0.781 / 0.031 \\
        Resolution    & 0.058 / 0.010 & $-0.048$ / 0.013 \\
        InfoMax       & 0.830 / 0.009 & \textbf{0.992 / 0.932} \\
        \bottomrule
    \end{tabular}
\end{table}
\section{Discussion}\label{sec:discussion}
In this study, we proposed a symmetry-based approach for recovering latent domain structures and signals. We supported the method theoretically through identifiability arguments and empirically through controlled and real-world experiments. Our benchmark experiments show that the method can recover latent domains and signals in a challenging regime that remains largely open: unknown linear corruptions that do not admit an explicit convolutional structure.

Moreover, moving beyond settings where symmetry actions are defined on structured observations, we show that the proposed method can discover latent symmetry actions from high-dimensional, unstructured inputs. This suggests that symmetry discovery can be used not only as a tool for exploiting known structure, but also as an exploratory tool for uncovering hidden organization in black-box systems. In this sense, the method can be applicable in domains such as biological neural recordings, where the underlying spatial or functional organization is not directly observed.

However, the proposed methodology has two limitations. First, in order to solve the otherwise intractable inverse problem, we inspire from the symmetry and locality properties of physical space, and assume that they manifest themselves statistically in the observed data. When this assumption is only partially satisfied, the model may map the latent domain to a representation that enforces these priors, thereby distorting aspects of the restored signal.

The second limitation is computational complexity. In particular, the required diagonalization procedures scale cubically and appear in two stages: symmetry-representation and output cross-covariance diagonalization. The cost of the former can be reduced by using a compressive learnable embedding, as demonstrated in our recovery of latent digits from bit-scrambled inputs. For the latter, randomized eigensolvers and scalable matrix factorization methods can be used to reduce computational costs.

Finally, we assumed that latent domains are either rectangular patches or tori. Extending the method to more general domain geometries is an important direction for future work. This will likely require discovering richer, possibly non-commutative symmetry structures, which remains a challenging but promising avenue.

\newpage

\bibliographystyle{unsrtnat}
\bibliography{biblio}

\newpage
\appendix

\section{Additional theoretical analysis}\label{additional-analysis}

\subsection{Resolution term and Markov structure}\label{analysis-of-resolution}
In this section, we analyze how the resolution term operates, restricting our setup to one-parameter groups for simplicity. In summary, we show that total correlation regularization, combined with architectural constraints, leads to the suppression of higher-order conditional mutual information terms, thereby favoring a low-order Markov structure.

\subsubsection{Total correlation decomposition}
In this step, we demonstrate that total correlation can be written as a sum of conditional mutual informations. Since an $n$-th order Markov structure satisfies $I(\rvector{X}|X_{[i:i+m]}) = 0$ for any $m > n$, this provides the foundational connection between the Markov structure and total correlation.

\begin{proposition}\label{prop:tc-cmi-decomposition}
Let \(\rvector{Y}=(Y_1,\dots,Y_d)\) be a \(d\)-dimensional random variable with finite entropies. Define
\[
    Y_{a:b}:=(Y_a,Y_{a+1},\dots,Y_b),
\]
with the convention that \(Y_{a:b}\) is empty whenever \(a>b\). Then the total correlation
\[
    \operatorname{TC}(\rvector{Y}) = \sum_{i=1}^d h(Y_i)-h(\rvector{Y})
\]
can be written as sum of pair-wise and conditional mutual informations
\[
    \operatorname{TC}(\rvector{Y}) = \sum_{i=2}^d I\!\left(Y_i;Y_{i-1}\right) 
    +
    \sum_{m=1}^{d-2} \sum_{i=m+2}^{d} I\!\left(Y_i; Y_{i-m-1}\mid Y_{i-m:i-1} \right).
\]
\end{proposition}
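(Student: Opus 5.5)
The identity is purely information-theoretic, and I will prove it by telescoping with the chain rule for differential entropy. The first step is the chain rule
\[
h(\rvector{Y})=h(Y_1)+\sum_{i=2}^d h(Y_i\mid Y_{1:i-1}),
\]
which is valid under the finite-entropy assumption. Substituting it into the definition of total correlation gives
\[
\operatorname{TC}(\rvector{Y})=\sum_{i=2}^d\bigl(h(Y_i)-h(Y_i\mid Y_{1:i-1})\bigr)=\sum_{i=2}^d I(Y_i;Y_{1:i-1}).
\]
This reduces the claim to expanding each term $I(Y_i;Y_{1:i-1})$ by the chain rule for mutual information.

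For fixed $i$, I peel off the past variables in order of increasing lag, starting with the nearest neighbor. Write $Y_{1:i-1}$ as $Y_{i-1}$, then $Y_{i-2}$, and so on down to $Y_1$. The chain rule for mutual information gives
\[
I(Y_i;Y_{1:i-1})=I(Y_i;Y_{i-1})+\sum_{m=1}^{i-2} I\!\left(Y_i;Y_{i-m-1}\mid Y_{i-m:i-1}\right).
\]
Each step adds the variable at lag $m+1$, conditioned on the block of lags $1,\dots,m$ that have already been included. As a check, the case $m=1$ gives $I(Y_i;Y_{i-2}\mid Y_{i-1})$, and the last term, $m=i-2$, gives $I(Y_i;Y_1\mid Y_{2:i-1})$, so the whole block $Y_{1:i-1}$ is exhausted.

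It remains to sum over $i=2,\dots,d$ and exchange the order of the double sum. The conditional terms are indexed by pairs $(i,m)$ with $2\le i\le d$ and $1\le m\le i-2$. This is the same set of pairs as $1\le m\le d-2$ with $m+2\le i\le d$, which is exactly the index range in the statement. For $i=2$ the inner sum is empty, which is consistent with the convention that $Y_{a:b}$ is empty when $a>b$.

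The only delicate step is justifying the chain rules for differential entropy and mutual information. Both require the conditional entropies to be finite and well defined. The hypothesis of finite entropies, applied to all marginals and sub-blocks, is what licenses them; alternatively, one can define the mutual informations as Kullback--Leibler divergences, for which the chain rule holds in general. The rest of the argument is bookkeeping.
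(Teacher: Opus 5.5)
Your proof is correct and is essentially the paper's argument. The paper defines the truncated chain-rule sums $h^{(m)}(\rvector{Y})=\sum_{i=1}^d h(Y_i\mid Y_{\max(1,i-m):i-1})$ and telescopes $\operatorname{TC}(\rvector{Y})=h^{(0)}(\rvector{Y})-h^{(d-1)}(\rvector{Y})$ over the order $m$; this is the same telescoping of the conditional entropies $h(Y_i\mid Y_{i-m:i-1})$ that your per-$i$ mutual-information chain rule performs, with the double sum over $(i,m)$ taken in the opposite order (grouping by $m$ lets the paper read each block as the gap $h^{(m)}(\rvector{Y})-h^{(m+1)}(\rvector{Y})$ between successive Markov-order approximations, while grouping by $i$ makes the exhaustion of each $Y_{1:i-1}$ more transparent).
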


\begin{proof}
For \(m=0,\dots,d-1\), define the \(m\)-th order chain-rule approximation
\[
    h^{(m)}(\rvector{Y}) := \sum_{i=1}^{d} h\!\left(Y_i \mid Y_{\max(1,i-m):i-1}\right).
\]
With this convention,
\[
    h^{(0)}(\rvector{Y}) = \sum_{i=1}^d h(Y_i),
\]
while
\[
    h^{(d-1)}(\rvector{Y}) = \sum_{i=1}^d h(Y_i\mid Y_{1:i-1}) = h(\rvector{Y})
\]
by the chain rule. Therefore,
\[
    \operatorname{TC}(\rvector{Y}) = h^{(0)}(\rvector{Y})-h^{(d-1)}(\rvector{Y})
    =
    \sum_{m=0}^{d-2} \left[ h^{(m)}(\rvector{Y}) - h^{(m+1)}(\rvector{Y}) \right].
\]
For each \(m\), the difference is
\[
\begin{aligned}
    h^{(m)}(\rvector{Y})-h^{(m+1)}(\rvector{Y}) &=
    \sum_{i=m+2}^{d} \Big[ h(Y_i\mid Y_{i-m:i-1}) - h(Y_i\mid Y_{i-m-1:i-1}) \Big] \\
    &= \sum_{i=m+2}^{d} I\!\left(Y_i; Y_{i-m-1} \mid Y_{i-m:i-1} \right).
\end{aligned}
\]
Substituting this into the sum gives
\[
    \operatorname{TC}(\rvector{Y}) = 
    \sum_{m=0}^{d-2} \sum_{i=m+2}^{d} I\!\left(Y_i; Y_{i-m-1} \mid Y_{i-m:i-1} \right).
\]
If we isolate the term $m=0$ we have;
\[
    \operatorname{TC}(\rvector{Y})
    = 
    \sum_{i=2}^{d} I\!\left(Y_i; Y_{i-1} \right)
    +
    \sum_{m=1}^{d-2} \sum_{i=m+2}^{d} I\!\left(Y_i; Y_{i-m-1} \mid Y_{i-m:i-1} \right).
\]
\end{proof}
  
\subsubsection{Architectural lower bound on local dependence}
Without the constraints induced by the architecture, total correlation tends to eliminate all dependencies between the components of the model output. In this section, we show that complete elimination is impossible due to the architectural constraints imposed by the lifting operation. Specifically, the lifting operation places a lower bound on mutual information depending on their distance in the model output representation, which is induced by the discovered symmetry group.  

Before proceeding with the proof, we define the \emph{entropy power} $N_h(Y) := \frac{1}{2\pi e}\exp(2h(Y))$ for brevity. Specifically, for a scalar random variable \(Y\), \(N_h(Y)\) is the variance of a Gaussian random variable having the differential entropy $h(Y)$.

\begin{lemma}[Lifting-induced mutual information lower bound]
\label{lem:lifting-mi-bound}
Let \(\hat\rho:\mathbb{R}\to SO(d)\) be an orthogonal Lie group representation,
and let
\[
    Y_n = \vs{\Phi}^{\top}\hat\rho(-t_n)\rvector{X}, \qquad \|\vs{\Phi}\|_2=1, \qquad \mathbb{E}\|\rvector{X}\|_2^2=1 .
\]
Then, for any \(i,j\) such that \(h(Y_j)\) is finite,
\[
    I(Y_i;Y_j) \geq \frac12 \log \frac{N_h(Y_j)}{\left\| \hat\rho(t_i-t_j)-\identity\right\|_F^2}.
\]
\end{lemma}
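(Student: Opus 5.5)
Write \(Y_i = Y_j + D\), where
\[
D := \vs{\Phi}^\top\bigl(\hat\rho(-t_i)-\hat\rho(-t_j)\bigr)\rvector{X} = \vs{\Phi}^\top\bigl(\hat\rho(t_j-t_i)-\identity\bigr)\hat\rho(-t_j)\rvector{X} .
\]
The plan has three steps. First, rewrite the mutual information in terms of how well \(Y_j\) can be predicted from \(Y_i\). Second, bound that prediction error by the second moment of \(D\). Third, bound \(\mathbb{E}D^2\) by the operator difference. Concretely, we use
\[
I(Y_i;Y_j) = h(Y_j) - h(Y_j\mid Y_i) .
\]
Conditioning on \(Y_i\), the translation invariance of differential entropy gives \(h(Y_j\mid Y_i) = h(Y_j - Y_i\mid Y_i) = h(-D\mid Y_i)\). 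Since conditioning reduces entropy, this is at most \(h(D)\).

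Next, apply the maximum-entropy bound: a scalar with second moment \(\sigma^2\) has \(h \le \tfrac12\log(2\pi e\,\sigma^2)\). This gives \(h(D)\le \tfrac12\log(2\pi e\,\mathbb{E}D^2)\). Combining with \(h(Y_j) = \tfrac12\log(2\pi e\, N_h(Y_j))\) from the definition of entropy power yields
\[
I(Y_i;Y_j) \ge \tfrac12\log\frac{N_h(Y_j)}{\mathbb{E}D^2} .
\]
If \(\mathbb{E}D^2=0\), then \(Y_i=Y_j\) almost surely and the mutual information is infinite, so the bound holds trivially. The argument also needs \(h(D)\) to be well defined. If it is \(-\infty\), the conditional entropy is \(-\infty\) and the inequality is again trivial, so this case needs only a remark.

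It remains to show \(\mathbb{E}D^2 \le \|\hat\rho(t_i-t_j)-\identity\|_F^2\). Set \(\mathbf M := \hat\rho(t_j-t_i)-\identity\) and \(\mathbf V := \hat\rho(-t_j)\rvector{X}\). Orthogonality gives \(\mathbb{E}\|\mathbf V\|^2 = \mathbb{E}\|\rvector{X}\|^2 = 1\). By Cauchy–Schwarz,
\[
D^2 = (\vs\Phi^\top \mathbf M\mathbf V)^2 \le \|\mathbf M^\top\vs\Phi\|^2\|\mathbf V\|^2 \le \|\mathbf M\|_2^2\,\|\mathbf V\|^2 \le \|\mathbf M\|_F^2\,\|\mathbf V\|^2 ,
\]
using \(\|\vs\Phi\|=1\). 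Taking expectations gives \(\mathbb{E}D^2\le\|\mathbf M\|_F^2\).

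Finally, convert \(\mathbf M\) to the form in the statement. Because \(\hat\rho\) is orthogonal, \(\hat\rho(t_j-t_i) = \hat\rho(t_i-t_j)^\top\). Hence \(\mathbf M = (\hat\rho(t_i-t_j)-\identity)^\top\), and the Frobenius norm is transpose-invariant. This gives exactly the stated bound.

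The main obstacle is the entropy bookkeeping rather than the algebra. The step \(h(Y_j\mid Y_i)=h(-D\mid Y_i)\le h(D)\) must be justified for general, possibly non-Gaussian, distributions. We also have to handle degenerate cases in which \(D\) has no density, or the right-hand side is negative, in which case the bound holds trivially since \(I\ge0\). We would dispatch these by noting that the inequality is vacuous or trivially true whenever a quantity is \(\pm\infty\), and otherwise apply the standard chain-rule identities.
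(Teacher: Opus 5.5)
Your proof is correct and follows the same route as the paper. Both write $I(Y_i;Y_j)=h(Y_j)-h(Y_j\mid Y_i)$, bound the conditional entropy by the Gaussian entropy at second moment $\mathbb{E}|Y_j-Y_i|^2$, and bound that moment by $\|\hat\rho(t_i-t_j)-\identity\|_F^2$. The differences are minor. You control $h(Y_j\mid Y_i)$ through translation invariance and $h(D\mid Y_i)\le h(D)$, whereas the paper uses the conditional-variance step $\mathbb{E}\operatorname{Var}(Y_j\mid Y_i)\le\mathbb{E}|Y_j-Y_i|^2$. You also derive the second-moment bound (Cauchy--Schwarz plus orthogonality), which the paper only asserts with ``as before.''
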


\begin{proof}
As before,
\[
    \mathbb{E}|Y_j-Y_i|^2 \leq \left\| \hat\rho(t_i-t_j)-\identity \right\|_F^2 .
\]
Let
\[
    \varepsilon_{ij} := \left\| \hat\rho(t_i-t_j)-\identity \right\|_F^2 .
\]
Moreover,
\[
    \mathbb{E}\operatorname{Var}(Y_j\mid Y_i) \leq \mathbb{E}|Y_j-Y_i|^2 \leq \varepsilon_{ij}.
\]
By the maximum-entropy bound for a scalar random variable with fixed variance,
\[
    h(Y_j\mid Y_i) \leq \frac12\log(2\pi e\,\varepsilon_{ij}).
\]
Therefore,
\[
\begin{aligned}
    I(Y_i;Y_j) &= h(Y_j)-h(Y_j\mid Y_i) \\
    &\geq h(Y_j) - \frac12\log(2\pi e\,\varepsilon_{ij})  \\
    &= \frac12 \log \frac{\frac{1}{2\pi e}\exp(2h(Y_j))}{\varepsilon_{ij}} \\
    &= \frac12 \log \frac{N_h(Y_j)}{\left\| \hat\rho(t_i-t_j)-\identity \right\|_F^2} .
\end{aligned}
\]
\end{proof}

\subsubsection{Suppression of higher-order conditional dependencies}
In this section, we show that total correlation minimization suppresses high-order conditional dependencies, thereby favoring a finite-order Markov structure.

\begin{proposition}[Total correlation combined with lifting suppresses higher-order conditional dependencies]
\label{prop:resolution-suppresses-higher-order}
Let \(\rvector{Y}=(Y_1,\dots,Y_d)\) be the lifted output evaluated at equally spaced coordinates \(t_i=i\Delta\), where $\Delta \in \mathbb{R}^{+}$. Define the higher-order residual
\[
    R_{>1}(\rvector{Y}) := \sum_{m=1}^{d-2} \sum_{i=m+2}^{d} I\!\left(Y_i; Y_{i-m-1} \mid Y_{i-m:i-1} \right).
\]
Assume that the hypotheses of Lemma~\ref{lem:lifting-mi-bound} hold and that the entropy powers have a minima $\underline N$, such 
that $N_h(Y_i) \geq \underline N, \qquad i=2,\dots,d$. 
Then
\[
    R_{>1}(\rvector{Y}) \leq \operatorname{TC}(\rvector{Y}) - \frac{d-1}{2} \log \frac{\underline N}{\left\|\hat\rho(\Delta)-\identity\right\|_F^2}.
\]
In particular, if the lifted process is shift-invariant so that \(N_h(Y_i)=N_0\) for all \(i\), then we may take \(\underline N=N_0\).
\end{proposition}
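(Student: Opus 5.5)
The plan is to combine the total-correlation decomposition of Proposition~\ref{prop:tc-cmi-decomposition} with the nearest-neighbour lower bound of Lemma~\ref{lem:lifting-mi-bound}. Proposition~\ref{prop:tc-cmi-decomposition} gives exactly
\[
    \operatorname{TC}(\rvector{Y}) = \sum_{i=2}^d I(Y_i;Y_{i-1}) + R_{>1}(\rvector{Y}),
\]
so that $R_{>1}(\rvector{Y}) = \operatorname{TC}(\rvector{Y}) - \sum_{i=2}^d I(Y_i;Y_{i-1})$. The claimed inequality is therefore equivalent to a lower bound on the nearest-neighbour sum,
\[
    \sum_{i=2}^d I(Y_i;Y_{i-1}) \ge \frac{d-1}{2}\log\frac{\underline N}{\|\hat\rho(\Delta)-\identity\|_F^2}.
\]
Everything reduces to bounding each of the $d-1$ pairwise terms from below uniformly.

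For each $i=2,\dots,d$, I will apply Lemma~\ref{lem:lifting-mi-bound} with the pair $(i-1,i)$, taking $j=i$ as the conditioned variable whose entropy power appears. Two points need checking.
\begin{itemize}
\item The Frobenius factor is independent of $i$. Since $t_{i-1}-t_i=-\Delta$, we have $\|\hat\rho(-\Delta)-\identity\|_F = \|\hat\rho(\Delta)^{\top}-\identity\|_F = \|\hat\rho(\Delta)-\identity\|_F$, because $\hat\rho(-\Delta)=\hat\rho(\Delta)^{-1}=\hat\rho(\Delta)^{\top}$ for an orthogonal representation and the Frobenius norm is transpose-invariant.
\item The numerator can be replaced by its minimum. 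The hypothesis $N_h(Y_i)\ge \underline N$ for $i=2,\dots,d$ gives $I(Y_{i-1};Y_i)\ge \tfrac12\log\bigl(\underline N/\|\hat\rho(\Delta)-\identity\|_F^2\bigr)$, since $\log$ is monotone. This is why the hypothesis only involves the indices $i\ge 2$: each of them is the $j$-index of exactly one neighbouring pair.
\end{itemize}
Finiteness of $h(Y_i)$, which Lemma~\ref{lem:lifting-mi-bound} requires, follows from $N_h(Y_i)\ge\underline N>0$ together with finite variance (bounded by $\mathbb{E}\|\rvector{X}\|^2=1$). Summing over $i$ gives the $(d-1)$-fold bound, and substituting it into the decomposition yields the statement.

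For the final remark, shift-invariance of the lifted process makes every $Y_i$ equal in distribution, so $h(Y_i)$, and hence $N_h(Y_i)=N_0$, is constant. One may then set $\underline N=N_0$.

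The only delicate point is the degenerate case $\hat\rho(\Delta)=\identity$, where the right-hand side is $-\infty$. There the inequality holds trivially, or one adopts the convention that it is vacuous, and I will note this explicitly. The rest is bookkeeping; the substantive content is already carried by Lemma~\ref{lem:lifting-mi-bound}.
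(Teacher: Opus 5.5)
Your proof is correct and follows the paper's argument: rewrite $R_{>1}$ via Proposition~\ref{prop:tc-cmi-decomposition}, bound each nearest-neighbour term with Lemma~\ref{lem:lifting-mi-bound}, and replace $N_h(Y_i)$ by $\underline N$; your check that $\|\hat\rho(-\Delta)-\identity\|_F=\|\hat\rho(\Delta)-\identity\|_F$ makes explicit a step the paper leaves implicit. One small correction to your closing remark: $\hat\rho(\Delta)=\identity$ forces $Y_i=Y_{i-1}$ almost surely, so $h(\rvector{Y})=-\infty$ and the finite-entropy hypothesis of Proposition~\ref{prop:tc-cmi-decomposition} already excludes this case; moreover, an inequality whose right-hand side is $-\infty$ would fail rather than hold trivially.
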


\begin{proof}
By Proposition~\ref{prop:tc-cmi-decomposition}, the total correlation decomposes as
\[
    \operatorname{TC}(\rvector{Y}) = \sum_{i=2}^{d} I(Y_i;Y_{i-1}) + R_{>1}(\rvector{Y}) .
\]
Therefore,
\[
    R_{>1}(\rvector{Y}) = \operatorname{TC}(\rvector{Y}) - \sum_{i=2}^{d} I(Y_i;Y_{i-1}).
\]
Since \(t_i=i\Delta\), Lemma~\ref{lem:lifting-mi-bound} gives, for every \(i=2,\dots,d\),
\[
    I(Y_i;Y_{i-1}) \geq \frac12 \log \frac{N_h(Y_i)}{\left\|\hat\rho(\Delta)-\identity\right\|_F^2}.
\]
Substituting this lower bound into the identity for \(R_{>1}\) gives
\[
    R_{>1}(\rvector{Y}) \leq \operatorname{TC}(\rvector{Y}) - \frac12 \sum_{i=2}^{d} \log \frac{ N_h(Y_i)}{\left\|\hat\rho(\Delta)-\identity \right\|_F^2}.
\]
Using \(N_h(Y_i)\geq \underline N\) for \(i=2,\dots,d\), we obtain
\[
    R_{>1}(\rvector{Y}) \leq \operatorname{TC}(\rvector{Y}) - \frac{d-1}{2} \log \frac{\underline N}{\left\|\hat\rho(\Delta)-\identity\right\|_F^2}.
\]
If the lifted process is shift-invariant, then all one-dimensional marginals are identical. Hence \(N_h(Y_i)=N_0\), and we may take \(\underline N=N_0\).
\end{proof}

\begin{corollary}[First-order Markov limit under optimal TC minimization]
\label{cor:first-order-markov-limit}
Under the assumptions of Proposition~\ref{prop:resolution-suppresses-higher-order}, suppose additionally that the lifted process is shift-invariant, so that
\[
    N_h(Y_i)=N_0, \qquad i=1,\dots,d .
\]
Define
\[
    B_\Delta := \frac{d-1}{2} \log \frac{N_0}{\left\|\hat\rho(\Delta)-\identity\right\|_F^2} .
\]
If TC minimization drives the lifted representation to the architectural lower floor,
\[
    \operatorname{TC}(\rvector{Y}) - B_\Delta \longrightarrow 0,
\]
then
\[
    R_{>1}(\rvector{Y}) \longrightarrow 0 .
\]
Consequently, every higher-order conditional mutual information term in \(R_{>1}\) vanishes in the limit, and the lifted coordinates satisfy the first-order Markov conditional-independence relations along the recovered ordering.
\end{corollary}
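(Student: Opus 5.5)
The plan is to read the corollary directly off the inequality in Proposition~\ref{prop:resolution-suppresses-higher-order}, and then to use nonnegativity of conditional mutual information to push the conclusion down to each individual term.

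\textbf{Step 1: two-sided bound on $R_{>1}$.} Under the added shift-invariance assumption we take $\underline N=N_0$ in Proposition~\ref{prop:resolution-suppresses-higher-order}. With $B_\Delta$ as defined in the statement, this gives
\[
R_{>1}(\rvector{Y})\le \operatorname{TC}(\rvector{Y})-B_\Delta .
\]
Each summand $I(Y_i;Y_{i-m-1}\mid Y_{i-m:i-1})$ is a conditional mutual information, hence nonnegative. Finite entropies are assumed, so these quantities are well defined. Therefore
\[
0\le R_{>1}(\rvector{Y})\le \operatorname{TC}(\rvector{Y})-B_\Delta .
\]

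\textbf{Step 2: pass to the limit.} If $\operatorname{TC}(\rvector{Y})-B_\Delta\to 0$ along the optimization, the squeeze in Step 1 gives $R_{>1}\to 0$. The argument implicitly needs $\|\hat\rho(\Delta)-\identity\|_F>0$, so that $B_\Delta$ is finite; otherwise the bound is vacuous. I will state this as a standing assumption, since $\hat\rho(\Delta)=\identity$ would make all lifted coordinates identical.

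The same remark covers a subtlety in the limit itself. If the representation varies along a sequence (for instance $N_0$ changes during training), we should read the limit along a sequence of representations. Step 1 holds pointwise for each member of the sequence, so the squeeze still applies.

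\textbf{Step 3: vanishing of each term and the Markov interpretation.} $R_{>1}$ is a finite sum of nonnegative terms, so each term is bounded by $R_{>1}$ and therefore also tends to zero. Vanishing of $I(Y_i;Y_{i-m-1}\mid Y_{i-m:i-1})$ for all $m\ge1$ is exactly the conditional independence $Y_i\perp Y_{i-m-1}\mid Y_{i-m:i-1}$.

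To show that these relations together give the first-order Markov property $Y_i\perp Y_{1:i-2}\mid Y_{i-1}$, we induct on the length of the past, using the chain rule for mutual information:
\[
I(Y_i;Y_{1:i-2}\mid Y_{i-1})=\sum_{m=1}^{i-2} I(Y_i;Y_{i-m-1}\mid Y_{i-m:i-1}).
\]
Every summand on the right appears in $R_{>1}$, so the left side also tends to zero.

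\textbf{Main obstacle.} The only delicate point is interpretive rather than technical. "Vanishes in the limit" should be phrased as convergence of information quantities, and the exact Markov relations hold only at the limit point, provided the limit is attained. I would state the conclusion in that form.
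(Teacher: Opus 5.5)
Your proposal is correct and follows the paper's proof: both combine Proposition~\ref{prop:resolution-suppresses-higher-order} (with $\underline N=N_0$) with nonnegativity of conditional mutual information to get $0\le R_{>1}(\rvector{Y})\le \operatorname{TC}(\rvector{Y})-B_\Delta$, then apply the squeeze, then note that a vanishing finite sum of nonnegative terms forces each term to vanish. Your chain-rule identity, which writes $I(Y_i;Y_{1:i-2}\mid Y_{i-1})$ as a sum of terms from $R_{>1}$, is a worthwhile addition: it proves the first-order Markov conclusion that the paper only asserts.
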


\begin{proof}
By Proposition~\ref{prop:resolution-suppresses-higher-order}, shift-invariance
gives
\[
    0 \leq R_{>1}(\rvector{Y}) \leq \operatorname{TC}(\rvector{Y}) - B_\Delta .
\]
The left inequality holds because \(R_{>1}\) is a sum of conditional mutual information terms. Hence, if
\[
    \operatorname{TC}(\rvector{Y}) - B_\Delta \to 0,
\]
the squeeze theorem gives
\[
    R_{>1}(\rvector{Y}) \to 0 .
\]
Since \(R_{>1}\) is a finite sum of nonnegative conditional mutual informations, each individual higher-order term also vanishes.
\end{proof}

\subsection{Resolution term and compact recovery kernels}
In this section, we show that if both the latent field and the model output constitute a low-order Markov random field, the recovery filter is inherently compact. To demonstrate this, we leverage the Hessian sparsity property of Markov random fields. Specifically, a $p$-th order MRF has a $p$-banded Hessian matrix. Therefore, by assuming that both the input and output are Markov fields of limited (though not necessarily identical) orders, and by investigating the Hessians of both fields, we conclude that the filter is compact.

\subsubsection{Hessian transformation under linear maps}
We first record how the energy Hessian of a Markov random field transforms under an invertible linear map. Let \(\rvector{F}\in\mathbb{R}^N\) be a real-valued discretized Markov random field, and let
\[
    \rvector{Y}=K\rvector{F},
\]
where \(K:\mathbb{R}^N\to\mathbb{R}^N\) is invertible.

\begin{lemma}[Hessian relation for a transformed Markov random field]
\label{lem:hessian-relation-for-markovs}
Let \(\rvector{F}\in\mathbb{R}^N\) have density
\[
    p_F(\mathbf f) \propto \exp[-E_F(\mathbf f)] ,
\]
where \(E_F\) is twice differentiable. Let
\[
    \rvector{Y}=K\rvector{F},
\]
with \(K\) invertible, and assume that \(\rvector{Y}\) has twice differentiable
energy \(E_Y\). Then
\[
    \nabla^2 E_F(\mathbf f) = K^\top \nabla^2 E_Y(K\mathbf f) K .
\]
\end{lemma}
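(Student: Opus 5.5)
The plan is to use the change-of-variables formula for densities under the invertible linear map \(K\), take logarithms to obtain a relation between the energies, and then differentiate twice with the chain rule.

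\emph{Step 1: relate the densities.} Since \(\rvector{Y}=K\rvector{F}\) with \(K\) invertible, the density of \(\rvector{Y}\) is
\[
    p_Y(\mathbf y)=p_F(K^{-1}\mathbf y)\,|\det K|^{-1}.
\]
Evaluating at \(\mathbf y=K\mathbf f\) gives \(p_F(\mathbf f)=|\det K|\,p_Y(K\mathbf f)\).

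\emph{Step 2: pass to energies.} Writing \(p_F\propto e^{-E_F}\) and \(p_Y\propto e^{-E_Y}\), Step~1 becomes
\[
    E_F(\mathbf f)=E_Y(K\mathbf f)+c,
\]
where \(c\) collects \(\log|\det K|\) and the two normalizing constants. The constant \(c\) does not depend on \(\mathbf f\). Here I am reading ``\(\rvector{Y}\) has energy \(E_Y\)'' as \(p_Y\propto e^{-E_Y}\), so \(E_Y\) is determined up to an additive constant. That ambiguity is harmless because only second derivatives enter the claim.

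\emph{Step 3: differentiate.} The map \(\mathbf f\mapsto K\mathbf f\) is linear, so the chain rule gives
\[
    \nabla E_F(\mathbf f)=K^\top\nabla E_Y(K\mathbf f).
\]
Differentiating once more, with no second-derivative term from the inner map since it is linear, yields
\[
    \nabla^2E_F(\mathbf f)=K^\top\nabla^2E_Y(K\mathbf f)\,K.
\]
This step uses the assumed twice-differentiability of \(E_Y\).

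The only point needing care is the bookkeeping in Step~2: the Jacobian factor \(|\det K|\) is constant because \(K\) is linear, so it vanishes under differentiation. The rest is routine. I expect no real obstacle.
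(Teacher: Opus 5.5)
Your proposal is correct and follows the same route as the paper: the change-of-variables formula gives $E_F(\mathbf f)=E_Y(K\mathbf f)+\textnormal{constant}$, and differentiating twice through the linear map $\mathbf f\mapsto K\mathbf f$ yields $K^\top\nabla^2E_Y(K\mathbf f)K$. The only difference is that you spell out what the paper's ``constant'' contains, namely $\log|\det K|$ and the two normalizers.
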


\begin{proof}
Since \(\mathbf y=K\mathbf f\), the change-of-variables formula gives
\[
    E_F(\mathbf f) = E_Y(K\mathbf f) + \textnormal{constant}.
\]
Differentiating twice with respect to \(\mathbf f\) gives
\[
    \nabla^2 E_F(\mathbf f) = K^\top \nabla^2 E_Y(K\mathbf f) K .
\]
\end{proof}

\subsubsection{Banded Hessians imply compact filters}
Here, we show how the banded Hessians of Markov random fields force the convolutional filter to take a compact form. Therefore, this proposition serves as the final step in demonstrating how the resolution term induces compactness for the recovery kernel.

\begin{proposition}
    Assume that real-valued N-dimensional random vectors $\rvector{F}$, $\rvector{Y} = K\rvector{F}$ satisfy p-order Markov field properties while $K: \mathbb{R}^N \to \mathbb{R}^N$ is invertible. And assume that diagonal entries of \(\nabla^2 E_Y(K\mathbf f)\) 
    can be varied independently. Then each row of \(K\) has compact support of diameter at most \(p\).
\end{proposition}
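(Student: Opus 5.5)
The plan is to apply Lemma~\ref{lem:hessian-relation-for-markovs} and read off sparsity patterns on both sides. By the lemma,
\[
    H_F(\mathbf f) := \nabla^2 E_F(\mathbf f) = K^\top H_Y(K\mathbf f)\, K ,
\]
where \(H_Y := \nabla^2 E_Y\). The \(p\)-order Markov property, via Hammersley--Clifford, makes both \(H_F\) and \(H_Y\) \(p\)-banded. That is, \((H_F)_{ab}=0\) and \((H_Y)_{ab}=0\) whenever \(|a-b|>p\), with cyclic distance in the torus setting. So the task reduces to showing that if \(K^\top D K\) stays banded as \(D\) ranges over the admissible Hessians \(H_Y\), then the rows of \(K\) are compactly supported.

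The first step is to isolate one row of \(K\). Write the bilinear form entrywise:
\[
    (H_F)_{ab} = \sum_{n,m} K_{na} (H_Y)_{nm} K_{mb}.
\]
Using the hypothesis that the diagonal entries of \(H_Y(K\mathbf f)\) can be varied independently, I differentiate with respect to a single diagonal entry \((H_Y)_{nn}\). This can be done either by choosing two admissible configurations that differ only in that entry, or by treating the entry as a free parameter. The constraint \((H_F)_{ab}=0\) holds for every such variation, and the off-diagonal entries of \(H_Y\) are held fixed. Subtracting the two configurations kills all the off-diagonal contributions and leaves
\[
    K_{na}K_{nb} = 0 \qquad \text{for all } n \text{ and all } |a-b|>p .
\]

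The second step is to conclude from this product condition. Fix a row \(n\) and let \(a_{\min}\) and \(a_{\max}\) be indices with \(K_{na_{\min}}\neq 0\) and \(K_{na_{\max}}\neq 0\), chosen farthest apart. The product condition forces \(|a_{\max}-a_{\min}|\le p\), so the support of row \(n\) has diameter at most \(p\). Each row is nonzero because \(K\) is invertible, so the support is nonempty.

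The main obstacle is justifying the ``vary one diagonal entry while holding everything else fixed'' step. \(H_Y\) is evaluated at \(K\mathbf f\) and is tied to \(H_F\) through the identity, so one must check that the banding constraints on \(H_F\) hold for each admissible choice independently. I would handle this by reading the hypothesis as saying that the set of attainable \(H_Y(K\mathbf f)\), as \(\mathbf f\) and the field vary within the model class, contains pairs differing in a single diagonal entry, with \(H_F\) banded for each. Equivalently, the hypothesis says that \(\operatorname{span}\{e_ne_n^\top\}\) lies in the span of the differences of attainable Hessians, and linearity of \(D\mapsto K^\top D K\) then gives the product condition directly.
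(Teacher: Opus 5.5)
Your proposal is correct and follows the paper's proof: apply the Hessian-transformation lemma, perturb a single diagonal entry of $\nabla^2 E_Y(K\mathbf f)$ with the off-diagonal entries held fixed to get $K_{la}K_{lb}=0$ for $|a-b|>p$, and read off the row-support bound. Your reading of the independent-variation hypothesis, as admissible Hessians that differ in one diagonal entry with both sides banded, states explicitly what the paper's proof uses implicitly.
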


\begin{proof}
    For brevity, lets define $A(f) := \nabla^2 E_F(\mathbf f)$ and $B(f) := \nabla^2 E_Y(K\mathbf f)$. Then Hessian transformation in lemma~\ref{lem:hessian-relation-for-markovs} can be written as; 
    \[
    A(f) = K^\top B(f) K .
    \]
    Now, based on the assumption, we can vary $l$'th diagonal component of $B(f)$ independently. Then;
    \[
    {[K^\top]}_{il} K_{lj} \delta B(f)_l = 0, \qquad \text{for } |i-j| > p. 
    \] 
    This implies that $K_{li} K_{lj} = 0$ for any $|i-j| > p$ for any $l$. Therefore, each row of \(K\) has compact support of diameter at most \(p\).
\end{proof}

\paragraph{Remark.}
Aside from the Markov properties of both the latent field and the model output, this proof assumes that the diagonal entries of \(\nabla^2 E_Y(K\mathbf f)\) can be varied independently (i.e., non-degenerate curvature variation). Interestingly, this corresponds to non-Gaussianity for 0-th order Markov fields, which is the principle behind Independent Component Analysis (ICA). For higher-order Markov fields, the non-Gaussianity condition shifts to possessing local curvature variations. Therefore, this can be viewed as a finite-range Markov analogue of the ICA principle.

\subsection{Lifting and oracle deconvolution}
\label{app:lifting-oracle-deconvolution}

\subsubsection{Oracle deconvolution on the torus}
\begin{proposition}[Deconvolution on the torus]
\label{prop:oracle-deconvolution}
Let \(G=\mathbb T^P\), and assume that \(f\in\mathcal F_B\) is bandlimited over the Fourier modes of the torus \(\mathcal B\subset\mathbb Z^P\). Suppose the augmented observation satisfies
\[
    \ve z
    =
    \int_{\mathbb T^P} f(\ve s)\rho(\ve s)\ve h_0\,d\mu(\ve s),
\]
where \(\rho\) is diagonalized by the basis \(\{\ve u_{\ve m}\}_{\ve m\in\mathcal B}\), with $\rho(\ve t)\ve u_{\ve m} = e^{-i\ve m\cdot\ve t}\ve u_{\ve m}$.
If \(h_{\ve m}:=\ve u_{\ve m}^\dagger\ve h_0\neq 0\) for all \(\ve m\in\mathcal B\), then there exists a filter \(\ve w_\star\) such that
\[
    f(\ve t)=\ve w_\star^\dagger\rho(-\ve t)\ve z .
\]
\end{proposition}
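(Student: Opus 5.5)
We will work entirely in the eigenbasis \(\{\ve u_{\ve m}\}_{\ve m\in\mathcal B}\) of \(\rho\) and build \(\ve w_\star\) mode by mode.

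\textbf{Spectral form of the observation.} Substituting \(\ve h_0=\sum_{\ve m}h_{\ve m}\ve u_{\ve m}\) (plus any component orthogonal to the span, which we carry along) into the forward model gives
\[
\ve z=\sum_{\ve m\in\mathcal B}h_{\ve m}\Bigl(\int_{\mathbb T^P} f(\ve s)e^{-i\ve m\cdot\ve s}\,d\mu(\ve s)\Bigr)\ve u_{\ve m}+(\text{orthogonal part}).
\]
With the normalization \(\mu(\mathbb T^P)=1\), the integral is the Fourier coefficient \(\hat f_{\ve m}\). Hence \(\ve u_{\ve m}^\dagger\ve z=h_{\ve m}\hat f_{\ve m}\) for every \(\ve m\in\mathcal B\); for this we use that the \(\ve u_{\ve m}\) are orthonormal, which holds because \(\rho\) is orthogonal and hence unitarily diagonalizable.

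\textbf{Lifting in the eigenbasis.} Since \(\rho(-\ve t)^\dagger=\rho(\ve t)\), we have
\[
\ve w^\dagger\rho(-\ve t)\ve z=(\rho(\ve t)\ve w)^\dagger\ve z=\sum_{\ve m}\overline{w_{\ve m}}\,e^{i\ve m\cdot\ve t}\,\ve u_{\ve m}^\dagger\ve z,
\qquad w_{\ve m}:=\ve u_{\ve m}^\dagger\ve w .
\]
Here we take \(\ve w\) to lie in \(\operatorname{span}\{\ve u_{\ve m}:\ve m\in\mathcal B\}\), which kills the orthogonal part. Substituting the previous step yields \(\sum_{\ve m}\overline{w_{\ve m}}h_{\ve m}\hat f_{\ve m}e^{i\ve m\cdot\ve t}\).

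\textbf{Choice of filter.} Set
\[
\ve w_\star:=\sum_{\ve m\in\mathcal B}\overline{h_{\ve m}}^{\,-1}\ve u_{\ve m}.
\]
This is well defined exactly because \(h_{\ve m}\neq0\) on \(\mathcal B\). Then \(\overline{w_{\ve m}}h_{\ve m}=1\), and the lifted output equals \(\sum_{\ve m\in\mathcal B}\hat f_{\ve m}e^{i\ve m\cdot\ve t}=f(\ve t)\) by Fourier inversion of the band-limited \(f\).

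\textbf{Main obstacle: sign conventions and realness.} The delicate step is the bookkeeping of conventions. The convention \(\rho(\ve t)\ve u_{\ve m}=e^{-i\ve m\cdot\ve t}\ve u_{\ve m}\) must produce \(e^{+i\ve m\cdot\ve t}\) in the reconstruction, matching the inversion formula. If it comes out as \(e^{-i\ve m\cdot\ve t}\) instead, we would reindex \(\ve m\to-\ve m\), using that \(\mathcal B\) is symmetric for real \(f\).

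We also want \(\ve w_\star\) to be real, so that the resolving filter is admissible. Real \(\rho\) pairs \(\ve u_{-\ve m}=\overline{\ve u_{\ve m}}\), and real \(\ve h_0\) gives \(h_{-\ve m}=\overline{h_{\ve m}}\). The coefficients of \(\ve w_\star\) therefore satisfy the same conjugate symmetry, so \(\ve w_\star\) is real and \(\dagger\) reduces to \(\top\).
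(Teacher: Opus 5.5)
Your proof is correct and is essentially the paper's argument: both rest on the identity $\ve u_{\ve m}^\dagger\ve z=h_{\ve m}\tilde f_{\ve m}$ and produce the same filter, since your coefficient $\overline{h_{\ve m}}^{-1}$ equals the paper's $h_{\ve m}/|h_{\ve m}|^2$. The paper phrases this as matched filtering, $q(\ve t)=\ve h_0^\dagger\rho(-\ve t)\ve z$ with $\tilde q_{\ve m}=|h_{\ve m}|^2\tilde f_{\ve m}$, followed by spectral division; your sign check (which does come out as $e^{+i\ve m\cdot\ve t}$ with no reindexing needed) and your realness argument for $\ve w_\star$ are correct additions the paper leaves implicit.
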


\begin{proof}\label{proof:oracle-deconvolution}
First, define the projected signal
\[
    q(\ve t)
    :=
    \ve h_0^\dagger \rho(-\ve t)\ve z .
\]
Substituting the forward model gives
\begin{align}
q(\ve t)
&=
\ve h_0^\dagger \rho(-\ve t)
\left[
    \int_{\mathbb T^P} f(\ve s)\rho(\ve s)\ve h_0\,d\mu(\ve s)
\right] \nonumber \\
&=
\int_{\mathbb T^P}
f(\ve s)
\underbrace{
\ve h_0^\dagger \rho(\ve s-\ve t)\ve h_0
}_{k(\ve s-\ve t)}
\,d\mu(\ve s),
\label{eq:projection-as-convolution}
\end{align}
where \(k(\ve r):=\ve h_0^\dagger\rho(\ve r)\ve h_0\). Thus, \(q\) is a convolutional smoothing of \(f\).

Now write
\[
    \ve h_0
    =
    \sum_{\ve m\in\mathcal B}
    h_{\ve m}\ve u_{\ve m},
    \qquad
    h_{\ve m}:=\ve u_{\ve m}^\dagger\ve h_0 .
\]
Using the diagonalization of \(\rho\), the forward model implies
\[
    \ve u_{\ve m}^\dagger \ve z
    =
    h_{\ve m}\tilde f_{\ve m}.
\]
Equivalently, the Fourier coefficients of the projected signal satisfy
\[
    \tilde q_{\ve m}
    =
    |h_{\ve m}|^2\tilde f_{\ve m}.
\]
Since \(h_{\ve m}\neq 0\) on \(\mathcal B\), the latent Fourier coefficients are recovered by
\[
    \tilde f_{\ve m}
    =
    \frac{\tilde q_{\ve m}}{|h_{\ve m}|^2}
    =
    \frac{h_{\ve m}^\ast\,\ve u_{\ve m}^\dagger \ve z}
    {|h_{\ve m}|^2}.
\]
Applying the inverse Fourier transform yields
\begin{align}
    f(\ve t)
    &=
    \sum_{\ve m\in\mathcal B}
    \frac{h_{\ve m}^\ast\,\ve u_{\ve m}^\dagger \ve z}
    {|h_{\ve m}|^2}
    e^{i\ve m\cdot\ve t} \nonumber \\
    &=
    \left(
    \sum_{\ve m\in\mathcal B}
    \frac{h_{\ve m}^\ast}{|h_{\ve m}|^2}
    \ve u_{\ve m}^\dagger
    \right)
    \rho(-\ve t)\ve z \nonumber \\
    &=
    \ve w_\star^\dagger\rho(-\ve t)\ve z .
\end{align}
This proves the claim.
\end{proof}

The learned lifting operator in Section~\ref{sec:methodology} replaces the oracle pair \((\rho,\ve w_\star)\) with the learned representation and resolving filter \((\hat\rho,\ve w_0)\).

\subsection{Identifiability and kernel-recovery proofs}
\label{app:identifiability-kernel-proofs}

\subsubsection{Compact support and flat spectrum imply a delta kernel}
\begin{proof}\label{proof:compact-support-flat-filter-implies-delta}
Align the minimal support so that \(\kappa_n=0\) for \(n\notin[1,q]\), and suppose that \(2\leq q\leq N/2\). Since the support is minimal, \(\kappa_1,\kappa_q\neq 0\). The circular autocorrelation at shift \(q-1\) is
\[
    r(q-1)
    =
    \sum_{n=1}^N \kappa_n \kappa_{(n+q-1) \bmod N}
    =
    \kappa_1\kappa_q .
\]
Since a flat spectrum requires \(r(m)=0\) for all \(m \in [1, N-1]\), this leads to a contradiction. The only viable support is \(q=1\), so \(\kappa\) is a shifted and scaled Kronecker delta.
\end{proof}

\subsubsection{Shift-invariant kernels identify the symmetry action}
\begin{proof}\label{proof:symmetry_discovery}
Shift-invariance of the kernel gives
\[
    \ve{w}_0^\top\hat\rho(-\ve{t})
    \bigl(\rho(\ve{r})-\hat\rho(\ve{r})\bigr)
    \rho(\ve{s})\ve{h}_0
    =
    0 .
\]
Since
\[
    \mathcal X_{\mathrm{obs}}
    =
    \operatorname{span}\{\rho(\ve{s})\ve{h}_0:\ve{s}\in \mathbb{R}^P\}
    =
    \operatorname{span}\{\hat\rho(\ve{t})\ve{w}_0:\ve{t}\in \mathbb{R}^P\},
\]
we have
\[
    \ve{\hat z}^{\top}
    \bigl(\rho(\ve{r})-\hat\rho(\ve{r})\bigr)
    \ve{z}
    =
    0
\]
for all \(\ve{\hat z}, \ve{z} \in \mathcal X_{\mathrm{obs}}\). Therefore, \(\hat\rho(\ve{r})=\rho(\ve{r})\) on \(\mathcal X_{\mathrm{obs}}\).
\end{proof}

\subsubsection{Translation-only symmetries imply equivariance}
\begin{proof}\label{proof:translation-only-symmetries-imply-equivariance}
Since \(Y=\mathcal K F\) and \(\mathcal K\) is injective on the relevant signal subspace, we may write \(F=\mathcal K^{-1}Y\). Stationarity gives
\[
    \mathcal K^{-1}Y \overset{d}{=} T_{\ve{\alpha}} \mathcal K^{-1}Y
    \implies
    Y \overset{d}{=} \mathcal K T_{\ve{\alpha}}\mathcal K^{-1} Y .
\]
Thus \(\mathcal K T_{\ve{\alpha}}\mathcal K^{-1}\) is a distributional symmetry of the lifted field. By assumption, every distributional symmetry of \(Y\) is a lifted translation, so there exists a vector-valued function \(\ve{\beta} : \mathbb{R}^P \to \mathbb{R}^P\) such that
\[
    \mathcal K T_{\ve{\alpha}}\mathcal K^{-1}
    =
    T^{(Y)}_{\ve{\beta}(\ve{\alpha})}.
\]
Multiplying on the right by \(\mathcal K\) yields
\[
    \mathcal K T_{\ve{\alpha}}
    =
    T^{(Y)}_{\ve{\beta}(\ve{\alpha})}\mathcal K .
\]
\end{proof}

\subsubsection{Equivariance implies a shift-invariant recovery kernel}
\begin{proof}\label{proof:recovery-kernel-shift-invariance}
Equivariance gives
\[
    \int_G k(\ve{s},\ve{t})f(\ve{s}-\ve{\alpha})\,d\mu(\ve{s})
    =
    \int_G k(\ve{s},\ve{t}-\ve{\beta}(\ve{\alpha}))f(\ve{s})\,d\mu(\ve{s}).
\]
After the change of variables \(\ve{s}\mapsto \ve{s}+\ve{\alpha}\), equality for
all \(f\) gives
\[
    k(\ve{s}+\ve{\alpha},\ve{t})
    =
    k(\ve{s},\ve{t}-\ve{\beta}(\ve{\alpha})).
\]
Taking \(\ve{\alpha}=\ve{\beta}^{-1}(\ve{t})\) yields
\[
    k(\ve{s}+\ve{\beta}^{-1}(\ve{t}),\ve{t})
    =
    k(\ve{s},\ve{0}).
\]
Equivalently,
\[
    k(\ve{s},\ve{t})
    =
    k(\ve{s}-\ve{\beta}^{-1}(\ve{t}),\ve{0}).
\]
Defining
\[
    \kappa(\ve{r}) := k(\ve{r},\ve{0})
\]
proves the claim.
\end{proof}

\subsubsection{InfoMax implies a flat kernel spectrum}
\begin{proof}\label{proof:infomax-implies-flat-kernel-spectrum}
First, we note that the stationarity of the latent field implies that its covariance matrix is diagonalized by the Fourier modes. Denoting the eigenvalues of the covariance matrix as \(\lambda_m = |\tilde{\kappa}_m|^2 s_m\), the expression for the Gaussian entropy becomes
\[
    h(Y)
    =
    \frac{1}{2} \sum_{m=0}^{N-1} \log |\tilde{\kappa}_m|^2
    +
    \frac{1}{2} \sum_{m=0}^{N-1} \log s_m
    +
    \mathrm{constant}.
\]
Defining \(q_m = |\tilde{\kappa}_m|^2\), maximizing the output entropy subject to \(\sum_{m=0}^{N-1}q_m=C\) yields \(q_m = C/N\). Therefore, the kernel has a flat spectrum \(|\tilde{\kappa}_m|^2 = C/N\).
\end{proof}

\subsection{Implicit energy constraint of the kernel}
In this section, we demonstrate the implicit \(L^2\) constraint on the recovery kernel
\[
    k(\ve{s},\ve{t})
    =
    \ve w_0^\top \hat\rho(-\ve t)\rho(\ve s)\ve h_0 .
\]
For brevity, we investigate a one-parameter toroidal domain.

First, we specify spectral properties of the point-response vector \(\ve h_0\) under the delta-autocorrelation assumption, which follows naturally from its local properties. Then, under the assumption that the learned action has recovered the true symmetry, we show that this leads to a recovery kernel with fixed energy.

In Section~\ref{sec:identifiability}, we show that this property also leads to a recovery kernel with a flat spectrum under the InfoMax loss.

\subsubsection{Flat spectrum of the point-response vector}
\label{app:flat-spectrum-point-response}
In this section, we investigate the spectral properties of the point-response vector, which is how a latent point source is represented in the augmented sample space.

\begin{lemma}[Delta autocorrelation implies a flat spectrum]
Let \(G_N=\{r_n\}_{n=0}^{N-1}\) be a finite discretized toroidal translation group. Assume that \(\rho:G_N\to \mathrm U(N)\) is a multiplicity-free unitary representation diagonalized by an orthonormal character basis
\(\{u_l\}_{l=0}^{N-1}\):
\[
    \rho(r_n)u_l=\chi_l(r_n)u_l ,
\]
where \(\{\chi_l\}_{l=0}^{N-1}\) are distinct characters of \(G_N\). If \(\ve h_0 \in\mathbb C^N\) satisfies
\[
    \ve h_0^\dagger\rho(r_n)\ve h_0=\delta_{n0}
    \qquad
    \text{for all } n=0,\dots,N-1,
\]
then
\[
    |u_l^\dagger\ve h_0|^2=\frac{1}{N}
    \qquad
    \text{for all } l=0,\dots,N-1 .
\]
\end{lemma}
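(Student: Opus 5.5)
The plan is to expand $\ve h_0$ in the character basis and turn the autocorrelation condition into a discrete Fourier identity on $G_N$, which we then invert. Write $\ve h_0=\sum_{l} c_l u_l$ with $c_l:=u_l^\dagger\ve h_0$; this is legitimate because $\{u_l\}_{l=0}^{N-1}$ is an orthonormal basis of $\mathbb C^N$. Since $\rho(r_n)u_l=\chi_l(r_n)u_l$ and the $u_l$ are orthonormal, we get
\[
    \ve h_0^\dagger\rho(r_n)\ve h_0=\sum_{l=0}^{N-1}\chi_l(r_n)\,|c_l|^2 .
\]
The hypothesis therefore reads $\sum_l \chi_l(r_n)\,p_l=\delta_{n0}$ for all $n$, where $p_l:=|c_l|^2\ge 0$.

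Next, invert this relation using orthogonality of characters on the finite abelian group $G_N$. Because $\rho$ is multiplicity-free on an $N$-dimensional space and the $\chi_l$ are distinct, the $N$ characters $\chi_0,\dots,\chi_{N-1}$ exhaust all characters of $G_N$ (which has order $N$). Hence both orthogonality relations are available, in particular
\[
    \frac1N\sum_{n=0}^{N-1}\overline{\chi_k(r_n)}\chi_l(r_n)=\delta_{kl}.
\]
Multiply the identity $\sum_l\chi_l(r_n)p_l=\delta_{n0}$ by $\overline{\chi_k(r_n)}$ and sum over $n$. The left side becomes $N p_k$. The right side is $\overline{\chi_k(r_0)}=1$, since $r_0$ is the identity element and characters are trivial there. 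Therefore $p_k=1/N$ for every $k$, which is exactly the claim $|u_k^\dagger\ve h_0|^2=1/N$.

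The only step needing care is the inversion: we must justify that $\{\chi_l\}$ is the full dual group and that $r_0$ is the identity. If one prefers not to appeal to the dual group, the argument can be made by linear algebra instead. The matrix $X_{nl}=\chi_l(r_n)$ is square, and its columns are orthogonal because distinct characters of a finite group are orthogonal. So $X/\sqrt N$ is unitary and the system $Xp=e_0$ has the unique solution $p=X^\dagger e_0/N$, whose entries are $\overline{\chi_l(r_0)}/N=1/N$. The normalization $\ve h_0^\dagger\ve h_0=1$, the $n=0$ case, is consistent with this, since $\sum_l p_l=1$.
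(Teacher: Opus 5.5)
Your proof is correct and follows the paper's argument: you expand $\ve h_0$ in the character basis to get $\sum_l \chi_l(r_n)\,|u_l^\dagger\ve h_0|^2=\delta_{n0}$, then invert by Fourier inversion on $G_N$ (character orthogonality), using $\chi_m(r_0)=1$. Your remark that $r_0$ is the identity makes explicit something the paper leaves implicit, and the inversion actually needs only the orthogonality of distinct characters over group elements, not that they exhaust the dual group.
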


\begin{proof}
Write \(\ve h_0=\sum_l h_l u_l\), where \(h_l=u_l^\dagger\ve h_0\). Since \(\rho\) is diagonal in the character basis,
\[
    a_n
    :=
    \ve h_0^\dagger\rho(r_n)\ve h_0
    =
    \sum_{l=0}^{N-1} |h_l|^2 \chi_l(r_n).
\]
Thus \(a_n\) is the inverse group Fourier transform of the spectral weights \(p_l:=|h_l|^2\). By Fourier inversion on the finite group,
\[
    p_m
    =
    \frac{1}{N}
    \sum_{n=0}^{N-1} a_n \chi_m(r_n)^* .
\]
Using the assumption \(a_n=\delta_{n0}\), we obtain
\[
    p_m
    =
    \frac{1}{N}
    \sum_{n=0}^{N-1} \delta_{n0}\chi_m(r_n)^*
    =
    \frac{1}{N},
\]
because every character satisfies \(\chi_m(r_0)=1\). Therefore
\[
    |u_m^\dagger\ve h_0|^2
    =
    |h_m|^2
    =
    \frac{1}{N}.
\]
Since \(m\) was arbitrary, \(\ve h_0\) is spectrally flat in the character basis.
\end{proof}

\subsubsection{Fixed energy of the recovery kernel}
After the learned action has aligned with the true translation action, the lifting kernel becomes shift-invariant:
\[
    k(\ve{s},\ve{t})=\kappa(\ve{s}-\ve{t}).
\]
In the discrete toroidal setting, write \(r_n=n\Delta\) and define
\[
    \kappa_n := \kappa(r_n) = \ve w_0^\dagger \rho(r_n)\ve h_0 .
\]
Thus \(\kappa_n\) is the response of the learned filter to translated copies of the point-response vector. The following lemma shows that the total energy of this recovered convolution is fixed by normalization; the learned filter can change where the energy is placed, but not how much energy the kernel has.

\begin{lemma}[Recovery kernel has a fixed energy]
\label{lem:recovery-kernel-has-fixed-energy}
Let \(G_N=\{r_n=n\Delta\}_{n=0}^{N-1}\) be a discretized toroidal translation group with periodic indexing. Assume that \(\rho:G_N\to \mathrm U(N)\) is diagonalized by the character basis \(\{u_l\}_{l=0}^{N-1}\). Let
\[
    \kappa_n = \ve w_0^\dagger \rho(r_n)\ve h_0
\]
be the convolutional kernel induced by the learned filter \(\ve w_0\) and the point-response vector \(\ve h_0\).

Assume that the point-response vector has delta autocorrelation,
\[
    \ve h_0^\dagger \rho(r_n)\ve h_0 = \delta_{n0},
    \qquad n=0,\dots,N-1 .
\]
Then, by Lemma~\ref{app:flat-spectrum-point-response}, its spectrum is flat:
\[
    |u_l^\dagger \ve h_0|^2 = \frac{1}{N},
    \qquad l=0,\dots,N-1 .
\]
If the learned filter is normalized,
\[
    \|\ve w_0\|_2^2=1,
\]
then the recovered convolution has fixed average energy:
\[
    \frac{1}{N} \sum_{n=0}^{N-1} |\kappa_n|^2 = \frac{1}{N}.
\]
Equivalently, its total discrete energy is
\[
    \sum_{n=0}^{N-1} |\kappa_n|^2 = 1 .
\]
\end{lemma}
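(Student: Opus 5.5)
The plan is to pass to the character basis, where both the learned filter and the point-response vector become coefficient sequences, and then use Parseval twice: once on the group $G_N$ and once on $\mathbb C^N$.

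First, expand both vectors in the orthonormal character basis $\{u_l\}_{l=0}^{N-1}$:
\[
\ve w_0=\sum_l w_l u_l,\qquad \ve h_0=\sum_l h_l u_l ,
\]
with $w_l=u_l^\dagger\ve w_0$ and $h_l=u_l^\dagger\ve h_0$. Since $\rho(r_n)u_l=\chi_l(r_n)u_l$, the kernel becomes a finite Fourier series in the characters:
\[
\kappa_n=\sum_{l=0}^{N-1} w_l^\ast h_l\,\chi_l(r_n).
\]

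Second, view $\kappa_n$ as the inverse group Fourier transform of the coefficients $c_l:=w_l^\ast h_l$. Because the characters $\chi_l$ are distinct characters of the abelian group $G_N$ of order $N$, they satisfy the orthogonality relation
\[
\sum_n \chi_l(r_n)\chi_{l'}(r_n)^\ast=N\delta_{ll'} .
\]
Parseval on $G_N$ then gives
\[
\sum_n|\kappa_n|^2=N\sum_l|w_l|^2|h_l|^2 .
\]

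Third, substitute the flat spectrum $|h_l|^2=1/N$, which follows from the delta-autocorrelation hypothesis via the preceding lemma. This gives
\[
\sum_n|\kappa_n|^2=\sum_l|w_l|^2 .
\]
Since $\{u_l\}$ is an orthonormal basis of $\mathbb C^N$, Parseval on $\mathbb C^N$ gives $\sum_l|w_l|^2=\|\ve w_0\|_2^2=1$. This yields total energy $1$, and dividing by $N$ gives the stated average energy $1/N$.

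The main point requiring care is the character orthogonality on $G_N$. It relies on the $N$ characters being distinct and on $\rho$ being multiplicity-free with an orthonormal eigenbasis spanning $\mathbb C^N$. These conditions are exactly the assumptions of the flat-spectrum lemma, and I will invoke them explicitly. Otherwise the argument is a direct calculation, so I do not expect any real obstacle.
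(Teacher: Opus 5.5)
Your proposal is correct and follows the paper's proof step for step: expand in the character basis to get $\kappa_n=\sum_l w_l^\ast h_l\chi_l(r_n)$, apply Parseval on $G_N$, insert the flat spectrum $|h_l|^2=1/N$, and finish with $\|\ve w_0\|_2^2=1$. You also state explicitly that character orthogonality needs the characters to be distinct (multiplicity-freeness), whereas the paper's statement only assumes this implicitly through its appeal to the flat-spectrum lemma.
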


\begin{proof}
Expand the learned filter and the point-response vector in the character basis:
\[
    \ve w_0 = \sum_{l=0}^{N-1}w_l u_l,
    \qquad
    \ve h_0 = \sum_{l=0}^{N-1}h_l u_l,
\]
where
\[
    w_l=u_l^\dagger\ve w_0,
    \qquad
    h_l=u_l^\dagger\ve h_0 .
\]
Since \(\rho(r_n)\) is diagonal in this basis,
\[
    \rho(r_n)u_l=\chi_l(r_n)u_l .
\]
Therefore,
\[
    \kappa_n
    =
    \ve w_0^\dagger\rho(r_n)\ve h_0
    =
    \sum_{l=0}^{N-1} w_l^\ast h_l \chi_l(r_n).
\]
So the Fourier coefficient of the convolutional kernel at frequency \(l\) is
\[
    \widehat{\kappa}_l = w_l^\ast h_l .
\]

Using orthogonality of the character basis, or equivalently Parseval's identity on the finite torus,
\[
    \frac{1}{N} \sum_{n=0}^{N-1} |\kappa_n|^2
    =
    \sum_{l=0}^{N-1} |\widehat{\kappa}_l|^2 .
\]
Substituting \(\widehat{\kappa}_l=w_l^\ast h_l\), we obtain
\[
    \frac{1}{N} \sum_{n=0}^{N-1} |\kappa_n|^2
    =
    \sum_{l=0}^{N-1} |w_l|^2|h_l|^2 .
\]
The delta-autocorrelation condition on \(\ve h_0\) implies, by Lemma~\ref{app:flat-spectrum-point-response}, that
\[
    |h_l|^2=\frac{1}{N}
    \qquad
    \text{for all } l .
\]
Hence
\[
    \frac{1}{N} \sum_{n=0}^{N-1} |\kappa_n|^2
    =
    \frac{1}{N} \sum_{l=0}^{N-1} |w_l|^2 .
\]
Finally, since the learned filter is normalized,
\[
    \sum_{l=0}^{N-1}|w_l|^2
    =
    \|\ve w_0\|_2^2
    =
    1 .
\]
Therefore,
\[
    \frac{1}{N} \sum_{n=0}^{N-1} |\kappa_n|^2
    =
    \frac{1}{N}.
\]
Multiplying both sides by \(N\) gives
\[
    \sum_{n=0}^{N-1} |\kappa_n|^2 = 1 .
\]
\end{proof}

\subsection{Properties of the lifting operation}\label{subsection:lifting-props}
We next discuss some properties of the lifting operation for a general resolving filter \(\ve w\), with \(\ve w=\ve w_0\) in the trained model. Define
\[
    (\mathcal T_{\ve w}\ve z)(\ve t)
    :=
    \ve w^\top\hat\rho(-\ve t)\ve z .
\]

\textbf{Mapping samples to continuous functions:}
Let \(\ve z\in\mathcal X_{\mathrm{aug}}\) be an augmented sample, and let \(\ve t_1,\ve t_2\in \mathbb{R}^P\) be two points in the group parameter space.
Then, the values \(y_1 := (\mathcal T_{\ve w}\ve z)(\ve t_1)\) and \(y_2 := (\mathcal T_{\ve w}\ve z)(\ve t_2)\) of the lifted function at \(\ve t_1\) and \(\ve t_2\) cannot be too different when \(\ve t_1\) and \(\ve t_2\) are close. Namely,
\[
    |y_1-y_2|
    =
    \left|
    \ve w^\top\hat\rho(-\ve t_1)\ve z
    -
    \ve w^\top\hat\rho(-\ve t_2)\ve z
    \right|
    \leq
    \|\ve w^\top\|\,
    \|\hat\rho(-\ve t_1)-\hat\rho(-\ve t_2)\|\,
    \|\ve z\| .
\]
where \(\|\hat\rho(-\ve t_1)-\hat\rho(-\ve t_2)\|\) denotes a suitable norm. In the practical case of finite-dimensional vectors, \(\hat\rho\) is a matrix representation, and one can take the Frobenius norm. The continuity of the exponential map ensures that \(\|\hat\rho(\ve t_1)-\hat\rho(\ve t_2)\|_F \to 0\) when \(\ve t_1 \to \ve t_2\), showing that the lifting operation maps \(\ve z\) to a continuous function over group elements.

\subsection{Transformation of the point-source response with invertible maps}\label{app:locality-transformation}
Based on the data model proposed in Section~\ref{sec:problem-setting}, applying an invertible transformation changes both the point-source response \(\ve h_0\) and the symmetry representation.

To see this, let \(\mathbf A\in\mathbb R^{d_{\mathrm{aug}}\times d_{\mathrm{aug}}}\) be invertible and define \(\tilde{\ve z}:=\mathbf A\ve z\). Starting from the forward model,
\begin{align}\label{eq:change-of-basis}
\tilde{\ve z}
=
\mathbf A\ve z
&=
\int_G f(v)\,\mathbf A\rho(v)\ve h_0\,d\mu(v) \\
&=
\int_G f(v)\,\mathbf A\rho(v)\mathbf A^{-1}\mathbf A\ve h_0\,d\mu(v) \\
&=
\int_G f(v)\,\tilde{\rho}(v)\tilde{\ve h}_0\,d\mu(v),
\end{align}
where
\[
    \tilde{\rho}(v) := \mathbf A \rho(v)\mathbf A^{-1},
    \qquad
    \tilde{\ve h}_0 := \mathbf A \ve h_0 .
\]
\section{Implementation details}\label{app:implementation-details}
\subsection{Auxiliary network}\label{subsection:auxiliary-network}

\begin{figure}
\begin{subfigure}[t]{0.4\linewidth}
    \centering
    \includegraphics[width=\linewidth]{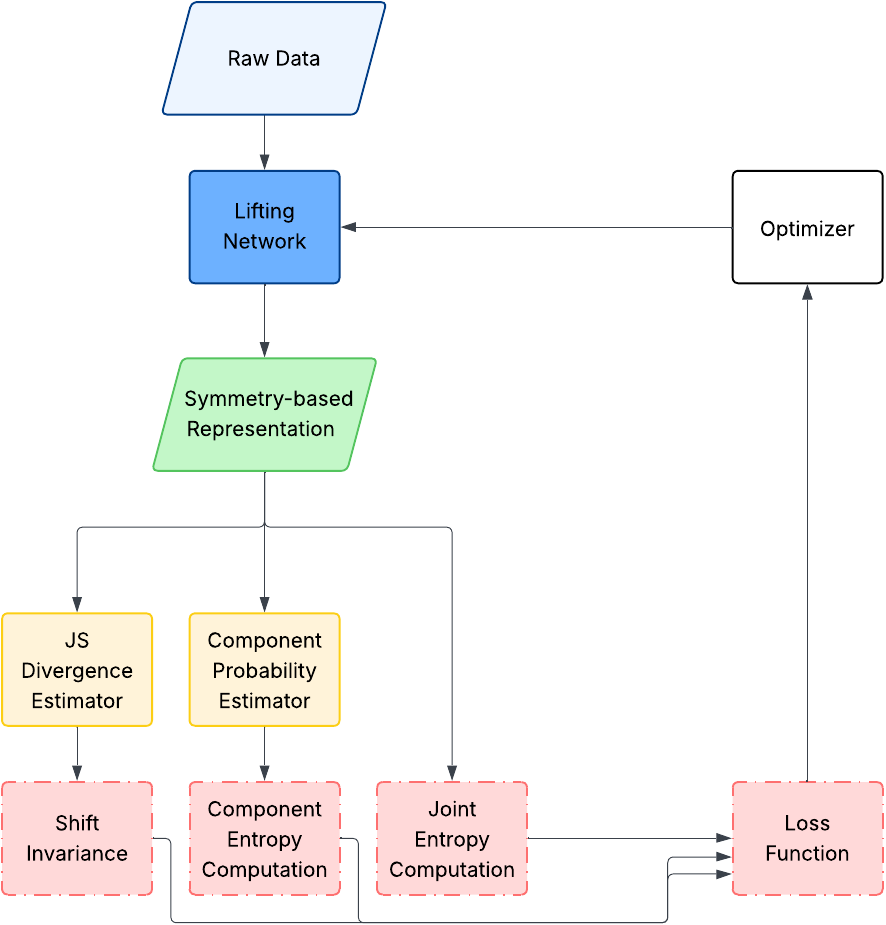}
    \caption{Optimization loop for the lifting network.}
    \label{fig:primary-training-loop}
\end{subfigure}
\hfill
\begin{subfigure}[t]{0.4\linewidth}
    \centering
    \includegraphics[width=\linewidth]{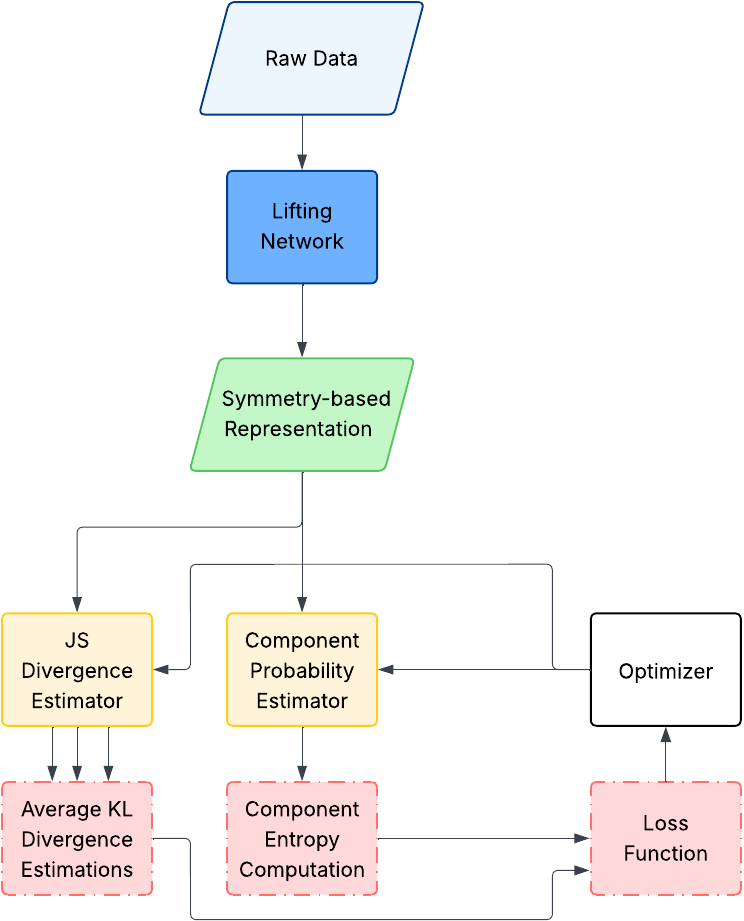}
    \caption{Optimization loop for the auxiliary networks.}
    \label{fig:auxiliary-training-loop}
\end{subfigure}
\hfill
\begin{subfigure}[t]{0.125\linewidth}
    \centering
    \includegraphics[width=\linewidth]{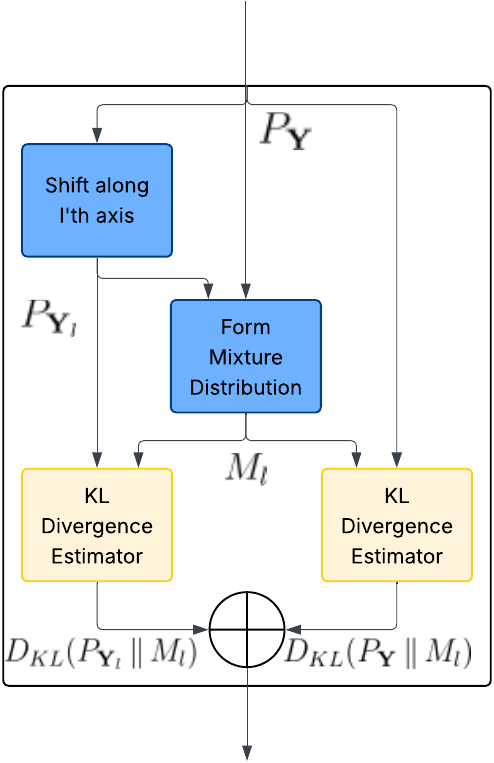}
    \caption{Structure of JS divergence estimator.}
    \label{fig:js-divergence-estimator-structure}
\end{subfigure}
\end{figure}

\subsubsection{KL divergence estimation}
Stationarity loss is given in terms of the KL 
divergence between two distributions. To estimate KL divergence, we use 
the duality formula
\cite{sreekumar2022neuralestimationstatisticaldivergences, 
donsker1975asymptotic} giving the KL divergence $D_{KL}(R || S)$ between two probability distributions $R, S$ as a solution to the optimization 
problem
\begin{equation}\label{eq:kl-div-estimator}
D_{KL}(R || S) = \sup_{\theta \in \Theta}
\bigl\{\mathbb{E}_{\ve{x}\sim R}[f_\theta(\ve{x})] \;-\; \log \mathbb{E}_{\ve{x}\sim S}[\exp{f_\theta(\ve{x})}]\bigr\} \, 
\end{equation}
where \(f_\theta: \mathbb{R}^N \rightarrow \mathbb{R}\) and 
$\theta$ parametrizes all functions \cite{donsker1975asymptotic}. 

We approximate the search over all $f_\theta$ using a neural network, with
$\theta$ describing the network parameters, and using 
\eqref{eq:kl-div-estimator} to define the loss. To approximate the expectation values in \eqref{eq:kl-div-estimator}, we use averaging over
batches. For the network architecture, we use a CNN with additional learnable position embeddings, which turns out to work 
rather efficiently. See Appendix~\ref{app:position-embeddings} for details. 
Since each JS-divergence involves two KL-divergence estimates, we use
two identical networks with separate weights for each term in the loss, 
and train these networks using 
the optimization loop given in Figure~\ref{fig:auxiliary-training-loop}.  

\subsection{Efficient implementation for lifting}\label{app:efficient-lifting-implementation}
To lift efficiently, we use the basis transformation $U$
that simultaneously diagonalizes $L_i$:
\begin{equation}
    y_{\ve{n}}
    = \vs{\phi}^\top e^{-\sum_{i=1}^P {t_{\ve{n}_i} L_i}} \ve{x}
    = \vs{\phi}^\top U e^{-\sum_{i=1}^P {t_{\ve{n}_i} D_i}} U^\dagger \ve{x}
    = \ad{\Tilde{\vs{\phi}}} e^{-\sum_{i=1}^P {t_{\ve{n}_i} D_i}} \Tilde{\ve{x}}
\end{equation}  
where we defined $\Tilde{\vs{\phi}} := U^\dagger \vs{\phi}$ and $\Tilde{\ve{x}} := U^\dagger \ve{x}$ in the 
last expression, and $D_i \in \mathbb{C}^{d \times d}$ are the diagonal eigenvalue matrices for $L_i$. This implementation ensures 
both numerical stability and efficiency by exchanging the matrix exponential with scalar operations.

\subsection{Lie basis parametrizations}\label{app:lie-basis-parametrization}
For Abelian Lie groups, Lie basis elements must share eigenvectors. To ensure this we parametrize all eigenvectors by using a 
single anti-symmetric matrix. Eigenvalues for each Lie basis element are parametrized seperately as real vectors 
$\{ \vs{\alpha}_i \in \mathbb{R}^d: i=1,\dots,P\}$.

To parametrize the Lie basis eigenvectors, we use a single learnable anti-symmetric matrix \(A \in \mathbb{R}^{d \times d}\) and 
obtain the eigenvectors $U \in \mathbb{C}^{d \times d}$ by diagonalization. Eigenvalues $\{\vs{\phi}_i\}$ for each basis are parametrized 
as seperate purely-imaginary vectors, and then exponentiated for obtaining eigenvalues of different group elements.  

However, this parametrization require us to ensure that each Lie basis $L_i := U e^{\left(i \vs{\phi}_i \right)}$, is an orthogonal
matrix. This requires ensuring that seperatey parametrized eigenvalues and their corresponding eigenvectors form conjugate 
pairs. 

To ensure this, we define a permutation matrix $P_{ij} := \ve{u}_i^\top \ve{u}_j$, which shuffles elements of any vector 
such that each component with index $j$ is mapped to the index satisfying $\ve{u}_i = \ve{u}_j^*$. Then we get properly ordered conjugate 
eigenvalue phases $\vs{\phi}_i$ by applying the following operation
\begin{equation}
    \vs{\phi}_{i} = \vs{\alpha}_{i} - P \vs{\alpha}_{i}
\end{equation}
where \(\vs{\alpha}_{i}\) is the parameterization for the \(i\)'th Lie basis.

\subsection{Component-wise entropy estimation}\label{app:marginal-entropy-estimation}
We estimate component-wise entropies using trainable Gaussian mixture models for each component, following \cite{pichler2022differential}. 
This approach models the probability distribution \(\hat{p}\) for each component of y-representation as:
\begin{equation}
    \hat{p}_\ve{n}(x) = \sum_{m=1}^M \frac{w_{\ve{n}; m}}{\sigma_{\ve{n}; m} \sqrt{2\pi}} 
    \exp\left(-\frac{(x - \mu_{\ve{n}; m})^2}{2\sigma_{\ve{n}; m}^2}\right)
\end{equation}
where \(\ve{n} = (n_1, n_2, \dots, n_k)\), with mixture weights satisfying \(\sum_m w_{\ve{n}; m} = 1\) and \(w_{\ve{n}; m} \geq 0\). 
We use \(M=4\) Gaussian components per mixture.

The component-wise entropy \(h_{\mathbf{n}}\) is computed via:
\[
h_{\mathbf{n}} = -\mathbb{E}_{y \sim P} \left[ \log \hat{p}_{\mathbf{n}}(y_{\mathbf{n}}) \right].
\]

For training probability estimators, we minimize the average entropy \(\frac{1}{d} \sum_{\mathbf{n}} h_{\mathbf{n}}\)
across all components where \(d\) denotes the total number of components.

\subsection{Joint entropy estimation}\label{app:low-rank-entropy-estimation}
To estimate the multidimensional differential entropy of the lifted representation $\mathbf{y}$, 
we use a multivariate Gaussian approximation. For a multivariate Gaussian with covariance matrix 
$\mathbf{C}$ whose eigenvalues are $\lambda_l$, the total entropy $h(\mathbf{y})$ is given, up to a constant shift,
by $h(\mathbf{y}) = \sum_l h_l$ where $h_l = \log(\lambda_{l})$.

For each batch, we flatten the lifted representation and compute the sample covariance
matrix $c_{pr} = \operatorname{cov}(y_{\mathbf{i}(p)}, y_{\mathbf{j}(r)})$ where \(p\) and \(r\) 
correponds to flattened indexes. Then we apply eigendecomposition and sort its eigenvalues in descending order. 
Rank-$k$ approximation to the entropy as a weighted average of the per-component contributions to entropy 
is calculated as follows
\begin{equation} \label{low-rank-entropy}
    \bar{h}_k \triangleq \frac{\sum_{l=1}^d w_{ki}\log(\lambda_l)}{\sum_{l=1}^d w_{kl}}
\end{equation}
where the weights provide a soft thresholding at $l=k$. 

We compute weights using the sigmoid function via
\begin{equation}\label{entropy-rank-weights}
    w_{kl} = \frac{1}{e^{\alpha (l - k)} + 1} 
\end{equation}
with $\alpha$ a hyperparameter determining the smoothness of the transition. Our experiments have shown 
consistent results for various values of $\alpha>1$ (we chose $\alpha=3.3$).

Due to the normalization, \eqref{low-rank-entropy} should be thought of as a per-rank version of the 
low-rank entropy. In particular, when combining $\bar{h}_k$ with the marginal entropies of $y_\mathbf{i}$
during the computation of the total correlation loss, it is more appropriate to combine the former with 
the average marginal entropy rather than the total marginal entropy.

\subsection{JS-divergence estimators}
\subsubsection{Overview}
In general, we probe shift-invariance by estimating the Jensen-Shannon (JS) divergence between the probability 
distribution of the $y$-representation and its shifted versions along each axis. 

To estimate JS divergence, first we estimate 2 KL divergence term of each group axis, leveraging the dual representation 
which is given in Equation~\ref{eq:kl-div-estimator}. Then use the KL divergences to compute JS divergence by 
a simple identity. This approach requires \(2\) networks for each group dimension. This approach enables us to 
formulate KL-divergence estimation as a highly efficient downstream task. 

We use CNNs due to their stability and convergence speed, while we use position embeddings to improve their expressive capacity. 
For one-parameter 63 dimensional datasets, we also apply a coarse-grained lifted representation, and estimate 
JS-divergence in two scales. Details to the coarse-graining procedure is given in Appendix~\ref{app:coarse-graining}.

\subsubsection{Position embeddings}\label{app:position-embeddings}
The dual representation for KL divergence requires solving the optimization problem in Equation~\ref{eq:kl-div-estimator} 
over the space of all functions \(f: \mathbb{R}^{d_1 \times d_2 \times \cdots \times d_k} \to \mathbb{R}\). In principle, 
deep multilayer perceptrons (MLPs) appear suitable for this task as they lack inductive biases. However, our experiments 
reveal that MLPs exhibit slow convergence, inducing instabilities in our method.

While Convolutional Neural Networks (CNNs) converge faster and more stably than MLPs, they can only represent translation-equivariant 
functions. To improve expressive capacity while preserving convergence efficiency, we introduce learnable position embeddings. 
The augmented y-representation becomes:
\begin{equation}
    y^{\text{embed}}_{n_1, n_2, \dots, n_k; e} := y_{n_1, n_2, \dots, n_k} + p_{n_1, n_2, \dots, n_k; e}
\end{equation}
where \(p \in \mathbb{R}^{d_1 \times d_2 \times \cdots \times d_k \times d_e}\) denotes the learnable position embedding tensor, 
and \(d_e = 4\) is the embedding dimension used throughout our experiments.

\subsubsection{Coarse-graining}\label{app:coarse-graining}
Experiments indicate that high dimensionality along an axis can cause optimization challenges, such as optimizers becoming trapped in 
local minima. To mitigate this issue, we introduce a coarse-grained version of the $y$-representation and employ an additional 
JS-divergence estimator, running in lower dimensional lifted representation.

The coarse-graining procedure consists of two steps:
\begin{enumerate}
    \item Partition the $y$-representation into patches along each axis.
    \item Randomly select one component within each patch.
\end{enumerate}
This operation is formalized in Equation~\ref{eq:coarse-graining}, where the coarse-grained representation $y^{\text{coarse}}$ is 
indexed by positive integers $(n_1, n_2, \dots, n_k)$. For each sample, we independently generate random integer offsets $\{r_i\}$ 
uniformly distributed in $\{0, 1, \dots, q_i - 1\}$, where $q_i$ denotes the dilation factor along axis $i$ for $i = 1, \dots, k$. 
The indices $(n_1, n_2, \dots, n_k)$ range over $\left(1, 2, \dots, \lfloor d_1/q_1 \rfloor\right) \times \cdots \times 
\left(1, 2, \dots, \lfloor d_k/q_k \rfloor\right)$.

\begin{equation}\label{eq:coarse-graining}
y^{\text{coarse}}_{n_1, n_2, \dots, n_k} := y_{n_1 q_1 + r_1,  n_2 q_2 + r_2,  \dots,  n_k q_k + r_k}
\end{equation}

\subsection{Weighting JS-divergence estimations}
\subsubsection{Scaling properties of JS-divergence}
In our method, we quantify shift-invariance by estimating the Jensen-Shannon (JS) divergence between the probability distribution 
\(P\) of the y-representation and its shifted variants along each axis. However, when axes have substantially different 
dimensionalities as a result of coarse graining, averaging JS divergences without accounting for dimensional effects could degrade performance.

To address this, we weight JS divergence estimates based on their scaling behavior with respect to dimensionality. We assume 
the y-representation is the discrete version of a rectangular region of a Lie manifold which is reflected in data distribution. 
Under this assumption, increasing the dimensionality along an axis corresponds to nothing more than increasing resolution, 
allowing us to estimate the scaling behavior of JS divergence. 

Formally, we can parametrize \(P\) with real parameters \(\theta\) such that a small shift in y-representation corresponds to 
a perturbation \(\delta \theta\) in parameter space. Expanding the JS divergence in a Taylor series to second order at 
\(\theta_0\) we have
\begin{align}
    \text{JS}\big(P(\theta_0) \parallel P(\theta_0 + \delta \theta)\big) 
    &= \text{JS}\big(P(\theta_0) \parallel P(\theta_0)\big) \\
    &\quad + \sum_i \frac{\partial}{\partial \theta_i}\text{JS}\big(P(\theta_0)\parallel P(\theta)\big)\Big|_{\theta=\theta_0} \delta \theta_i \\
    &\quad + \frac{1}{2}\sum_{i,j} \frac{\partial^2}{\partial \theta_i \partial \theta_j}\text{JS}\big(P(\theta_0)\parallel P(\theta)\big)\Big|_{\theta=\theta_0} \delta \theta_i \delta \theta_j \\
    &\quad + \mathcal{O}(\|\delta \theta\|^3).
\end{align}
Since \(\text{JS}\big(P(\theta_0) \parallel P(\theta_0)\big) = 0\) and the first derivative vanishes at the global minimum 
\(\theta = \theta_0\), we obtain the second-order approximation:
\begin{equation}\label{eq:js-expansion}
    \text{JS}\big(P(\theta_0) \parallel P(\theta_0 + \delta \theta)\big) 
    \approx \frac{1}{2}\sum_{i,j} \frac{\partial^2}{\partial \theta_i \partial \theta_j}\text{JS}\big(P(\theta_0)\parallel 
    P(\theta)\big)\Big|_{\theta=\theta_0} \delta \theta_i \delta \theta_j
\end{equation}

Under Equation~\ref{eq:js-expansion}, scaling the shift magnitude by \(s > 0\) (which scales \(\delta \theta \to s\delta \theta\) 
for small continuous shifts) yields:
\begin{equation}\label{eq:js-scaling-translation}
\text{JS}\big(P(\theta_0) \parallel P(\theta_0 + s\delta \theta)\big) \approx s^2 \cdot \text{JS}\big(P(\theta_0) \parallel 
P(\theta_0 + \delta \theta)\big),
\end{equation}
demonstrating quadratic scaling of JS divergence with shift magnitude. 

Since we estimate \(\text{JS}(P \parallel P_l)\) for each axis \(l\) where \(P_l\) is the one-component shifted version of \(P\) 
along \(l\)'th axes, and we assume that increasing dimensionality merely changes the resolution, magnitude of one-component shift 
should be proportional to \(1 / d_l\). Then from Equation~\ref{eq:js-scaling-translation}, we conclude that 
\(\text{JS}(P \parallel P_l) \propto 1/d_l^2\) while everything else kept the same. 

To maintain consistency with the use of per-dimension entropy in Total Correlation and Infomax losses, we further divide by 
JS-divergences by the total dimension \(d := \prod_{i=1}^k d_i\). This weighting is on the same footing with other loss term 
which is justified by considering the mutual information form of JS divergence
\begin{equation}
    \text{JS}(P \parallel P_l) = h(P) - h\left(P \;\middle|\; \frac{P + P_l}{2}\right)
\end{equation}
where the per-dimension expression 
\(\frac{1}{d}\text{JS}(P \parallel P_l) = \frac{1}{d} h(P) - \frac{1}{d} h\left(P \;\middle|\; \frac{P + P_l}{2}\right) \) includes 
joint and conditional entropies per dimension.

Thus, we assign the weight \(w_l := d_l^2 / d\) to the JS divergence along axis \(l\). For coarse-grained representations, 
we exchange \(d_l\) with \(d_l/q_l\) and \(d\) with \(\prod_{i=1}^k d_i/q_i\) which is actually dimensions after the coarse-graining 
took place. This scheme ensures consistent hyperparameter settings across experiments when the stated assumptions hold.

Consequently, we rescale the JS-divergence terms $D_{\mathrm{JS}} \left( P_{\rvector{Y}} || P_{\rvector{Y}^{(l)}} \right) \to 
w_l D_{\mathrm{JS}} \left( P_{\rvector{Y}} || P_{\rvector{Y}^{(l)}} \right)$.

\subsection{Controlling the rank of the joint entropy estimator}
To help facilitate efficient optimization by first fitting
to the gross features of the data, and refining over time, 
we use a time-dependent rank parameter $k$ for the entropy estimator 
\eqref{entropy-rank-weights}. 
To adjust $k$ during training, we use a normalized notion of training time 
$t_n$ measuring the ``amount of gradient flow'' via
\begin{equation}
  t_n = \frac{\sum_{s=1}^{n} \text{lr}(s)}{\sum_{s=1}^{T} \text{lr}(s)}
\end{equation}
where $\text{lr(s)}$ is the learning rate used at the training step (batch) $s$, 
and $n$ and $T$ are the current training steep, and the total 
number of training steps, respectively. 
We control the rank $k$ of the low-rank entropy estimator by setting
$k = \text{ceil}(d \times t_n)$
so that by the end of the training, the rank is at $d$.

\subsection{Architectures of KL-divergence estimators}
\begin{table}
    \centering
    \caption{Detailed architecture specifications for the KL-divergence estimators across all experimental regimes. The table distinguishes 
    between 1D signal benchmarks (top) and 2D image benchmarks (bottom). 
\textbf{Notation:} 
\textit{Lyr}: Layer sequence index; 
\textit{Filters}: Number of output feature maps; 
\textit{Kernel}: Spatial receptor size ($k$ for 1D, $k \times k$ for 2D); 
\textit{Pool}: Downsampling mechanism (\textit{Stoch}: Stochastic Pooling, \textit{None}: Identity); 
\textit{Pad/BC}: Padding strategy and boundary conditions (\textit{Cyc}: Cyclic/Circular padding for 
periodic domains, \textit{Zero}: Standard zero-padding). 
\textbf{Components:} 'Main Net' refers to the primary feature extractor. 'Input Net' denotes a learnable linear 
embedding layer ($d_{\text{emb}}=4$) applied prior to the main network for high-dimensional or scrambled inputs (e.g., Bittensor). 
\textbf{Global Settings:} Unless otherwise noted, all convolutional layers employ ELU activations and Spectral Normalization (max norm $10$).}
    \label{tab:arch-details}
    \vspace{0.5em}
    
    \resizebox{0.70\textwidth}{!}{%
    
    \small 
    \begin{tabular}{l l c c c c c c}
        \toprule
        \textbf{Experiment} & \textbf{Network} & \textbf{Lyr} & \textbf{Filters} & \textbf{Kernel} & \textbf{Stride} & \textbf{Pool} & \textbf{Pad/BC} \\
        \midrule
        \multicolumn{8}{c}{\textbf{\textit{1D Domain (Kernels are scalar)}}} \\
        \midrule
        
        \textbf{Shot Noise (63D)} & Main Net & 1 & 4 & 5 & 1 & Stoch & Zero \\
        & & 2 & 4 & 5 & 1 & Stoch & Zero \\
        & & 3 & 6 & 3 & 1 & Stoch & Zero \\
        & & 4 & 6 & 3 & 1 & Stoch & Zero \\
        & & 5 & 8 & 3 & 1 & Stoch & Zero \\
        \midrule
        
        \multirow{5}{*}{\textbf{LieGAN}} 
        & Network A & 1 & 4 & 5 & 1 & Stoch & Cyclic \\
        & & 2 & 6 & 5 & 1 & Stoch & Cyclic \\
        & & 3 & 8 & 3 & 1 & Stoch & Cyclic \\
        \cmidrule{2-8}
        & Network B & 1 & 6 & 5 & 1 & Stoch & Cyclic \\
        & & 2 & 8 & 3 & 1 & Stoch & Cyclic \\
        \midrule

        \multirow{7}{*}{\textbf{Ising Model}} 
        & Network A & 1 & 4 & 5 & \textbf{2} & \textit{None} & Zero \\
        & & 2 & 6 & 5 & 1 & Stoch & Zero \\
        & & 3 & 6 & 3 & 1 & Stoch & Zero \\
        & & 4 & 8 & 3 & 1 & Stoch & Zero \\
        \cmidrule{2-8}
        & Network B & 1 & 4 & 5 & \textbf{2} & \textit{None} & Zero \\
        & & 2 & 6 & 3 & 1 & Stoch & Zero \\
        & & 3 & 8 & 3 & 1 & Stoch & Zero \\
        \midrule

        \multirow{4}{*}{\textbf{Visual Coding}} 
        & Main Net & 1 & 4 & 7 & 1 & Stoch & Zero \\
        & & 2 & 6 & 5 & 1 & Stoch & Zero \\
        & & 3 & 8 & 3 & 1 & Stoch & Zero \\
        & & 4 & 8 & 3 & 1 & Stoch & Zero \\

        \midrule
        \multicolumn{8}{c}{\textbf{\textit{2D Domain (Kernels are $k \times k$)}}} \\
        \midrule

        \textbf{MNIST 15$\times$15} & Main Net & 1 & 4 & $5\times5$ & 1 & Stoch & Zero \\
        (Permuted) & & 2 & 6 & $3\times3$ & 1 & Stoch & Zero \\
        & & 3 & 8 & $3\times3$ & 1 & Stoch & Zero \\
        \midrule

        \multirow{4}{*}{\shortstack[l]{\textbf{MNIST 15$\times$15} \\ (Bittensor)}} 
        & Main Net & \multicolumn{6}{c}{\textit{Same as Permuted Main Net}} \\
        \cmidrule{2-8}
        & Input Net & 1 & 4 & $3\times3$ & 1 & \textit{None} & Zero \\
        \midrule

        \multirow{5}{*}{\textbf{MNIST 27$\times$27}} 
        & Main Net & 1 & 4 & $7\times7$ & 1 & Stoch & Zero \\
        & & 2 & 6 & $5\times5$ & 1 & Stoch & Zero \\
        & & 3 & 8 & $3\times3$ & 1 & Stoch & Zero \\
        & & 4 & 8 & $3\times3$ & 1 & Stoch & Zero \\
        \cmidrule{2-8}
        & Input Net & 1 & 4 & $3\times3$ & 1 & \textit{None} & Zero \\
        
        \bottomrule
    \end{tabular}
    }
\end{table}

\begin{figure}[ht]
    \centering
    \begin{tikzpicture}[
        node distance=1.2cm,
        >=stealth,
        block/.style={
            rectangle, 
            draw=black!70, 
            thick, 
            fill=gray!5, 
            minimum width=1.5cm, 
            minimum height=2.5em,
            rounded corners=2pt
        },
        arrow/.style={->, thick, black!80},
        label/.style={font=\small\sffamily, text=black!70}
    ]

    \node (input) [font=\bfseries] {Input};
    
    \node (layer1) [block, right=0.8cm of input] {Layer 1};
    \node (layer2) [block, right=0.8cm of layer1] {Layer 2};
    \node (dots)   [right=0.5cm of layer2, font=\Large] {$\dots$};
    \node (layerN) [block, right=0.5cm of dots] {Layer $N$};
    
    \node (output) [font=\bfseries, right=0.8cm of layerN] {Output};

    \draw[arrow] (input) -- (layer1);
    \draw[arrow] (layer1) -- (layer2);
    \draw[arrow] (layer2) -- (dots);
    \draw[arrow] (dots) -- (layerN);
    \draw[arrow] (layerN) -- (output);

    \node (detail) [
        rectangle, 
        draw=blue!50, 
        fill=blue!5, 
        dashed, 
        thick, 
        below=1.5cm of layer2, 
        minimum width=5cm, 
        minimum height=1.5cm,
        align=center
    ] {};
    
    \draw[dashed, blue!40] (layer2.south west) -- (detail.north west);
    \draw[dashed, blue!40] (layer2.south east) -- (detail.north east);

    \node (conv) [block, fill=white, draw=blue!50, scale=0.8, below left=0.15cm and 1.5cm of detail.center] {Conv};
    \node (act)  [block, fill=white, draw=orange!50, scale=0.8, right=0.5cm of conv] {ELU};
    \node (pool) [block, fill=white, draw=green!50, scale=0.8, right=0.5cm of act] {Pool};

    \draw[arrow, blue!50] (conv) -- (act);
    \draw[arrow, blue!50] (act) -- (pool);
    
    \node [above=0.1cm of detail.north, font=\footnotesize\bfseries, text=blue!60] {Generic Block Structure};

    \end{tikzpicture}
    \caption{General architecture of the proposed estimators. Each network consists of a sequence of processing blocks. 
    As illustrated in the breakout view, each block typically comprises a convolution, a non-linear activation (ELU), and a 
    pooling operation (Stochastic/Max). Specific kernel sizes and filter counts for each estimator are detailed in Table~\ref{tab:arch-details}.}
    \label{fig:generic-arch}
\end{figure}

Our networks are standard CNN downstream networks, with the additional position embedding at input, as shown in Figure~\ref{fig:generic-arch}. At each 
layer, we either use strides or pooling layer to downsample the features along spatial axis. 
Additionally, we constrain the spectral norm of each network to $10.$, by applying a layerwise spectral norm 
limitation which translates to $10^{\frac{1}{\#\text{layers}}}$ for each layer.

To reduce the computational costs, we use strides in the first few layer of the each network, since these are the most 
compute intensive parts. In the remaining layers we use stochastic pooling ~\cite{zeiler2013stochastic} 
to battle with the curse of dimensionality. For challenging settings, we use more than one js-divergence estimators, while one of them running over a coarser grid. 

For the specifics of the network architecture, refer to 
Table~\ref{tab:arch-details}.
\section{Training details}\label{app:training-details}
All models are optimized using the Adam algorithm. We employ the default hyperparameters: $\beta_1 = 0.9$, $\beta_2 = 0.999$, and $\epsilon = 10^{-7}$. The learning rate followed an exponential decay schedule, decreasing smoothly from the initial value to the final value listed in Table~\ref{tab:training-configurations}.  

\begin{table}[ht]
    \centering
    \caption{Experiment details.}
    \label{tab:training-configurations}
    \vspace{0.5em}
    \resizebox{\linewidth}{!}{%
    \begin{tabular}{lcccccccccccc}
        \toprule
        \thead{Experiment} & \thead{Dataset\\size} & \thead{Duration} & \thead{Epochs} & \thead{Eigendecomposition\\algorithm} 
        & \thead{Batch size} & \thead{Primary\\initial lr} & \thead{Auxiliary\\initial lr} & \thead{$\frac{\text{Lr}_\text{initial}}{\text{Lr}_\text{final}}$}
        & \thead{Invariance\\weight} & \thead{Resolution\\weight} & \thead{Infomax\\weight} & \thead{$\frac{\text{Noise(std)}}{\text{Signal(std)}}$} \\
        \midrule
        \makecell{GSN\\63 timesteps} & 500k & 11 hrs & 10000 & SVD & 5000 & $10^{-4}$ & $10^{-3}$ & 0.1 & 1.0 & 1.0 & 0.75 & 0.05 \\
        \hline
        \makecell{ISING\\33 timesteps} & 500k & 7.5 hrs & 10000 & SVD & 5000 & $10^{-4}$ & $10^{-3}$ & 0.1 & 1.0 & 1.0 & 0.75 & 0.05 \\
        \hline
        \makecell{Visual Coding \\110 neurons} & k & 0.25 hrs & 250 & SVD & 2500 & $10^{-4}$ & $10^{-3}$ & 0.1 & 2.5 & 0.2 & 0.2 & 0.0 \\
        \hline
        \makecell{GSN\\15 timesteps} & 150k & 1 hrs & 1500 & SVD & 500 & $5\times10^{-4}$ & $5\times10^{-3}$ & 0.1 & 1.5 & 1.0 & 0.75 & 0.05 \\
        \hline
        \makecell{MNIST 15x15\\permuted\\translation} & 500k & 41 hrs & 15000 & SVD & 2500 & $10^{-4}$ & $10^{-3}$ & 0.1 & 1.0 & 0.2 & 0.15 & 0.00 \\
        \hline
       \makecell{MNIST 27x27\\permuted\\translation} & 500k & 132 hrs & 35000 & EIG & 2500 & $10^{-4}$ & $10^{-3}$ & 0.1 & 1.0 & 0.2 & 0.15 & 0.00 \\
       \hline
       \makecell{MNIST 15x15\\bit-scrambled} & 500k & 8 hrs & 2000 (Early stop) & SVD & 2500 & $10^{-4}$ & $10^{-3}$ & 0.1 & 1.0 & 0.2 & 0.15 & 0.00 \\
        \hline
       \makecell{MNIST 27x27\\bit-scrambled} & 2500 & 28 hrs & 7500 (Early stop) & EIG & 2500& $10^{-4}$ & $10^{-3}$ & 0.1 & 1.0 & 0.2 & 0.15 & 0.00 \\
       \bottomrule
   \end{tabular}
   }
\end{table}
\section{Synthetic datasets}\label{appsec:synthetic-dataset}
\subsection{Ising model}\label{appsubsec:ising}
To generate spin signals, first initialize a state vector of discrete variables with values in $\{-1, +1\}$, 
where each component represents a distinct node. We then perform $10$ iterations of stochastic updates based on nearest-neighbor 
interactions where strength is controlled by the inverse temperature $\beta$, 
which we sample uniformly from $\mathcal{U}[1.0, 5.0]$ for each dataset sample separately.

After generating the stochastic samples following this procedure, we obscure the domain by applying a fixed random linear 
transformation with singular values bounded in $[0.5, 2.0]$. This transformation acts as an unknown, imperfect sensor response, 
effectively turning the task into a blind inverse problem. In this case, 
to enable the model to compensate for linear transformations more general than orthogonal operators, 
we use a learnable linear embedding $E: \mathcal{X} \to \mathcal{X}_A$ rather than fixed zero-padding.

\subsection{Generalized Shot Noise process}\label{appsubsec:gsn-process}
The synthetic samples were generated through the following procedure:
\begin{itemize}
    \item For each sample $\mathbf{x}$, first determine the number of constituent signals by drawing 
    $P \sim \mathcal{U}\{0, 1, \dots, P_{\text{max}}\}$ with $P_{\text{max}} = 10$.
    \item For each signal $n = 1$ to $P$, draw shape parameters $\theta_n \sim \mathcal{U}(\Theta)$ and 
    draw a translation $\tau_n \sim \mathcal{U}[-L, L]$.
    \item Construct the discrete-time signal via superposition: $
    \mathbf{x}[i] = \sum_{n=1}^{P} h_{\theta_n}(t_i - \tau_n) \quad \text{for } i = 1, \ldots, 63,
    $
    where $t_i$ defines a discrete time grid restricted between $[-L/2, L/2]$ to prevent boundary effects 
    distorting translation invariance.
    \item Finally, we add Gaussian noise to the sample (with \(\sigma = 0.05\)).
\end{itemize}

\begin{figure}[!h]
    \centering
    \begin{subfigure}[t]{0.475\textwidth}
        \centering
        \includegraphics[width=\linewidth]{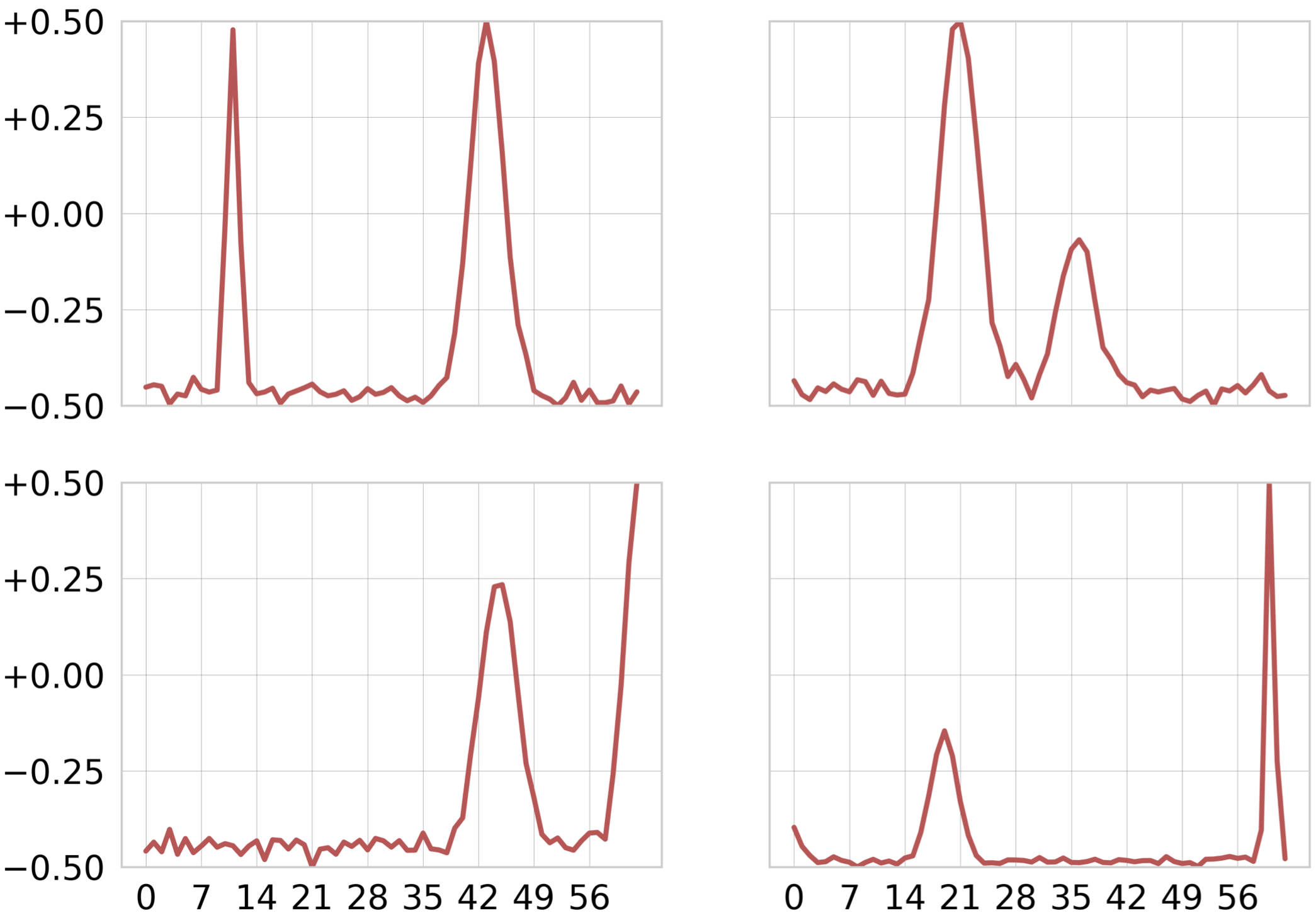}
        \caption{Gaussian Samples of Translation-Invariant Data Distribution}
        \label{fig:samples-1d-translation}
    \end{subfigure}
    \hfill
    \begin{subfigure}[t]{0.475\textwidth}
        \centering
        \includegraphics[width=\linewidth]{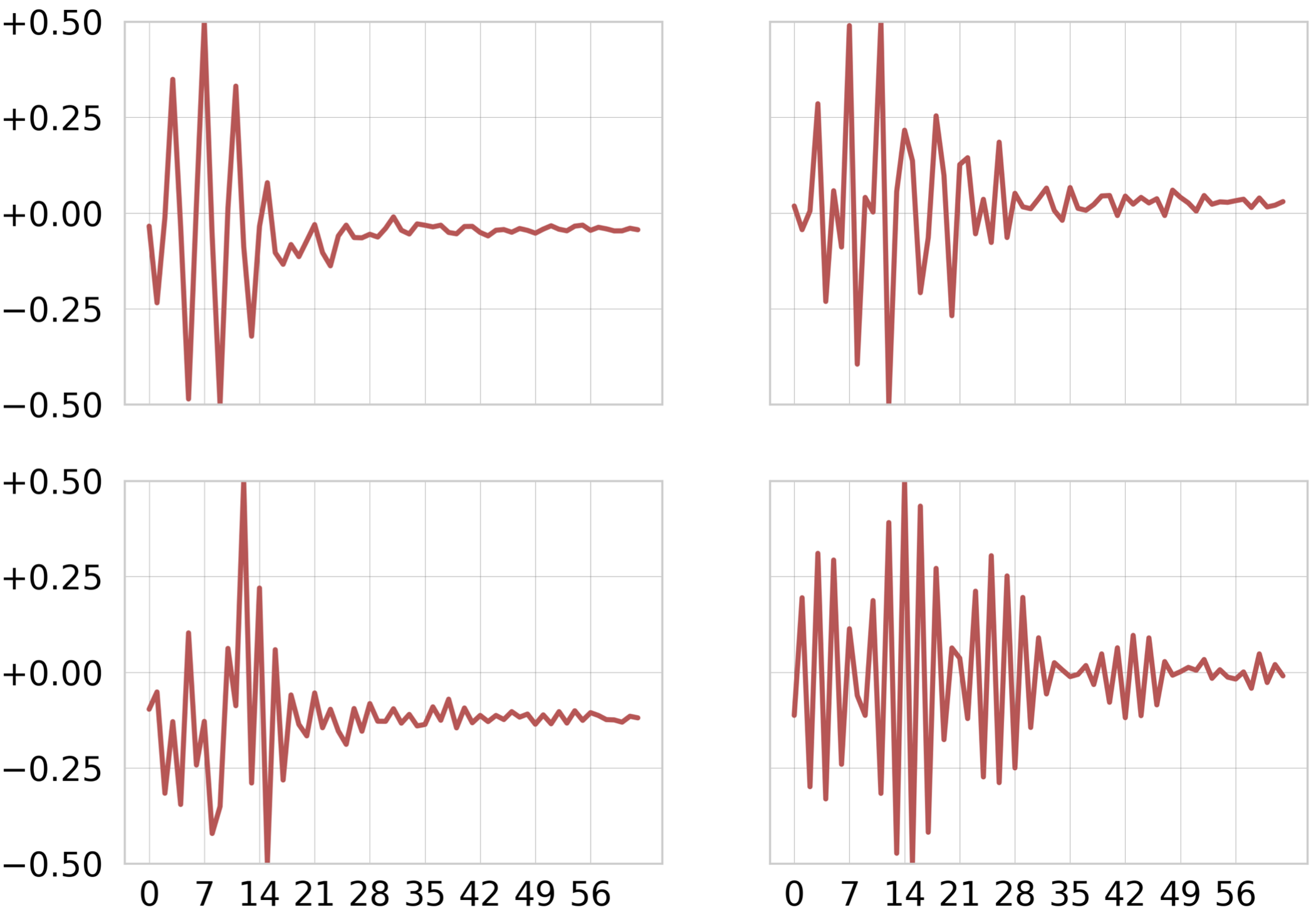} 
        \caption{Legendre Samples of Frequency-Shift Invariant Data Distribution}
        \label{fig:samples-1d-frequency-shift}
    \end{subfigure}
    \caption{Invariant Data Distributions: Gaussian vs. Legendre}
    \label{fig:samples-1d}
\end{figure}

\subsubsection{Data-generation procedure}
\begin{table}
\centering
\scriptsize
\caption{The synthetic datasets prepared for experiments.
The parameters for each basis signal are sampled from a uniform 
distribution with the indicated ranges.}
\label{tab:synthetic-datasets}
\begin{tabular}{lccccc}
\toprule
\makecell{\textbf{Signal} \\ \textbf{Type}} & 
\makecell{\textbf{Input} \\ \textbf{dimensions}} & 
\textbf{Transform} & 
\makecell{\textbf{Amplitude} \\ \textbf{range}} &  
\makecell{\textbf{Scale} \\ \textbf{range}} \\
\midrule
Gaussian & 63 & Identity & [0.5, 1.5) & [0.5, 2.5) \\
Legendre (l=2-3, m=1) & 63 & DST-I & [0.5, 1.5) & [6.0, 15.0) \\
\bottomrule
\end{tabular}
\end{table}

\subsubsection{Gaussian signals} $f_{gaussian}(z;\mathcal{A}, 
\mu, \sigma)$ are parametrized by amplitude \(\mathcal{A}\), center \(\mu\), and width \(\sigma\). 
The input $z$ is an integer ranging from $-32$ to $32$, labeling the components of the raw 
sample vectors. We sample the center $\mu$ uniformly from the extended (tripled) range $-97$ to $97$, 
and then crop the resulting  signals to the $z$ range of $-32$ to $32$ to allow for the possibility of signals that contain 
only a tail of a Gaussian. $A$ and $\sigma$ are sampled from the ranges given in Table 
\ref{tab:synthetic-datasets}.

\subsubsection{Legendre Signals} are given in terms of the associated Legendre polynomials
and give localized waveforms that can change signs. The relevant parameters are center 
\(c\), scale \(s\), amplitude \(\mathcal{A}\), and the orders $l$, $m$:
$f^{(l,m)}_{legendre}(z;\mathcal{A}, c, s) =
\mathcal{A}P_l^m\left(\cos\left(\frac{z-c}{s}\right)\right)$.
We crop these signals to the range 
$|x-c|/s \le \pi$, i.e., set the values outside this
range to zero. 

Once again, $z$ becomes the discrete dimension index, ranging from $-32$ to $32$.
For the $l,m$ parameters, we use $l=2, m=1$ and $l=3, m=1$, with equal probability 
for each sample. We sample the centers as in the Gaussian case, and the sampling of 
the other parameters is described in Table \ref{tab:synthetic-datasets}.

This procedure can give us any symmetry that is related to component translations via a similarity transformation. See Figures 
\ref{fig:samples-1d-translation} and \ref{fig:samples-1d-frequency-shift}, which involve samples from datasets with 
different kinds of symmetries. 
\section{Experiment details and results}\label{appsec:experiment-details-and-result}
\subsection{Neural coding experiment details}~\label{appsubsec:neural-experiment-details}
To isolate orientation geometry from confounding factors such as contrast and speed, we restrict our 
analysis to a fixed spatial frequency ($0.04$ cpd). Raw spike trains are converted to firing-rate estimates 
using a $250\,\mathrm{ms}$ integration window, with an $80\,\mathrm{ms}$ latency correction to account for cortical processing time. 
The selected recording session contains responses from $N=110$ well-isolated neurons, yielding a $110$-dimensional sample space.

The dataset contains approximately $1100$ trials, leading to substantial data sparsity.
To mitigate this limitation, we apply stochastic mixing augmentation~\cite{zhang2017mixup}, 
which involves randomly superposing input samples. Our framework naturally supports this procedure due to its ability to model intransitive datasets, allowing 
mixed samples to serve as valid realizations of the underlying data-generating process. We set the model output dimensionality to $30$ and restricted the output rank to $3$ to prevent overfitting to noise.

\subsection{Bit scrambling experiments}
\begin{figure}[h]
    \centering
    \begin{subfigure}[b]{0.6\textwidth}
        \centering
        \includegraphics[width=\linewidth]{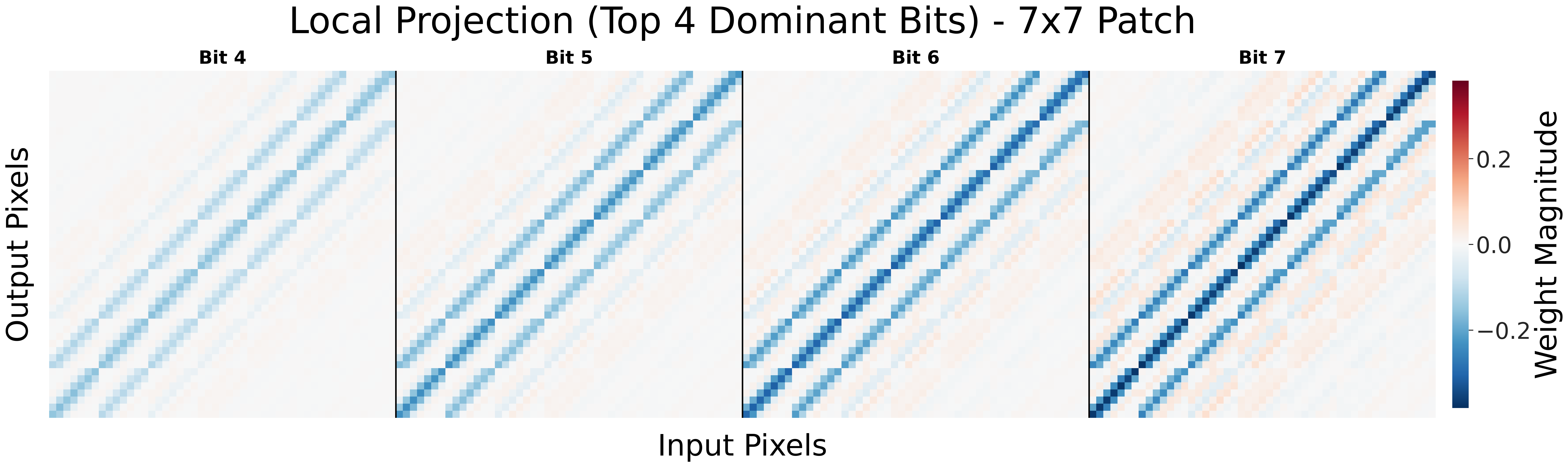} 
        \caption{\textbf{Learned bit-to-byte aggregation.}}
        \label{fig:from_bits_to_images_lifting_matrix_structure}
    \end{subfigure}
    \hfill
    \begin{subfigure}[b]{0.376\textwidth}
        \centering
        \includegraphics[width=\linewidth]{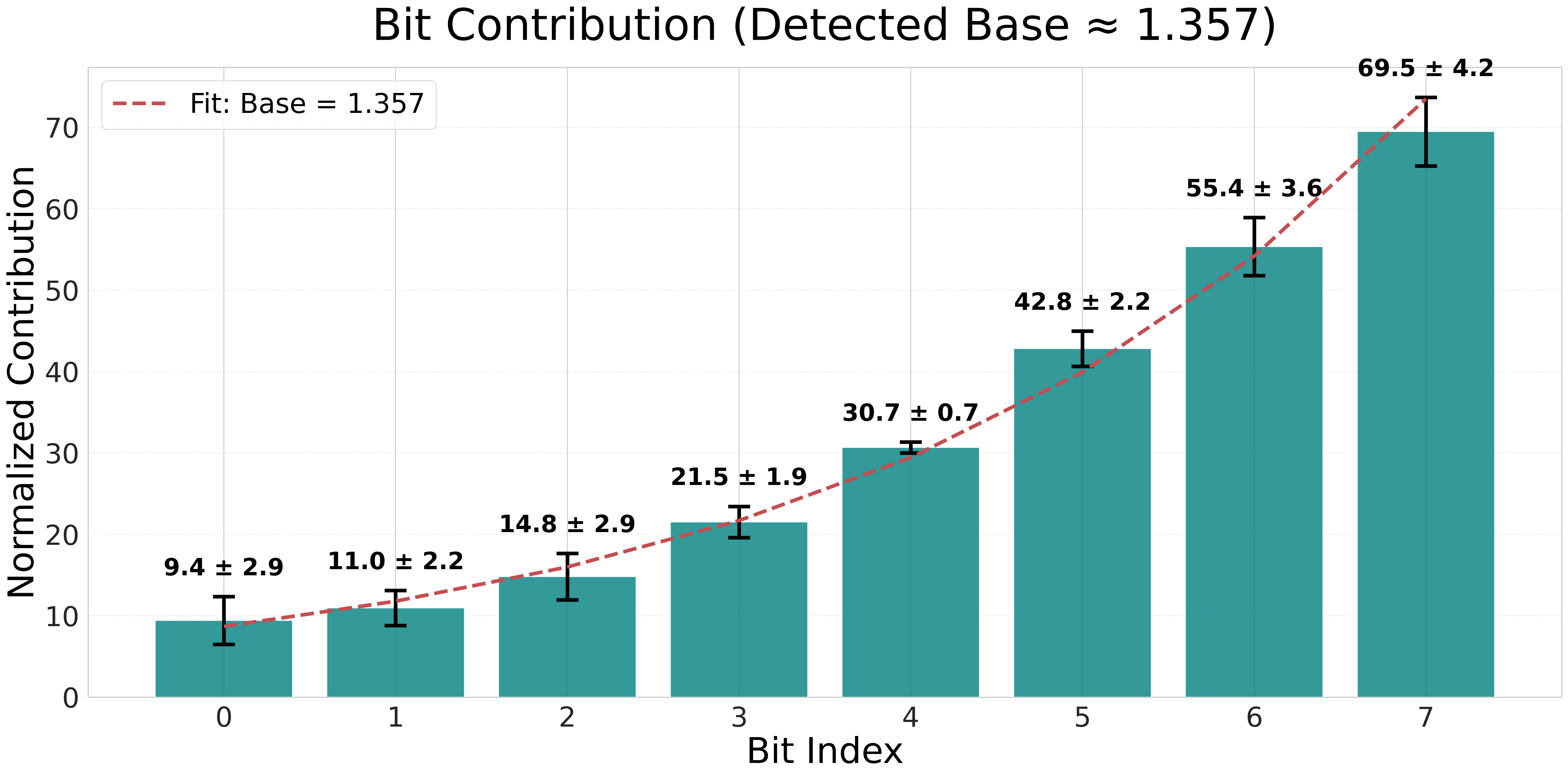}
        \caption{\textbf{Bit weights.}}
        \label{fig:bit_weights}
    \end{subfigure}
    
    \caption{\textbf{Lifting map and bitstream recomposition.} 
    \textbf{(a)} The flattened lifting map displays parallel diagonal lines consistently 
    across each bit channel. Since vertical and horizontal pixel shifts correspond to index increments 
    of 7 and 1 respectively for flattened $7 \times 7$ patch, this pattern forms a banded Toeplitz structure 
    functioning as a local averaging operator with a receptive field of $\approx$ 3 pixels.
    \textbf{(b)} The bit weight distribution shows high stability (low standard deviation) across pixels. 
    The model prioritizes MSBs with a roughly exponential decay, effectively learning an approximate 
    number system with base $1.357$.}
    \label{fig:bits_and_weights_comparison}
\end{figure}

\section{Baselines}\label{baselines}
\subsection{Covariance matching baseline}
\label{baselines-cm}

We include a second-order baseline that attempts to recover the latent domain by matching the covariance spectrum of the observations to the spectrum of a reference graph. This baseline does not estimate a sparse graph or use the hidden signals. Instead, it assumes that the latent signal is smooth on the target domain, so low graph-frequency modes should explain more variance than high graph-frequency modes.

Let \(\mathbf X\in\mathbb R^{n\times d}\) denote the observed samples. We first standardize each observed coordinate,
\[
    \widetilde{\mathbf X}_{ij}
    =
    \frac{\mathbf X_{ij}-\mu_j}{\sigma_j+\epsilon},
\]
and compute the regularized empirical covariance
\[
    \mathbf S
    =
    \operatorname{Cov}(\widetilde{\mathbf X})+\lambda \mathbf I,
\]
where \(\lambda\) is a small diagonal loading term. We then compute the eigendecomposition
\[
    \mathbf S = \mathbf U_{\mathrm{obs}}\boldsymbol\Lambda_{\mathrm{obs}}
    \mathbf U_{\mathrm{obs}}^\top .
\]

For the waveform experiments, the reference domain is a one-dimensional path graph with adjacency matrix \(\mathbf A_{\mathrm{path}}\). For MNIST, the reference domain is the \(H\times W\) four-neighbor image grid with adjacency matrix \(\mathbf A_{\mathrm{grid}}\). In both cases, we write the reference adjacency as
\[
    \mathbf A = \mathbf U_{\mathrm{ref}}\boldsymbol\Lambda_{\mathrm{ref}}
    \mathbf U_{\mathrm{ref}}^\top ,
\]
where \(\mathbf A=\mathbf A_{\mathrm{path}}\) for waveform data and
\(\mathbf A=\mathbf A_{\mathrm{grid}}\) for MNIST.

We consider three variants. The \emph{rotate} variant only aligns the empirical covariance eigenbasis with the reference graph eigenbasis:
\[
    \mathbf W
    =
    \mathbf U_{\mathrm{obs}}\mathbf U_{\mathrm{ref}}^\top .
\]
The \emph{whiten} variant whitens the empirical covariance before applying the reference graph basis:
\[
    \mathbf W
    =
    \mathbf U_{\mathrm{obs}}
    \boldsymbol\Lambda_{\mathrm{obs}}^{-1/2}
    \mathbf U_{\mathrm{ref}}^\top .
\]
Finally, the \emph{recolor} variant whitens the observed covariance and then recolors it using a generic smooth reference spectrum:
\[
    \mathbf W
    =
    \mathbf U_{\mathrm{obs}}
    \boldsymbol\Lambda_{\mathrm{obs}}^{-1/2}
    \boldsymbol\Lambda_{\mathrm{smooth}}^{1/2}
    \mathbf U_{\mathrm{ref}}^\top .
\]
For waveform data, \(\boldsymbol\Lambda_{\mathrm{smooth}}\) is defined using a one-dimensional low-pass spectrum
\[
    \lambda_k^{\mathrm{smooth}}
    =
    \left(1+\gamma \omega_k^2\right)^{-p},
    \qquad
    \omega_k \in [0,1],
\]
normalized to have unit mean. For MNIST, the same construction is applied to graph frequencies derived from the grid adjacency spectrum: high adjacency eigenvalues are treated as low graph frequencies, and low adjacency eigenvalues as high graph frequencies. The parameters \(\gamma\) and \(p\) are selected by grid search.

The recovered samples are obtained by applying the learned linear map to standardized observations,
\[
    \widehat{\mathbf F}
    =
    \widetilde{\mathbf X}\mathbf W .
\]
For waveform data, recovery is evaluated up to the natural one-dimensional ambiguities: translation, reflection, and global sign. For MNIST, recovery is evaluated up to square-grid rotations, reflections, and global sign.

\subsection{Graphical lasso baseline}
\label{baselines-glasso}

We also compare against a graphical-lasso-based graph recovery baseline. This baseline is not used as a direct reconstruction method. Instead, it first estimates sparse conditional dependencies from the observed data and then aligns the resulting graph-like operator with the known reference domain graph.

As in Appendix~\ref{baselines-cm}, we first standardize the observations coordinatewise and compute a diagonally loaded empirical covariance matrix \(\mathbf S\). We then apply graphical lasso to estimate a sparse precision matrix,
\[
    \widehat{\boldsymbol\Theta}
    =
    \arg\max_{\boldsymbol\Theta\succ 0}
    \log\det \boldsymbol\Theta
    -
    \operatorname{tr}(\mathbf S\boldsymbol\Theta)
    -
    \alpha \|\boldsymbol\Theta\|_1 ,
\]
where \(\alpha\) controls the sparsity level. We interpret the absolute off-diagonal precision values as a weighted graph estimate,
\[
    \mathbf A_{\mathrm{glasso}}
    =
    \left|\widehat{\boldsymbol\Theta}\right|,
    \qquad
    \operatorname{diag}(\mathbf A_{\mathrm{glasso}})=0,
\]
and threshold small entries to reduce numerical noise.

For waveform experiments, the target graph is the one-dimensional path adjacency \(\mathbf A_{\mathrm{path}}\). We compute
\[
    \mathbf A_{\mathrm{glasso}}
    =
    \mathbf U_g \boldsymbol\Lambda_g \mathbf U_g^\top,
    \qquad
    \mathbf A_{\mathrm{path}}
    =
    \mathbf U_t \boldsymbol\Lambda_t \mathbf U_t^\top ,
\]
and align the two graph eigenspaces. For orthogonal observation models, we use
\[
    \mathbf W
    =
    \mathbf U_g \mathbf U_t^\top .
\]
For dense linear observation models, we allow a scaled spectral alignment,
\[
    \mathbf W
    =
    \mathbf U_g
    \operatorname{diag}
    \left(
    \sqrt{
    \frac{|\lambda_{t,k}|}{|\lambda_{g,k}|+\epsilon}
    }
    \right)
    \mathbf U_t^\top .
\]
The recovered waveform samples are then given by
\[
    \widehat{\mathbf F}
    =
    \widetilde{\mathbf X}\mathbf W .
\]

For MNIST, we use the same spectral-alignment version of the graphical-lasso baseline, replacing the path graph by the \(H\times W\) four-neighbor grid adjacency \(\mathbf A_{\mathrm{grid}}\). Thus,
\[
    \mathbf A_{\mathrm{grid}}
    =
    \mathbf U_t\boldsymbol\Lambda_t\mathbf U_t^\top,
\]
and the learned recovery map is obtained by aligning the eigenbasis of
\(\mathbf A_{\mathrm{glasso}}\) with the eigenbasis of \(\mathbf A_{\mathrm{grid}}\). Although MNIST observations are generated by pixel permutations, this spectral version is allowed to return a dense orthogonal map rather than a strict permutation. This makes it a more flexible graph-learning baseline and avoids relying on a separate combinatorial graph-matching routine.

For both waveform and MNIST data, we grid-search the graphical lasso regularization parameter \(\alpha\) and report the best-performing configuration under the same recovery metric used for the other baselines.

\subsection{UMAP adapted baseline}
\label{baselines-umap}

We include an adapted UMAP baseline to compare against a standard nonlinear manifold-learning approach. Since UMAP produces low-dimensional coordinates rather than an explicit inverse map or symmetry generator, we adapt it differently depending on the experiment.

\paragraph{MNIST topology recovery.}
For the shuffled MNIST experiments, the goal is to recover the latent two-dimensional image topology from permuted pixel observations. We use UMAP inside the topology-recovery pipeline by applying it to the observed coordinate structure and converting the resulting embedding into interpolation weights on a fixed \(H\times W\) grid. Specifically, for each hyperparameter setting, we instantiate a topology learner with method set to UMAP, grid dimensions \(H,W\), a variance threshold \(\tau\), a neighborhood size \(k\), and a fixed smoothing coefficient. The topology learner returns a set of coordinate embeddings and reconstruction weights, which are then used to reconstruct each permuted image through the image reconstructor:
\[
    \widehat{\mathbf f}_i
    =
    \operatorname{Reconstruct}(\mathbf x_i; \mathbf W_{\mathrm{UMAP}}).
\]
Here, \(\mathbf x_i\) is the shuffled input image and \(\mathbf W_{\mathrm{UMAP}}\) denotes the reconstruction weights induced by the learned UMAP topology.

We perform a grid search over the variance threshold and the number of neighbors. In the implementation used for the MNIST baseline, the fixed smoothing coefficient is set to \(3.0\), the variance thresholds are
\[
    \tau \in \{0.05,0.10,0.15\},
\]
and the neighborhood sizes are
\[
    k \in \{3,5,7\}.
\]
For each configuration, we reconstruct a held-out set of images and evaluate recovery using the same alignment-aware image metric used for the other MNIST baselines. Namely, for each reconstructed image, we compute the maximum absolute Pearson correlation with the original image over the eight dihedral symmetries of the square grid:
\[
    r_i
    =
    \max_{g\in D_4}
    \left|
    \operatorname{corr}
    \left(
    \mathbf f_i,\,
    g\cdot \widehat{\mathbf f}_i
    \right)
    \right|.
\]
The reported score is the average of \(r_i\) over the validation images.

\paragraph{Neural manifold alignment.}
For the neural visual-coding experiment, UMAP is used as a direct two-dimensional manifold embedding baseline. Given neural response vectors
\[
    \mathbf x_i \in \mathbb R^{110},
\]
we fit UMAP to obtain two-dimensional points
\[
    \mathbf z_i = \operatorname{UMAP}(\mathbf x_i) \in \mathbb R^2.
\]
Since the latent stimulus variable is the grating orientation, we evaluate whether the learned two-dimensional manifold recovers the circular organization of the stimulus. We first center the embedded points by their empirical mean and extract a polar phase
\[
    \widehat{\theta}_i
    =
    \operatorname{atan2}
    \left(
    z_{i,2}-\bar z_2,\,
    z_{i,1}-\bar z_1
    \right)
    \quad \operatorname{mod} 2\pi .
\]
The true grating orientations are originally defined modulo \(180^\circ\), so they are mapped to the full circle by
\[
    \theta_i
    =
    \frac{2\pi}{180^\circ}
    \theta_i^{\mathrm{deg}} .
\]
We then compute the circular correlation between the recovered phases and the true stimulus phases. To account for the arbitrary orientation of the learned embedding, we evaluate both the original and mirrored phase directions and keep the better score:
\[
    \max
    \left\{
    \rho_{\mathrm{circ}}(\widehat{\theta},\theta),
    \rho_{\mathrm{circ}}(2\pi-\widehat{\theta},\theta)
    \right\}.
\]
For UMAP, we grid-search the neighborhood size
\[
    k\in\{5,15,30,50\}
\]
and the minimum-distance parameter
\[
    \mathrm{min\_dist}\in\{0.01,0.1,0.5\}.
\]
The configuration with the highest absolute circular correlation is reported.

\subsection{Isomap adapted baseline}
\label{baselines-isomap}

We also include an adapted Isomap baseline. Isomap is a nonlinear manifold-learning method based on neighborhood graphs and approximate geodesic distances. As with UMAP, it does not directly provide a symmetry representation or a reconstruction operator. Therefore, we adapt it to the MNIST and neural settings using the same task-specific evaluation pipelines.

\paragraph{MNIST topology recovery.}
For shuffled MNIST, Isomap is used inside the same topology-recovery framework as UMAP. Given permuted image samples, the topology learner is configured with method set to Isomap, grid dimensions \(H,W\), a variance threshold \(\tau\), a neighborhood size \(k\), and a smoothing coefficient. The learned Isomap coordinates are then converted into reconstruction weights on the target image grid. Each shuffled image is reconstructed as
\[
    \widehat{\mathbf f}_i
    =
    \operatorname{Reconstruct}(\mathbf x_i; \mathbf W_{\mathrm{Iso}}),
\]
where \(\mathbf W_{\mathrm{Iso}}\) denotes the reconstruction weights induced by the Isomap-based topology estimate.

We use the same hyperparameter grid as in the UMAP topology baseline:
\[
    \tau \in \{0.05,0.10,0.15\},
    \qquad
    k \in \{3,5,7\},
\]
with smoothing coefficient fixed to \(3.0\). For each configuration, we reconstruct validation images and compute the maximum absolute Pearson correlation over the eight square-grid ambiguities:
\[
    r_i
    =
    \max_{g\in D_4}
    \left|
    \operatorname{corr}
    \left(
    \mathbf f_i,\,
    g\cdot \widehat{\mathbf f}_i
    \right)
    \right|.
\]
The final Isomap MNIST score is the mean of this quantity across validation samples.

\paragraph{Neural manifold alignment.}
For the neural visual-coding experiment, Isomap is used as a two-dimensional manifold embedding baseline. We fit Isomap to the neural response vectors and obtain
\[
    \mathbf z_i
    =
    \operatorname{Isomap}(\mathbf x_i)
    \in \mathbb R^2 .
\]
As in the UMAP baseline, we evaluate whether the resulting embedding organizes neural responses according to the latent grating orientation. We center the two-dimensional embedding, extract a polar phase from each embedded point,
\[
    \widehat{\theta}_i
    =
    \operatorname{atan2}
    \left(
    z_{i,2}-\bar z_2,\,
    z_{i,1}-\bar z_1
    \right)
    \quad \operatorname{mod} 2\pi ,
\]
and compare these phases to the true doubled grating angles using circular correlation. Since the orientation and handedness of the learned manifold are arbitrary, we report the better of the original and mirrored circular correlations:
\[
    \max
    \left\{
    \rho_{\mathrm{circ}}(\widehat{\theta},\theta),
    \rho_{\mathrm{circ}}(2\pi-\widehat{\theta},\theta)
    \right\}.
\]
For Isomap, the only swept hyperparameter in the neural experiment is the neighborhood size,
\[
    k\in\{5,15,30,50\}.
\]
Unlike UMAP, Isomap does not use a minimum-distance parameter. The best configuration is selected by the highest absolute circular correlation.

\subsection{Topographic Independent Component Analysis (TICA) baseline}\label{baselines-tica}

To contextualize the performance of our proposed method, we evaluated 1D Topographic Independent Component Analysis (TICA) 
as a classical representation learning baseline. TICA extends standard ICA by modeling the dependencies between nearby components, 
theoretically making it a strong candidate for discovering locally correlated latent structures like those in the 
Generalized Shot Noise (GSN) dataset. 

\subsubsection*{Preprocessing and ZCA Whitening}
Standard ICA and TICA models assume that the input data is centered and spatially decorrelated. 
To provide the baseline with the optimal mathematical conditions for convergence, we strictly adhered to this prerequisite. 
Before training, we pre-generated the entire dataset into memory and computed the exact Zero-Phase Component Analysis (ZCA) 
whitening matrix. 

The dataset was first mean-centered: $X_c = X - \mu$. We then computed the covariance matrix and its eigendecomposition 
to formulate the whitening matrix:
$$W_{ZCA} = V (S + \epsilon I)^{-0.5} V^T$$
where $V$ and $S$ are the eigenvectors and eigenvalues of the covariance matrix, and $\epsilon = 10^{-5}$ is a small 
regularization constant. The entire dataset was projected through $W_{ZCA}$ prior to being fed into the TICA model, 
ensuring the features were fully decorrelated and unit-variance.

\subsubsection*{Experimental setup and hyperparameter grid}
Unlike our periodic graph-matching baselines, the TICA model and its corresponding dataset were strictly configured in 
the \textbf{aperiodic} regime, modeling standard bounded 1D sequences. Furthermore, the 
unmixing matrix was strictly constrained to be orthogonal (\texttt{use\_orthogonal = True}). 

We conducted a comprehensive grid search across 24 distinct hyperparameter configurations to ensure the model's performance 
was not an artifact of poor tuning. The 1D TICA model was instantiated with $d=63$ components. We optimized the network 
using Adam with global gradient clipping set to 1.0. All experiments were trained extensively for 1,500 epochs, 
with 1,000 steps per epoch and a batch size of 250.

The search space explored the following varying parameters:
\begin{itemize}
    \item \textbf{Learning Rate (LR):} $\{10^{-2}, 10^{-3}, 10^{-4}\}$
    \item \textbf{Topographic Pool Size:} $\{3, 5\}$
    \item \textbf{Feature Type:} Gaussian vs.\ Legendre polynomials
    \item \textbf{Output Representation:} Identity (Natural) vs.\ Discrete Sine Transform (DST-I)
\end{itemize}

\subsubsection*{Evaluation protocol and results}
To evaluate the quality of the learned representations, we measured the mean Pearson correlation coefficient 
between the ground truth hidden signals and the TICA model outputs. Because unsupervised models arbitrarily 
permute and scale latents, we utilized a highly generous evaluation protocol. For every batch, 
the evaluation script exhaustively searched through the entire symmetry group—testing all possible spatial shifts 
($d=63$), reflections (left/right flips), and sign inversions to find the absolute best possible 
alignment between the predictions and the ground truth.

Despite optimally whitening the inputs, restricting the model to orthogonal transforms, 
extensively sweeping the hyperparameters, and actively aligning the outputs during evaluation, 
TICA failed to recover the hidden signals. As shown in Table~\ref{tab:correlation_summary}, 
the absolute best-performing configuration yielded a maximum correlation of just $0.0725$. Scores of 
this magnitude indicate that the baseline essentially output random noise relative to the true targets, 
wholly failing to disentangle the overlapping, continuous topographic structure of the data.

\begin{table}[htbp]
    \centering
    \caption{Evaluation Summary of 1D TICA Representations}
    \label{tab:correlation_summary}
    \begin{tabular}{l c c l l c}
        \toprule
        \textbf{Experiment No} & \textbf{LR} & \textbf{Pool Size} & \textbf{Feature Type} & \textbf{Output Rep} & \textbf{Correlation} \\
        \midrule
        1       & 0.01   & 5 & Gaussian & DST-I     & 0.0725 \\
        2       & 0.01   & 3 & Gaussian & DST-I     & 0.0672 \\
        3       & 0.01   & 3 & Gaussian & Identity  & 0.0662 \\
        4       & 0.01   & 5 & Gaussian & Identity  & 0.0629 \\
        5       & 0.0001 & 3 & Gaussian & DST-I     & 0.0598 \\
        6       & 0.001  & 3 & Gaussian & DST-I     & 0.0563 \\
        7       & 0.001  & 5 & Gaussian & DST-I     & 0.0521 \\
        8       & 0.001  & 5 & Gaussian & Identity  & 0.0520 \\
        9       & 0.0001 & 5 & Gaussian & DST-I     & 0.0517 \\
        10      & 0.0001 & 3 & Gaussian & Identity  & 0.0493 \\
        11      & 0.001  & 3 & Gaussian & Identity  & 0.0484 \\
        12      & 0.0001 & 5 & Gaussian & Identity  & 0.0450 \\
        13      & 0.0001 & 3 & Legendre & Identity  & 0.0446 \\
        14      & 0.001  & 3 & Legendre & Identity  & 0.0412 \\
        15      & 0.001  & 3 & Legendre & DST-I     & 0.0358 \\
        16      & 0.001  & 5 & Legendre & DST-I     & 0.0357 \\
        17      & 0.0001 & 5 & Legendre & Identity  & 0.0352 \\
        18      & 0.0001 & 5 & Legendre & DST-I     & 0.0336 \\
        19      & 0.01   & 5 & Legendre & DST-I     & 0.0316 \\
        20      & 0.01   & 5 & Legendre & Identity  & 0.0303 \\
        21      & 0.001  & 5 & Legendre & Identity  & 0.0288 \\
        22      & 0.01   & 3 & Legendre & Identity  & 0.0282 \\
        23      & 0.01   & 3 & Legendre & DST-I     & 0.0275 \\
        24      & 0.0001 & 3 & Legendre & DST-I     & 0.0256 \\
        \bottomrule
    \end{tabular}
\end{table}

\subsection{LieGAN}\label{baselines-liegan}
To ensure a rigorous comparison, we adhered to the optimal operating configurations for both our method 
and the baseline (LieGAN~\cite{yang2023latent}).

\paragraph{Optimization budget and batch size.} 
Both frameworks were trained for a fixed optimization budget of approximately 150,000 iterations (gradient updates), 
and with the same dataset size of $50k$ samples. We utilized a batch size of $B=64$ for LieGAN, which is the default 
value while our method employed a batch size of $B=500$ to ensure low-variance estimation information theoretic quantities.

We prioritize number of \textit{gradient updates} over \textit{epochs}, as GAN convergence is 
primarily dictated by the number of generator-discriminator interaction steps. The 150k step budget significantly exceeds 
the typical convergence requirements for LieGAN on low-dimensional data ($d \le 15$), ensuring that under-training is 
not a factor in the reported results.

\paragraph{Hyperparameter search.}
We conducted a logarithmic grid search to tune the baseline for the specific datasets. The search space included:
\begin{itemize}
    \item Generator Learning Rate: $\{10^{-4}, 10^{-3}, 10^{-2}\}$
    \item Discriminator Learning Rate: $\{2\cdot 10^{-5}, 2\cdot 10^{-4}, 2\cdot 10^{-3}\}$
    \item Penalty Weight ($\lambda$): $\{0.1, 1.0, 10.0\}$
\end{itemize}
All baseline models were optimized using Adam with $\beta_1=0.5, \beta_2=0.999$.

\paragraph{Model selection.}
Given the unsupervised and unstable nature of adversarial training, we employed a rigorous model selection protocol 
that does not rely on ground-truth access. We saved checkpoints every 10,000 iterations and retrospectively selected the 
checkpoint that achieved the lowest internal Generator Loss (Adversarial + Penalty). The metrics reported in 
Table~\ref{tab:waveform_mnist_benchmarks} correspond to this best-performing checkpoint.

\paragraph{Data generation.}
To compare the symmetry discovery performance of our method against the well-established baseline LieGAN~\cite{yang2023latent}, 
we simulated the Generalized Shot Noise process on a grid of size $15$ across three distinct settings:

\begin{itemize}[leftmargin=*, noitemsep, topsep=2pt]
    \item \textbf{1. Data generated via $15$ translations representations:} We uniformly 
    sampled the width $\sigma$ of Gaussian pulses between $[0.5, 2.5]$, fixing their centers to the origin. 
    Then we discretized the pulses and introduced distribution symmetry by applying random cyclic shifts 
    using an FFT-based translation operator. Finally, we superposed the resultant vectors. 
    \item \textbf{2. Infinite-Dimensional Continuous Translations:} We sampled widths $\sigma$ and centers $\mu$ uniformly 
    among continuous values while ensuring periodic boundary conditions. Then we discretized the pulses and superposed the 
    resulting vectors.
    \item \textbf{3. Finite-Dimensional Discrete Translations:} We followed the procedure in case (1), 
    however, we sampled random shifts from set of integers instead of continuous values. 
\end{itemize}
\section{Morris sensitivity analysis}
\label{sec:morris_analysis}

To rigorously evaluate the robustness of the proposed framework 
against hyperparameter variations, we conducted a Morris elementary effects sensitivity analysis. 
The Morris method provides a computationally efficient way to screen for the most influential 
parameters and identify non-linear interactions across the hyperparameter space.

\subsection{Experimental setup}

We analyzed the five most critical hyperparameters governing our optimization dynamics 
and objective function trade-offs:
\begin{enumerate}
    \item \textbf{Estimator LR:} The learning rate of the stationarity estimator network. Bounds: $[0.0025, 0.0075]$.
    \item \textbf{Model LR:} The learning rate of the main model. Bounds: $[0.00025, 0.00075]$.
    \item \textbf{Stationarity:} The regularization coefficient for the stationarity maximization objective. Bounds: $[1.0, 1.66]$.
    \item \textbf{Resolution:} The regularization coefficient for the total correlation minimization objective. Bounds: $[0.75, 1.25]$.
    \item \textbf{InfoMax:} The regularization coefficient for the joint entropy maximization objective. Bounds: $[0.5625, 0.9375]$.
\end{enumerate}

Using 6 trajectories and an 8-level grid structure, we generated 36 distinct experimental configurations. 
Each experiment was trained for 1500 epochs with a batch size of 500.

We evaluate the framework's sensitivity using two primary metrics:
\begin{itemize}
    \item \textbf{Input-Output Correlation:} Measures the maximal correlation between 
    the ground-truth hidden signal and the recovered representation.
    \item \textbf{Band-Limited Generator Similarity:} Computes the cosine similarity between 
    the learned shift operator and the ideal translation generator after applying a band-limiting 
    DFT projector (norm = $0.75$).
\end{itemize}

\subsection{Global robustness}

As demonstrated in Table~\ref{tab:morris_raw_metrics}, our method exhibits 
remarkable stability across the diverse configurations sampled by the Morris trajectories. 
Across all 36 runs, the Input-Output Correlation ranges narrowly between $0.9682$ and $0.9883$, 
while the Band-Limited Generator Similarity consistently remains above $0.9897$ 
(with the vast majority exceeding $0.995$). This confirms that the proposed approach does not rely on a brittle, 
heavily tuned hyperparameter configuration to achieve successful latent domain recovery. 

\begin{table*}[htbp]
\centering
\caption{Raw Metrics for all 36 Morris Sensitivity Experiments.}
\label{tab:morris_raw_metrics}
\resizebox{\textwidth}{!}{%
\begin{tabular}{cccccccc}
\toprule
\textbf{Exp ID} & \textbf{Estimator LR} & \textbf{Model LR} & \textbf{Stationarity} & \textbf{Resolution} & \textbf{InfoMax} & \textbf{Correlation} & \textbf{Gen Similarity} \\
\midrule
1 & 0.0039 & 0.0007 & 1.5657 & 1.1786 & 0.7232 & 0.9851 & 0.9975 \\
2 & 0.0039 & 0.0007 & 1.5657 & 0.8929 & 0.7232 & 0.9869 & 0.9990 \\
3 & 0.0068 & 0.0007 & 1.5657 & 0.8929 & 0.7232 & 0.9872 & 0.9951 \\
4 & 0.0068 & 0.0007 & 1.1886 & 0.8929 & 0.7232 & 0.9869 & 0.9986 \\
5 & 0.0068 & 0.0007 & 1.1886 & 0.8929 & 0.9375 & 0.9856 & 0.9970 \\
6 & 0.0068 & 0.0004 & 1.1886 & 0.8929 & 0.9375 & 0.9857 & 0.9985 \\
7 & 0.0061 & 0.0003 & 1.2829 & 0.9643 & 0.5625 & 0.9845 & 0.9961 \\
8 & 0.0032 & 0.0003 & 1.2829 & 0.9643 & 0.5625 & 0.9849 & 0.9985 \\
9 & 0.0032 & 0.0006 & 1.2829 & 0.9643 & 0.5625 & 0.9849 & 0.9992 \\
10 & 0.0032 & 0.0006 & 1.6600 & 0.9643 & 0.5625 & 0.9845 & 0.9992 \\
11 & 0.0032 & 0.0006 & 1.6600 & 1.2500 & 0.5625 & 0.9836 & 0.9991 \\
12 & 0.0032 & 0.0006 & 1.6600 & 1.2500 & 0.7768 & 0.9846 & 0.9967 \\
13 & 0.0068 & 0.0004 & 1.4714 & 1.2500 & 0.9375 & 0.9849 & 0.9977 \\
14 & 0.0068 & 0.0004 & 1.4714 & 1.2500 & 0.7232 & 0.9848 & 0.9897 \\
15 & 0.0068 & 0.0004 & 1.0943 & 1.2500 & 0.7232 & 0.9848 & 0.9991 \\
16 & 0.0068 & 0.0007 & 1.0943 & 1.2500 & 0.7232 & 0.9852 & 0.9973 \\
17 & 0.0068 & 0.0007 & 1.0943 & 0.9643 & 0.7232 & 0.9868 & 0.9953 \\
18 & 0.0039 & 0.0007 & 1.0943 & 0.9643 & 0.7232 & 0.9862 & 0.9991 \\
19 & 0.0039 & 0.0008 & 1.2829 & 1.0357 & 0.6696 & 0.9851 & 0.9990 \\
20 & 0.0068 & 0.0008 & 1.2829 & 1.0357 & 0.6696 & 0.9853 & 0.9949 \\
21 & 0.0068 & 0.0005 & 1.2829 & 1.0357 & 0.6696 & 0.9862 & 0.9977 \\
22 & 0.0068 & 0.0005 & 1.6600 & 1.0357 & 0.6696 & 0.9854 & 0.9987 \\
23 & 0.0068 & 0.0005 & 1.6600 & 1.0357 & 0.8839 & 0.9831 & 0.9987 \\
24 & 0.0068 & 0.0005 & 1.6600 & 0.7500 & 0.8839 & 0.9682 & 0.9973 \\
25 & 0.0075 & 0.0006 & 1.0943 & 0.8214 & 0.7768 & 0.9883 & 0.9988 \\
26 & 0.0075 & 0.0003 & 1.0943 & 0.8214 & 0.7768 & 0.9839 & 0.9969 \\
27 & 0.0046 & 0.0003 & 1.0943 & 0.8214 & 0.7768 & 0.9881 & 0.9993 \\
28 & 0.0046 & 0.0003 & 1.0943 & 1.1071 & 0.7768 & 0.9851 & 0.9967 \\
29 & 0.0046 & 0.0003 & 1.4714 & 1.1071 & 0.7768 & 0.9852 & 0.9979 \\
30 & 0.0046 & 0.0003 & 1.4714 & 1.1071 & 0.5625 & 0.9825 & 0.9968 \\
31 & 0.0025 & 0.0008 & 1.1886 & 1.1786 & 0.6161 & 0.9842 & 0.9990 \\
32 & 0.0025 & 0.0008 & 1.1886 & 0.8929 & 0.6161 & 0.9854 & 0.9968 \\
33 & 0.0025 & 0.0008 & 1.1886 & 0.8929 & 0.8304 & 0.9873 & 0.9987 \\
34 & 0.0025 & 0.0005 & 1.1886 & 0.8929 & 0.8304 & 0.9843 & 0.9974 \\
35 & 0.0054 & 0.0005 & 1.1886 & 0.8929 & 0.8304 & 0.9869 & 0.9935 \\
36 & 0.0054 & 0.0005 & 1.5657 & 0.8929 & 0.8304 & 0.9847 & 0.9991 \\
\bottomrule
\end{tabular}%
}
\end{table*}

\subsection{Elementary effects results}

To quantify the specific influence of each parameter, we extract the Morris elementary effects: 
the mean effect ($\mu$), the absolute mean effect ($\mu^*$), and the standard deviation of the effect 
($\sigma$). The $\mu^*$ value indicates the overall influence of the parameter on the final metric, 
while $\sigma$ captures non-linear effects and interactions with other hyperparameters.

As shown in Table~\ref{tab:morris_analysis}, for Input-Output Correlation, 
the Total Correlation coefficient (\textit{Resolution}) proves to be the most sensitive parameter 
($\mu^* = 0.0059$, $\sigma = 0.0102$). For Band-Limited Generator Similarity, both the 
\textit{Stationarity} coefficient ($\mu^* = 0.0052$) and the \textit{Estimator LR} ($\mu^* = 0.0051$) 
exert the most influence. However, it is vital to contextualize these findings with the absolute bounds observed in 
Table~\ref{tab:morris_raw_metrics}: because the global variance of the metrics is incredibly tight, 
even the parameters marked as ``most influential'' by the Morris analysis cause negligible 
absolute performance degradation.

\begin{table*}[htbp]
\centering
\caption{Morris sensitivity analysis metrics for correlation and generator similarity.}
\label{tab:morris_analysis}
\begin{tabular}{lcccccc}
\toprule
& \multicolumn{3}{c}{\textbf{Input-Output Correlation}} & \multicolumn{3}{c}{\textbf{Generator Similarity}} \\
\cmidrule(lr){2-4} \cmidrule(lr){5-7}
\textbf{Parameter} & $\mu$ & $\mu^*$ & $\sigma$ & $\mu$ & $\mu^*$ & $\sigma$ \\
\midrule
Estimator LR & -0.0003 & 0.0021 & 0.0034 & -0.0051 & 0.0051 & 0.0012 \\
Model LR & 0.0017 & 0.0022 & 0.0031 & -0.0005 & 0.0025 & 0.0029 \\
Stationarity & -0.0008 & 0.0009 & 0.0014 & -0.0012 & 0.0052 & 0.0077 \\
Resolution & 0.0015 & 0.0059 & 0.0102 & 0.0003 & 0.0024 & 0.0029 \\
InfoMax & 0.0005 & 0.0023 & 0.0028 & 0.0018 & 0.0038 & 0.0056 \\
\bottomrule
\end{tabular}
\end{table*}
\section{Caveats}
\paragraph{Initialization of eigenvalue phases.}
We initialize the norm of Lie basis eigenvalues using a normal distribution with
\( \sigma = 10^{-3} \) and \(\mu = 0\). Using smaller standard deviations did not affect performance, 
however, significantly larger $\sigma$ values may lead to corrupt the optimization 
trajectory.

\paragraph{Noise injection to the resolution filter.}
We initialize the resolving filter with zeros and add Gaussian noise during the early stages of training before 
computing the loss for each batch.
\begin{equation}
    \mathbf{\psi} \leftarrow \mathbf{\psi} + \mathcal{N}(\mu=0, \sigma = 0.1)  \exp(-(t/\tau)) 
\end{equation}
The amplitude of the noise is set to decay exponentially 
with a short time constant (of $\tau = 10$ epochs). As mentioned previously, 
to compute the transformed data $\mathbf{y}$, we use a normalized version of $\filter$ at each step:
$\mathbf{\hat{\psi}} = {\mathbf{\psi}}/{||\mathbf{\psi}||_2}$.

\end{document}